\pdfoutput=1
\documentclass[a4paper,11pt]{article}
\usepackage{authblk}

\usepackage{hyperref}
\hypersetup{
  colorlinks=true,
  linkcolor=green!30!black,
  linktoc=all,
  citecolor=blue,
  bookmarksnumbered=true,
}

\RequirePackage{etoolbox}
\RequirePackage{amsmath, amsthm}
\RequirePackage{amsfonts}

\RequirePackage{mathtools}

\newcommand{\st}[0]{\ensuremath{\,\mathbf{:}\,}}

\DeclarePairedDelimiter\abs{|}{|}
\newcommand{\norm}[2][]{\ensuremath{\Vert #2 \Vert_{#1}}}

\newcommand{\inb}[1]{\left\{#1\right\}}
\newcommand{\inp}[1]{\left(#1\right)}
\newcommand{\insq}[1]{\left[#1\right]}

\makeatletter
\newcommand*{\defeq}{\mathrel{\rlap{\raisebox{0.3ex}{$\m@th\cdot$}}\raisebox{-0.3ex}{$\m@th\cdot$}}=}
\newcommand*{\eqdef}{=
  \mathrel{\rlap{\raisebox{0.3ex}{$\m@th\cdot$}}\raisebox{-0.3ex}{$\m@th\cdot$}}}
\makeatother

\newcommand{\R}[0]{\ensuremath{\mathbb{R}}}

\DeclarePairedDelimiterX\braket[2]{\langle}{\rangle}{#1\,\delimsize\vert\,\mathopen{}#2}

\renewcommand{\vec}[1]{\mathbf{#1}}

\RequirePackage{cleveref}

\let\oldnewtheorem\newtheorem
\RenewDocumentCommand{\newtheorem}{m o m
  o r<>}{\IfValueTF{#2}{\oldnewtheorem{#1}[#2]{#3} \AddToHook{env/#1/begin}{\crefalias{#2}{#1}} }{\IfValueTF{#4}{\oldnewtheorem{#1}{#3}[#4] }{\oldnewtheorem{#1}{#3}}} \crefname{#1}{#3}{#5} }

\newtheorem{theorem}{Theorem}<Theorems>
<Lemmas>
\newtheorem{observation}[theorem]{Observation}<Observations>
<Claims>
<Conditions>
\newtheorem{example}[theorem]{Example}<Examples>
<Facts>
\newtheorem{corollary}[theorem]{Corollary}<Corollaries>
\newtheorem{proposition}[theorem]{Proposition}<Propositions>
\theoremstyle{definition}
<Definitions>
\newtheorem{remark}[theorem]{Remark}<Remarks>
\newtheorem{problem}[theorem]{Problem}<Problems>
<Exercises>

\crefname{section}{Section}{Sections}
\crefname{appendix}{Appendix}{Appendices}
\crefname{table}{Table}{Tables}

\newtoggle{draft}
\toggletrue{draft}
\iftoggle{draft}{\RequirePackage{xcolor} \RequirePackage{todonotes} 
   \RequirePackage[draft]{fixme} }{ }

\RequirePackage{tikz}
\usetikzlibrary{arrows,automata}

\newcommand{\act}{\textit{Act}}

\newcommand{\initunfair}{\textsc{Low-Entropy}}
\newcommand{\initunfairDec}{\textsc{Low-Entropy-Decision}}
\newcommand{\apxinitunfairDec}[1]{\textsc{Apx-Low-Entropy-Decision}}
\newcommand{\apxinitunfairSynt}[1]{\textsc{Apx-Low-Entropy-Synt}}
\newcommand{\yes}{\emph{YES}} \newcommand{\no}{\emph{NO}}

\newcommand{\memoryless}{\ensuremath{\Pi_1}} 

\newcommand{\dualc}{\ensuremath{\Psi}}

\newcommand{\conc}{\ensuremath{\Gamma}}

\newcommand{\D}{\ensuremath{\mathcal{D}}
}

\newcommand{\predicate}{\ensuremath{\mathcal{C}}}

\newcommand{\diag}{\ensuremath{\mathrm{diag}}}
 
\usepackage{fullpage}

\NewDocumentCommand\citet{o m}{\IfValueTF{#1}{\cite[#1]{#2}}{\cite{#2}}}

\usepackage[T1]{fontenc}
\RequirePackage[ttscale=0.81]{libertine}
\RequirePackage[libertine]{newtxmath}

\newcommand{\arxiv}[1]{\href{https://arxiv.org/abs/#1}{arXiv:#1}}

\begin{document}

\title{Entropy Objectives in Markov Decision Processes}

\author[1]{S. Akshay}
\author[1]{Raghav Goyal}
\author[1]{Aditya Neeraje}
\author[2]{Piyush Srivastava}
\affil[1]{Indian Institute of Technology Bombay, Mumbai, India.
}
\affil[2]{Tata Institute of Fundamental Research, Mumbai, India.}
\date{}
\maketitle

\newcommand{\runningexamplefig}{
\begin{tikzpicture}[
    >=stealth,
    node distance=1.5cm and 1.5cm,
]
\node[state] (A) {$A$};

\node[state] (B) [right=of A] {\(B\)};
\node[state] (C) [right=of B] {\(C\)};

\path[->] (A) edge  node[above] {$a_2,1$} (B) edge [loop above] node [above] {\(a_1, 1\)} () (B) edge node [above] {\(1\)} (C) (C) edge [bend left=30] node [below] {\(1/2\)} (A) edge [loop right] node [right] {\(1/2\)} ();
\end{tikzpicture}}

\begin{abstract}
  We consider the problem of synthesizing control policies that enforce a concentration property on the state distributions of a stochastic system.  We present a formalization of this problem in terms of synthesizing strategies for maintaining an entropy-based objective in Markov Decision Processes (MDPs).  We first show that even relaxed versions of this problem are complexity-theoretically hard. We then present a sound and (conditionally) relatively complete method to verify and synthesize strategies for such entropy objectives. The main challenge is the non-linear nature of such objectives, and our approach addresses this by exploiting and combining ideas from convex duality and invariant synthesis. 
  We also investigate the role of memory and randomization in ensuring entropy objectives.   Finally, we implement our ideas to evaluate our approach empirically on a few illustrative benchmarks.
\end{abstract}

\section{Introduction}

Systems across several application domains can be described in terms of possessing
a ``state'' that determines the probability distribution of the future evolution
of the system. In other words, up to stochasticity, the state faithfully
represents the system. Mathematically, such systems are often modeled as Markov
chains. More generally, when the system can be subjected to external control
inputs, one needs to use the richer formalism of Markov decision processes
(MDPs).

One natural view of Markov chains and MDPs is as stochastic dynamical systems on
the set of states, where the evolution of the system is described by a
stochastic trajectory through the state space. However, there are situations
where it is more natural to consider how the \emph{probability distribution}
over the possible states evolves with time. For a concrete example, consider a
model of drug absorption described in~\cite[Section
III]{DBLP:conf/qest/ChadhaKVAK11}, where the constructed MDP can be interpreted
(informally) as defining the stochastic trajectory of individual drug molecules.
However, the measurable quantity here is the concentration profile of the
molecules in different states, which corresponds to a probability distribution
over these states. Consequently, problems of synthesising controls in such a
setting have to be formulated in terms of the evolution of the essentially
non-stochastic evolution of the probability distributions, and not in terms of
the stochastic evolution of the states. (We note that similar considerations
appear to apply more generally when modeling systems that are analogous to
chemical kinetics: see, e.g., the discussion in \cite{markov-gene-regulatory}.)
This has led to a large body of work that views Markov chains and MDPs as
\emph{distribution transformers}~\cite{DBLP:conf/qest/KorthikantiVAK10}.

A question that arises in the modeling of such systems with MDPs is whether the
probability distributions generated by the MDP are always ``diffuse''. For
example, Chu et al.~\cite{markov-gene-regulatory} only consider probability
distributions in which the probability masses of all the chemical species are
below a threshold. The question of how ``diffuse'' a model is also important in
the context of model selection: Palaniappan et al.~\cite{bio-blaise} consider
low entropy (which is an analytically smoother formalization of ``being diffuse''
than simply using a threshold) as a measure of the quality of their
discretization schemes.

Motivated by such considerations, we consider the problem of deciding whether it is
possible to force an MDP to generate only probability distributions that are
``not diffuse''/``concentrated'', except possibly for the initial few states. Formally, we consider the problem of synthesizing (or verifying) strategies that
ensure that the distribution of the MDP remains far from the uniform
distribution. A natural way to formalize this is to maximize the
Kullback-Leibler (KL) divergence (see e.g.,~\cite{book-cover-thomas}) from the
uniform distribution. Letting $p(s)$ be the probability of the system being in
state $s$, we want
  $$
  \text{maximize} \quad D_{\text{KL}}(p \,||\, u),
  $$
  where $u(s) \defeq \frac{1}{|S|} \forall s \in S$ is the uniform
  distribution. Equivalently, we would like to minimize an entropy objective, namely:
  $$
  \text{minimize} \quad H(p) = -\sum_s p(s) \log p(s) $$
  subject to system dynamics constraints given by the MDP transition equations.

In this paper, we address the following problems: given a finite-state, discrete-time MDP, an entropy objective $H$ (as defined above on the probability distribution), and a
threshold $\gamma$, is there a strategy under which the MDP starting from some given
initial distribution always, except possibly for the first few steps, remains in
a distribution whose entropy is \emph{at most} $\gamma$? Further, can we synthesize such a strategy when it exists? Finally, if a strategy is given, how hard is it to verify that it satisfies the entropy objective as defined above? These problems are special cases of {\em distributional safety} objectives on MDPs (as e.g., addressed by \citet{akshay2023mdps}), where the problem is to check whether the set of distributions under a strategy remains in a safe set defined using $H$.

As mentioned above, the view of MDPs (and Markov chains) as transformers of
distributions has been widely studied over the
years~\cite{DBLP:conf/qest/KorthikantiVAK10,DBLP:journals/tse/KwonA11,DBLP:conf/qest/ChadhaKVAK11,DBLP:journals/jacm/AgrawalAGT15,DBLP:conf/lics/AkshayGV18,0001MS14,DBLP:journals/jcss/DoyenMS19,akshay2023mdps,AkshayCMZ24,DBLP:journals/tac/GaoAXJ24},
for reachability, safety and even omega-regular objectives. However, most of
these studies restrict the target space to be an affine combination of linear
inequalities, and the proof techniques and hardness results, though varied, often depend
on the linearity of the objectives. Our focus and novelty in this paper is to
consider entropy objectives, which means that we have to go beyond
linear or polynomial functions to transcendental functions such as logarithms
and exponentials. This forms the central challenge that we tackle in this paper.
Our contributions are the following:
\begin{itemize}
\item We show that the entropy objective problem is complexity-theoretically
  hard (more precisely NP-hard). Even more interestingly, we show that we cannot
  even approximate the answer in polynomial time, unless there is a randomized
  polytime algorithm for the Boolean Satisfiability problem.  This hardness
  result holds even when the search space for strategies is restricted to the
  space of memoryless strategies (see Preliminaries for the definition). {\em To
    the best of our knowledge this is the first approximation-hardness result
    for entropy and even for general distributional objectives for MDPs.} 
\item {Given the hardness result, we develop a template-based invariant
      synthesis approach to synthesize strategies along with invariants that ensure a low entropy
      objective. To do so, we exploit ideas from convex duality and invariant
      synthesis to reduce the entropy objective to a tractable form.
We show that this formulation leads to a procedure that is {\em sound
        (i.e., any strategy it outputs is guaranteed to satisfy the
        requirements) and relatively complete (i.e., if there is an affine
        invariant and memoryless strategy satisfying the requirement, the
        procedure will detect this)} assuming the decidability of the theory of
      reals augmented with exponentials: this decidability result is known to
      follow from a famous conjecture in transcendental number
      theory~\cite{MacinW96}.\footnote{Our discussion of the results of
        \cite{MacinW96} is based on the more easily available exposition by
        Wilkie~\cite{Wilki97}.} At the same time, we show that we can verify
      whether a candidate strategy and invariant satisfy the entropy objective
      in time polynomial in the input parameters. Finally, we show sufficient
      conditions for when affine invariants suffice for our entropy objectives,
      based on ergodicity of the MDP. }
    \item {We next consider the question of memory and randomization for entropy objectives. We show three results. First, memoryless strategies do not suffice for entropy objectives, i.e., in general we may need memoryful strategies to ensure entropy objectives. Second, we consider the extended notion of {\em distributional} memoryless strategies, that look at the current (and only the current) probability distribution over all states and choose the next action based on the entire distribution, and show that they are necessary and sufficient to ensure entropy objectives. Third, we show that randomized strategies are more powerful than deterministic strategies for entropy objectives.}
    \item While our decision procedure is primarily of theoretical interest, we also implement our approach using a state-of-the-art non-linear
solver applied to the tractable form we obtain.  We show experimental results on a small suite of benchmarks, that validate the theory, as well as demonstrate interesting behaviors and illustrate the potential applicability of these ideas.
\end{itemize}

\paragraph{Related Work}
We are not aware of any work specifically pertaining to keeping the entropy of
the distribution over states {\em low} at every step of the MDP. However, in the
orthogonal setting of path-based semantics, the work of \citet{BiondLNW14},
\citet{DBLP:conf/fsttcs/ChenH14} and \citet{savas2019entropy} considers the
related question of maximizing cumulative path entropy, and the rate of change of path entropy, and suggests that these can be used to model and reason about information leakage in security protocols. In these works, the authors essentially use the fact that the addition to path entropy at one step is only dependent on the state you are at, and hence is a reward function for the MDP, allowing ideas from renewal theory to be used. These ideas allow \citet[Theorem 1]{BiondLNW14} to give a closed form for the entropy rate (in Markov chains), dependent only on the percentage of time spent in a state and the local transition probabilities out of the state (which determine the state's contribution to the entropy
rate). These questions are orthogonal to our problem of minimizing distribution-based entropy objectives, 
and it is not clear how we can reuse any of the renewal theory ideas. For
 instance, \citet[Proposition~1]{savas2019entropy} prove that to maximize
 cumulative path entropy, memoryless strategies are as powerful as strategies
 that keep track of the time, unlike in our setting (see
 \cref{prop:no-state-yes-dist}). In~\citet{DBLP:conf/fsttcs/ChenH14} the authors
 consider maximizing path-based entropy in interval Markov chains and MDPs, and
 use a standard Lagrangian dual to show conditional decidability results via Schanuel's conjecture, similar to what we eventually do. However, the min-max problems are rather different in our approaches, and in fact the approximate variant in their case is shown to be solvable in polynomial time~\cite[Theorem 21]{DBLP:conf/fsttcs/ChenH14}. On the other hand, we show NP-hardness even for our approximation variant. {The core difference is that our definition of entropy is {\em distributional}, i.e., dealing with entropy of the distributions reached as time evolves. As our definition computes the entropy of distributions reached, it is always finite, while the path-based notion of entropy used in \citet{DBLP:conf/fsttcs/ChenH14} as also \citet{BiondLNW14} (see \citet[Definition 7]{DBLP:conf/fsttcs/ChenH14}) often evaluates to infinity (for instance for any Markov chain in which every state has transitions to at least two distinct states). All this makes our problems harder and requires stronger duality results and justifies the completely different template-based approach that we take.}

The idea of template-based synthesis of certificates for safety, termination,
reach-avoidance and even omega-regularity objectives for probabilistic systems
and programs has been explored in several recent works (see, e.g., \cite{chatterjee2025polyqent,ChatterjeeFG16,CNZ17,akshay2023mdps,AkshayCMZ24,DBLP:conf/cav/AbateGR24}). Similar ideas have also been widely explored in control theory in several works e.g., ~\cite{DBLP:journals/tac/PrajnaJP07, DBLP:conf/cav/SankaranarayananT11, DBLP:conf/hybrid/AnandM0Z22}. However, to the best of our knowledge none of these approaches work with entropy objectives. 

At a high level, we wish to relate template-based synthesis methods, especially synthesis methods
based on affine or linear templates,  with the use of
\emph{relaxations} in combinatorial optimization.  The broad idea there is to
relax the combinatorial control variables to allow them to take continuous
values, so that the resulting optimization problem becomes tractable (often a
linear program, or more generally, a tractable convex program).  Of course, the
resulting relaxed solution may no more be a valid solution to the original
combinatorial problem, and one has to then work to translate the relaxed
solution back to the original combinatorial domain. Several deep and important
results in the theory of approximation algorithms for combinatorial problems are
based on analyzing the capabilities and limitations of such
relaxations~\cite{vaziraniApproximationAlgorithms2003}.  The use of templates is
motivated by a similar desire to make the search problem tractable; however,
here, the search space is shrunk rather than enlarged.  We consider the question
of developing a theory of the relative power of classes of templates (analogous
to the theory of integrality gaps for relaxations of combinatorial optimization
problems) to be an important direction for future work.

{The structure of the paper is as follows. We start with preliminaries in Section~\ref{sec:prelim}. We formulate our problem statements in Section~\ref{sec:entropy}, along with illustrative examples of entropy objectives. In Section~\ref{sec:hardness} we provide our hardness results, followed by our invariant-synthesis based algorithm in Section~\ref{sec:algo}. Section~\ref{sec:random-memory} deals with the question of memory and randomization. We implement our approach and demonstrate experimental results in Section~\ref{sec:experiments} and conclude in Section~\ref{sec:conclusion}. A few technical details and more experimental results and insights are presented in the Appendix. 
}

\section{Preliminaries}
\label{sec:prelim}
Given a finite set \(\Omega\), we denote by \(\Delta(\Omega)\) the set of probability
distributions on \(\Omega\).  For a probability distribution \(\mu\) on
\(\Omega\), we denote by \(H(\mu) \defeq -\sum_{x \in \Omega}\mu(x)\log \mu(x)\) the \emph{entropy}
of \(\mu\).  It is well known that the entropy of a probability distribution
\(\mu\) is a measure of the \emph{uncertainty} in a sample drawn from \(\mu\); in
particular, its highest value of \(\log \abs{\Omega}\) is achieved for the uniform
distribution, and its lowest value of \(0\) at \emph{trivial} probability
distributions, i.e., those supported on a single point in \(\Omega\).
A \emph{Markov decision process} (MDP) \(M = (S, \act, \delta)\) is a triple
comprising a set \(S\) of \emph{states}, a set \(\act(s)\) of allowed
\emph{actions} for each state \(s \in S\), and a \emph{transition function}
\(\delta\) that maps every \emph{state-action pair} \((s, a)\) (with \(s \in S\) and
\(a \in \act(s)\)) to a probability distribution \(\delta(s, a)\) on \(S\).  A
\emph{strategy} \(\pi\) for an MDP is a function that maps, for every time
\(t\), every state-action history sequence
\((s_0, a_0), (s_1, a_1), (s_2, a_2), \dots, (s_{t-1}, a_{t-1}), s_t\) to a
distribution on the set \(\act(s_t)\) of actions allowed for the current state
\(s_t\).  A strategy \(\pi\) is said to be \emph{deterministic} if its output is
always a trivial/Dirac probability distribution.  It is said to be \emph{memoryless}
if its output depends only on the current state \(s_t\).  Given an MDP \(M\), a
strategy \(\pi\) for \(M\), and an initial distribution \(\mu_0\) on the states
\(S\) of \(M\), we denote by \(M^{\pi}(\mu_0, t)\) the resulting distribution of the
state \(s_t\) at time \(t\) when the MDP \(M\) uses the strategy
\(\pi\), starting with state \(s_0\) sampled according to \(\mu_{0}\).  Also, \(M^{\pi}(\mu_0, 0) = \mu_0\) for any strategy \(\pi\). We let \memoryless{} denote the set of memoryless strategies.

\section{Entropy Objectives}
\label{sec:entropy}
Our objective in this work is to synthesize strategies that ensure that the
distribution of the MDP remains far from the uniform distribution.  To formalize
this, we consider a natural notion of distance on probability distributions: the
\emph{Kullback-Leibler} (KL) divergence. The (asymmetric) KL divergence
\(D_{KL}(P \Vert Q)\) between probability distributions \(P\) and \(Q\) on the same
sample space \(\Omega\) is defined as
\begin{equation}
D_{KL}(P \Vert Q) \defeq \sum_{\omega \in \Omega}P(\omega)\log\inp{\frac{P(\omega)}{Q(\omega)}}.
\end{equation}
In particular, if \(\mathcal{U}\) denotes the uniform distribution on \(\Omega\), then
\begin{equation}
D_{KL}(P||\mathcal{U})  = \log \abs{\Omega} - H(P),
\end{equation}
where \(H(P)\) is the entropy of the probability distribution \(P\).  Thus, we
formulate our goal as synthesizing strategies that ensure that the entropy of
the state distribution of the MDP remains low. Further, there could be an
initial period where the system has not yet settled into low entropy, so we only
require the entropy to remain low after an initial warm-up period.  We now
formalize these notions.  Specifically, given the transcendental nature of the
entropy objective, having infinite precision is neither practically
realistic, nor feasible.  Hence, we consider the following robust version.
\begin{problem}[\textbf{\initunfair}]\label{prob-init-unfair}
  \textbf{INPUT}: An MDP \(M = (S, \act, \delta)\), a starting probability
  distribution \(\mu_0 \in \Delta(S)\), an accuracy parameter
  \(\epsilon \in (0, 1)\), and a warm-up parameter $0\leq K\in \mathbb{N}$. \textbf{OUTPUT}: A \(\pm\epsilon\)-additive approximation to the minimum, over all
  strategies \(\pi\), of the maximum value over all times \(t \geq K\) of the entropy
  of \(M^{\pi}(\mu_0, t)\).
  
  That is, we want to compute a value $v$ such that:
  $$|\min_{\pi} \max_{t \geq K} H(M^{\pi} (\mu_0,t))-v|\leq \epsilon.$$
  
\end{problem}

In the next section, we show that the following simpler decision version of
\initunfair{} is already NP-hard.
\begin{problem}[\textbf{\(\tau\)-\apxinitunfairDec}]\label{prob-approx-init-unfair-dec} Let \(\tau > 0\) be a fixed
  approximation parameter. \textbf{INPUT}: An MDP \(M = (S, \act, \delta)\), a
  starting probability distribution \(\mu_0 \in \Delta(S)\), a warm-up parameter $K\geq0$ and a rational threshold
  \(\gamma > 0\). \textbf{OUTPUT}: \yes{}, if there is a strategy \(\pi\) for which the maximum
  value over all times \(t \geq K\) of the entropy of \(M^{\pi}(\mu_0, t)\) is at most
  \(\gamma\); \no{} if for any strategy \(\pi\), this maximum value is at least
  \(\gamma + \tau\). Otherwise, the algorithm is allowed to output either \yes{} or
  \no{}.

  When \(\tau = 0\), we obtain an ``exact'' version of the problem, which we refer
  to simply as \initunfairDec{}.
  \end{problem}
{We will also be interested in the synthesis variant of the above problems, which asks to synthesize the strategy $\pi$ that gave rise to the \yes{} answer in each case.}

Before proceeding to our results, we observe that these questions are of a
distributional nature, i.e., they reason about the probability distributions
reached over time. In particular, they can be seen as a special case of
\emph{distributional safety}~\cite{akshay2023mdps}: given a ``safe'' subset
\(S \subseteq \Delta(\Omega)\) of distributions over the states, we say that a strategy
\(\pi\) for an MDP \(M\) with initial distribution \(\mu_0\) is
\emph{\(S\)-safe} if at all times \(t \geq K\), it holds that
\(M^{\pi}(\mu_0, t) \in S\) (equivalently, the initial distribution \(\mu_0\) is then
said to be \(S\)-safe with respect to the strategy \(\pi\) after $K$ steps). Note
that we do \emph{not} {necessarily} require that \(\mu_0\) itself should belong to the safe set:
we only require safety from the $K^{th}$ action onwards.
The main difference of the above formulation from earlier works on
{distributional safety such as~\cite{akshay2023mdps} is that}
those deal with safety queries over affine (and sometimes, polynomial) objectives, while the
entropy objective as we defined above is intrinsically non-linear and not even
captured by polynomials. 

Let us next give two examples that illustrate entropy objectives in our problem setting.

\begin{example}
\label{eg:MDPM1} Consider the MDP $M_1$ in Figure~\ref{fig:MDPM1} with three
states, taken from the running example of ~\cite{akshay2023mdps}. Let
$\mu_0=(1/2,1/3,1/6)$ be the initial distribution, in which state A has
probability $\frac{1}{2}$, state B has probability $\frac{1}{3}$, and state C has probability $\frac{1}{6}$. 
\begin{figure}[h!]
\begin{center}
\runningexamplefig

\end{center}
\caption{MDP $M_1$ taken from~\cite{akshay2023mdps}}
\label{fig:MDPM1}
\end{figure}

If we wish to minimize entropy, it is {intuitive} that we should choose action
$a_{1}$ always: as the initial probability mass at $A$ exceeds $\frac{1}{e}$,
any additional probability mass will only decrease its contribution to the
entropy. But what is the value of $v$ (as defined in
Problem~\ref{prob-init-unfair}) for different values of $K$ with
$\epsilon=0.01$? With some computation, one can show that for $K=0$,
$v\geq1.01$, while for $K=1$, $v\leq0.68$ and for $K=2$, $v\leq0.60$. Further, if we fix
$\gamma=1$, $\tau=0$, $K=1$, numerical experiments using our linear invariant approach
(reported in the appendix) also establish that the answer to
Problem~\ref{prob-approx-init-unfair-dec} (i.e., whether there exists a strategy
for which the max value of entropy remains below 1) is \yes{}, and that this is
achieved by the strategy that always chooses $a_{1}$.
\end{example}

In the above example, it was easy to identify a strategy that minimizes entropy
(increases concentration) even if the value of the strategy required some work
to compute or bound; but this is not always the case. For instance, in the
following, it is \emph{a priori} unclear what strategy minimizes the entropy or
ensures that it is below a certain threshold.

\begin{example} 
\label{eg:MDPM2} Consider the 4-state MDP $M_2$ given in Figure~\ref{fig:MDPM2}, with initial distribution $\mu_0=  (0.2, 0.4, 0, 0.4)$.

\begin{figure}[h]
\begin{center}
\begin{tikzpicture}[>=stealth, node distance=1.5cm and 1.5cm,scale=0.5]
\node[state] (D) {$D$};
\node[state] (B) [below left=of D] {$B$};
\node[state] (C) [below right=of D] {$C$};
\node[state] (A) [below right=of B] {$A$};

\draw[->] (A) -- node[left] {$a,1$} (B);
\draw[->] (A) -- node[right] {$b,1$} (C);

\draw[->] (B) to[bend left=40] node[left] {$4/5$} (D);

\draw[->] (B) to[bend left=15] node[above] {$1/5$} (C);

\draw[->] (C) to[bend right=40] node[right] {$1/2$} (D);

\draw[->] (C) to[bend left=15] node[below] {$1/2$} (B);

\draw[->] (D) -- node[left] {$1/3$} (B);

\draw[->] (D) -- node[right] {$2/3$} (C);

\end{tikzpicture}
\end{center}
\caption{MDP $M_2$}
\label{fig:MDPM2}
\end{figure}
In this example, starting from $\mu_0$, with  $K=0$, suppose $\gamma=1.092$, what would be the strategy that ensures that the \initunfairDec{} problem answers \yes{}? With some computation, one can check 
that a (randomized) strategy that selects action \(a\) with probability
    \(7/10\) indeed ensures this, even though none of the two deterministic
    strategies do.
\end{example}

{In the rest of the paper, our goal is to address the above defined problems. We first establish complexity-theoretical hardness results, that work
  even for the approximate variant of the problem. We then provide an affine
  invariant-synthesis based algorithm with soundness and relative completeness
  guarantees.  Finally, we present a sufficient condition under which such
  invariants suffice. }

\section{Hardness}
\label{sec:hardness}
We now show that the approximate decision version of \initunfair{} is NP-hard, under randomized reductions, even for small enough \emph{fixed} values of the approximation parameter. More precisely, we prove the following theorem. 
\begin{theorem}[\textbf{\apxinitunfairDec{} is NP-hard}]\label{thm-np-hard}
  There exists a positive constant \(\tau\) such that if there is a polynomial-time
  algorithm for \(\tau\)-\apxinitunfairDec{}, then NP = RP.\footnote{More
    concretely, NP = RP means that there is a randomized polynomial time
    algorithm for SAT that is always correct on satisfiable inputs and correct
    with probability at least \(2/3\) (over the randomness of the algorithm) on
    unsatisfiable inputs~ (see for instance, \cite[Chapter 7.3]{aroraB09} for more details).}
\end{theorem}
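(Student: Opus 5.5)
We plan a randomized reduction from a gap problem whose hardness is known under randomized reductions. The natural candidate is Unique-SAT / the Valiant–Vazirani isolation lemma: it maps a SAT formula \(\phi\) to a formula \(\phi'\) such that if \(\phi\) is unsatisfiable then \(\phi'\) is unsatisfiable, and if \(\phi\) is satisfiable then, with probability at least \(1/(8n)\), \(\phi'\) has exactly one satisfying assignment. Amplifying by repetition gives a one-sided-error procedure, so a polynomial-time algorithm for \(\tau\)-\apxinitunfairDec{} would put SAT in RP. The one-sidedness matches the RP definition in the footnote: a satisfiable input yields \yes{} with good probability, and an unsatisfiable input never does. (Flipping roles gives co-RP, so we fix the orientation carefully.) An alternative that avoids randomness in the combinatorics, a gap version such as Max-3SAT with PCP, seems harder to encode via entropy, because entropy concentration rewards \emph{coordination} among choices, i.e.\ consistency, which is what a unique assignment provides.

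The encoding will use a layered MDP with \(K\) equal to the final layer depth, plus a self-loop absorbing structure so that the maximum over \(t \ge K\) is governed by one final distribution. Start with \(\mu_0\) uniform over \(m\) clause states \(c_1,\dots,c_m\). From clause \(c_j\), the strategy picks a literal of the clause and moves to a state for that literal. A memoryless or memoryful strategy may choose differently per clause, so mass lands on a distribution over literal states \(x_i\) or \(\bar x_i\). If all choices are consistent with one assignment, the mass sits on at most \(n\) literal states. We then add a second gadget that forces coordination: each variable \(x_i\) gets two states, and the entropy is lowest when the clause masses pile onto few states.

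Concretely, we aim for a map in which the final distribution's entropy is a function of the set of used literals. A satisfying assignment yields support on a set of size \(\le n\) with controlled weights, while if \(\phi'\) is unsatisfiable any choice either uses a complementary pair \(x_i,\bar x_i\) or forces spreading, which adds a constant amount of entropy. We would attempt to realize this by sending each literal state to a "collision" state shared by \(x_i\) and \(\bar x_i\) only when both are used. The intended threshold is \(\gamma = \) the entropy achievable under a satisfying assignment, with \(\tau\) a fixed constant gap.

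The main obstacle is turning satisfiability into a \emph{constant} additive entropy gap independent of \(n\). Entropy differences from spreading one clause's mass are only \(O(1/m)\), which is too small. To amplify, we would blow up the instance using padding: replicate the clause gadget so that every inconsistency affects a constant fraction of the mass. Here isolation or a gap version helps, since a unique or robust solution makes far-from-satisfying configurations lose a constant fraction. We would first try composing with a PCP-style gap (unsatisfiable means every assignment violates a constant fraction of clauses) and encode each violated clause as forcing mass onto a high-entropy spread of fresh states. This gives entropy \(\ge \gamma+\tau\). We would then use Valiant–Vazirani only if consistency across the strategy's per-history choices cannot be enforced deterministically. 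This enforcement is the delicate part, since strategies may randomize and depend on history. Finally we would check that the construction is polynomial and that rational inputs suffice.
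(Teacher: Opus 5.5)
Your proposal correctly identifies the main obstacle: one inconsistent clause shifts the entropy by only $O(1/m)$, so a constant-fraction gap is needed. However, it does not supply the mechanism that converts such a gap into an entropy gap, and the gadget you sketch would not work. The ``collision state shared by $x_i$ and $\bar x_i$ only when both are used'' cannot be built. MDP transitions are linear and local to a state, so no transition can be conditioned on whether another state carries mass. Moreover, merging mass into a shared state \emph{lowers} entropy, so it would reward inconsistency rather than penalize it. More fundamentally, with $\mu_0$ supported only on clause states, low entropy rewards piling clause mass onto few literals. That is a hitting-set-type objective, not satisfiability: choosing $x_1$ for some clauses and $\bar x_1$ for others can beat every consistent choice, and ``the entropy achievable under a satisfying assignment'' is not a well-defined, computable threshold.

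The missing idea is to add variable states carrying \emph{heavy} mass $\alpha/n$ each (with $\alpha = 1-e^{-100B^2}$), each forced to move to $v$ or $\lnot v$. The clause states carry only tiny total mass $\beta = 1-\alpha$. The variable states then fix an assignment $L_\pi$ with $n$ heavy literal states, which makes the \yes{} threshold exactly $\log(n/\alpha)$. A clause consistent with $L_\pi$ lands on an already heavy state. An inconsistent clause lands on an otherwise empty literal, creating a support point of mass at most $B\beta/m \ll \alpha/n$. By Observation~\ref{lem-entropy-increment}, this raises entropy by at least $(\beta/m)\log(\alpha m/(nB\beta))$.

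The gap source is also misidentified. Valiant--Vazirani gives uniqueness, not robustness: an unsatisfiable $\phi'$ may have assignments violating a single clause, so the $O(1/m)$ problem remains. The randomness in the theorem does not come from isolation. A plain PCP gap is not enough either: if the $\geq m/10$ violated clauses all route their mass to one common literal, mass concentrates and entropy can even \emph{decrease}. The paper uses Trevisan's bounded-occurrence gap MAX-3SAT: all clauses satisfiable versus at most $9/10$ satisfiable, with each variable in at most $B$ clauses, and a polynomial algorithm implies NP $=$ RP. Bounded occurrence forces the violated clauses to hit at least $m/(10B)$ \emph{distinct} unused literals. Summing the increments gives $\frac{\beta}{10B}\log\frac{\alpha}{3B\beta}$, which, with the above $\alpha$, dominates the at most $3B\beta/\alpha$ loss from redistribution and yields $\tau=\beta$.

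Finally, the issue you leave as ``delicate'' (randomized, history-dependent strategies) is resolved by making literal states absorbing and taking $K=1$. Each trajectory then makes exactly one decision, so history is vacuous, and the time-$1$ distribution is frozen. By concavity of $H$, a randomized strategy's entropy is at least that of some deterministic one, so lower-bounding deterministic strategies suffices.
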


\begin{remark}
  As an immediate corollary, we see that \(\initunfairDec\), the exact
  version of the problem, is also NP-hard: a polynomial time algorithm for it
  would imply randomized polynomial time algorithms for SAT.
\end{remark}

In the proof of \cref{thm-np-hard}, we will need the following computation.
\begin{observation}[\textbf{Entropy increment by mass transfer}]
  \label{lem-entropy-increment}
  Let \(\mu\) be a probability distribution on a finite set \(S\). Let
  \(\epsilon > 0\) and suppose that \(a, b \in S\) are such that \(\mu(b) = 0\) and
  \(\mu(a) > \epsilon\).  Consider the probability distribution \(\mu'\) obtained from
  \(\mu\) by transferring an \(\epsilon\) mass from \(a\) to \(b\).  Formally,
  \(\mu'(k) = \mu(k)\) for \(k \in S \setminus \inb{a, b}\),
  \(\mu'(b) = \epsilon\) and \(\mu'(a) = \mu(a) - \epsilon\).  Then
  \begin{align*}
    H(\mu')
    &= H(\mu) + \epsilon\log\inp{\frac{\mu(a)}{\epsilon} - 1}
      -\mu(a)\log\inp{1 - \frac{\epsilon}{\mu(a)}}\\
    &\ge  H(\mu) + \epsilon\log\inp{\frac{\mu(a)}{\epsilon}}.
  \end{align*}
\end{observation}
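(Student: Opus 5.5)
Since \(\mu'\) agrees with \(\mu\) outside \(\{a,b\}\), only the terms at \(a\) and \(b\) contribute to \(H(\mu')-H(\mu)\). Write \(x \defeq \mu(a)\). Using \(\mu(b)=0\) and the convention \(0\log 0 = 0\), we get
\begin{equation*}
H(\mu') - H(\mu) = -(x-\epsilon)\log(x-\epsilon) - \epsilon\log\epsilon + x\log x .
\end{equation*}
To reach the claimed equality, I will expand \(\log(x-\epsilon) = \log x + \log(1-\epsilon/x)\). The right side becomes \(\epsilon\log x - \epsilon\log\epsilon - (x-\epsilon)\log(1-\epsilon/x)\). I then split \(-(x-\epsilon)\log(1-\epsilon/x)\) as \(-x\log(1-\epsilon/x) + \epsilon\log(1-\epsilon/x)\). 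The second piece combines with \(\epsilon\log(x/\epsilon)\) to give \(\epsilon\log\inp{\frac{x}{\epsilon}\cdot\frac{x-\epsilon}{x}} = \epsilon\log\inp{\frac{x}{\epsilon}-1}\). This is exactly the stated identity. Both logarithms are well defined because \(x>\epsilon>0\).

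For the inequality, the identity already gives \(-x\log(1-\epsilon/x) \ge -x\cdot(-\epsilon/x) = \epsilon\), using \(\log(1-y)\le -y\). So it suffices to show
\begin{equation*}
\epsilon\log\inp{\frac{x}{\epsilon}-1} + \epsilon \ge \epsilon\log\frac{x}{\epsilon}, \qquad\text{i.e.}\qquad \log\inp{1-\frac{\epsilon}{x}} \ge -1 .
\end{equation*}
This fails when \(\epsilon/x\) is close to \(1\), so the crude bound is too lossy and I will use a sharper route. Return to the form \(\epsilon\log(x/\epsilon) - (x-\epsilon)\log(1-\epsilon/x)\) derived above. Since \(0<1-\epsilon/x<1\) and \(x-\epsilon>0\), the second term is nonnegative. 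This yields \(H(\mu')\ge H(\mu)+\epsilon\log(\mu(a)/\epsilon)\) directly.

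The only real obstacle is this choice of grouping: the paper's displayed form hides the fact that the combined correction \(-(x-\epsilon)\log(1-\epsilon/x)\) is nonnegative. The remaining work is bookkeeping, plus noting that the logarithms are finite because \(x>\epsilon\).
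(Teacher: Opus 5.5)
Your proposal is correct and matches the paper's argument. The identity comes from the same direct computation. Your observation that \(-(\mu(a)-\epsilon)\log(1-\epsilon/\mu(a)) \ge 0\) is exactly the paper's inequality \(\mu(a)\log(1-\epsilon/\mu(a)) \le \epsilon\log(1-\epsilon/\mu(a))\), substituted back into the equality; the detour through \(\log(1-y)\le -y\), which you correctly abandoned, is unnecessary.
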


\begin{proof}
  The equality follows by a direct computation.  For the inequality, we use
  \(0 < \epsilon < \mu(a)\) to get that
  \begin{equation}
    \label{eq:6}
    \mu(a)\log\inp{1 - \frac{\epsilon}{\mu(a)}} \leq \epsilon\log\inp{1 - \frac{\epsilon}{\mu(a)}},
  \end{equation}
  and substitute this in the equality.
\end{proof}

We now proceed with the proof of \cref{thm-np-hard}.
\begin{proof}[Proof of \cref{thm-np-hard}]
  We start with the following hardness result of \citet[Section 2]{Trevi01} for
  MAX-SAT with bounded occurrences of variables. Consider the special case of
  SAT where each clause has exactly three literals and each variable occurs in
  at most \(B\) clauses, and consider for these instances the problem of
  deciding whether (a) all the clauses are satisfiable, or (b) at most a 9/10
  fraction of the clauses are satisfiable.  Then, there exists a positive
  integer \(B\) such that if there is a polynomial time algorithm for this
  problem, then NP = RP.  We will show that there exists a \(\tau\) (depending upon
  \(B\)) such that a polynomial time algorithm for \(\tau\)-\apxinitunfairDec{}
  implies a polynomial time algorithm for Trevisan's version of MAX-SAT.
  Combined with Trevisan's result, this proves \cref{thm-np-hard}.

  \newcommand{\tV}{\ensuremath{\tilde{V}}} \newcommand{\var}[1]{\ensuremath{\mathrm{var}\inp{#1}}} Consider an instance \(\phi\) of MAX-SAT of the above kind.  Let \(C\) and
  \(V\) denote respectively the sets of clauses and variables of \(\phi\), and
  define \(L \defeq \bigcup_{v \in V}\inb{v, \lnot v}\).  We construct an MDP
  \(M_{\phi}\) whose state space \(\Omega = C \cup \tV \cup L \), where \(\tV\) is a marked
  copy of \(V\) (so as to distinguish its elements from those of \(L\)) defined
  as \(\tV \defeq \inb{\var{v} \st v \in V}\).  The states in \(L\) are absorbing
  states: the only allowed action from \(s \in L\) is to loop back to \(s\) with
  probability \(1\).  For each \(v \in V\), there are two allowed actions for
  \(\var{v} \in \tV\): one is to go to \(v \in L\) with probability \(1\), the other
  is to go to \(\lnot{v} \in L\) with probability one.  For each clause
  \(c = (\ell_1 \lor \ell_2 \lor \ell_3)\) in \(C\), there are three allowed actions: for
  \(i \in \inb{1, 2, 3}\) the \(i\)th action is to go to \(\ell_i \in L\) with
  probability one.  Since all actions in the MDP \(M_{\phi}\) are deterministic, we
  will denote, by a slight abuse of notation, each action by its (unique) goal
  state.  We also denote the number of clauses as \(m\) and the number of
  variables as \(n\).  Note also that since each of the \(m\) clauses has at
  most three literals, the total number of literals (and hence variables) that
  can occur in the formula is at most \(3m\), so that \(n \leq 3m\).

  Let \(B\) be a positive integer as above; without loss of generality, we
  assume \(B \geq 10\) for simplicity.  Let
  \[\alpha \defeq 1 - \exp(-100B^2) \geq 1/2 \qquad \text{and} \qquad
    \beta \defeq 1- \alpha\] be positive constants.  Consider the initial distribution
  \(\mu_0\), supported on \(S \cup \tV\), which assigns weight
  \(\frac{\alpha}{n}\) to each element of \(\tV\) and weight \(\frac{\beta}{m}\) to each
  element of \(C\).

  A simple consequence of the construction of \(M_{\phi}\) is that for any strategy
  \(\pi\), the distribution \(M_{\phi}^{\pi}(\mu_0, 1)\) after the first action remains
  the same at all subsequent times.  Thus, we only need to study the entropy of
  \(M_{\phi}^{\pi}(\mu_0, 1)\).

  Consider first the case when \(\phi\) is satisfiable.  In this case, fix a
  satisfying assignment \(A\) and consider a strategy \(\pi_A\) which for each
  \(v \in V\) chooses for \(\var{v}\) the action \(v\) if \(A(v) = 1\) and the
  action \(\lnot{v}\) if \(A(v) = 0\).  Similarly, for each \(c \in C\), it picks
  a literal of \(c\) satisfied by \(A\) and picks the corresponding action.
  Note that \(\pi_A\) is a {\em memoryless} strategy.  By construction,
  \(M_{\phi}^{\pi_A}(\mu_0, 1)\) is supported on a subset of \(L\) of size exactly
  \(n\), and each element of the support has probability at least
  \(\frac{\alpha}{n}\).  Thus, if \(\phi\) is satisfiable, then there exists a strategy
  \(\pi\) (namely \(\pi_A\)) such that \(M_{\phi}^{\pi}(\mu_0, 1)\) (and, therefore,
  \(M_{\phi}^{\pi}(\mu_0, t)\) for all \(t \geq 1\), as noted above) has entropy at most
  \(\log(n/\alpha)\).

  Next, consider the case when at most \(9m/10\) of the clauses in \(\phi\) are
  satisfiable.  Our goal is to show that in this case, for every strategy $\pi$,
  the entropy of \(M_{\phi}^{\pi}(\mu_0, 1)\) is at least
  \(\tau + \log (n/\alpha)\), for some positive constant \(\tau\).  We first observe that
  it is enough to prove this lower bound for \emph{deterministic} strategies.

  To see this, note that when \(\pi\) is randomized,
  \(M_{\phi}^{\pi}(\mu_0, 1)\) is a convex combination of various
  \(M_{\phi}^{\pi'}(\mu_0, 1)\) where \(\pi'\) varies over a set of deterministic
  strategies, so that, by the concavity of the entropy, the entropy of
  \(M_{\phi}^{\pi}(\mu_0, 1)\) is at least as much as the entropy of
  \(M_{\phi}^{\pi'}(\mu_0, 1)\) for some deterministic strategy \(\pi'\).  Thus, in order
  to show that for every strategy $\pi$, the entropy of
  \(M_{\phi}^{\pi}(\mu_0, 1)\) is at least \(\tau + \log (n/\alpha)\), for some positive
  constant \(\tau\), it is enough to show that for every \emph{deterministic}
  strategy $\pi$, the entropy of \(M_{\phi}^{\pi}(\mu_0, 1)\) is at least
  \(\tau + \log (n/\alpha)\).

  Consider now an arbitrary deterministic strategy \(\pi\) and denote by
  \(L_{\pi}\) the set of literals in \(L\) selected by \(\pi\) for the states in
  \(\tV\).  Since at most 9/10 fraction of the clauses in \(\phi\) can be satisfied
  by any assignment, it follows that for at least \(m/10\) clauses in \(C\),
  \(\pi\) must select a literal that is not in \(L_{\pi}\).  Further, since each
  variable appears at most \(B\) times in \(\phi\), at least \(m/(10B)\) literals
  in \(L \setminus L_{\pi}\) must be selected by these (at least) \(m/10\) unsatisfied
  clauses. Let \(\Gamma\) denote the set of these chosen literals in
  \(L \setminus L_{\pi}\), so that \(\abs{\Gamma} \geq m/(10B)\).  Let
  \(\mu_1\) denote the probability distribution \(M_{\phi}^{\pi}(\mu_0, 1)\) induced by
  \(\pi\) at time \(1\).  Then, the probability weight assigned by \(\mu_1\) to each
  literal in \(\Gamma\) lies in the interval \([\beta/m, B\beta/m]\).  On the other hand, the
  probability weight assigned by \(\mu_1\) to each literal in \(L_{\pi}\) is at
  least \(\alpha/n\).  Since \(m \geq n /3\), our choice of \(\alpha\) and
  \(\beta\) implies that \(\alpha/n \geq B\beta/m\).  Using these facts, we now show that
  \(H(\mu_1) \geq \log(n/\alpha) + 3\beta\).

  Towards this, consider the probability distribution \(\tilde{\mu}\), supported
  only on \(L_{\pi}\), obtained from \(\mu_{1}\) by transferring the probability
  weight of each literal \(\ell \in \Gamma\) to the unique literal in
  \(L_{\pi}\) corresponding to the negation of \(\ell\).  Then, for each
  \(\ell \in L_{\pi}\), \(\tilde{\mu}(\ell) \in [\alpha/n, \alpha/n + B\beta/m]\).  It follows that
  \begin{equation}
    \label{eq:1}
    H(\tilde{\mu})  \geq
    \log(n/\alpha) - \log(1 + (B\beta/m)/(\alpha/n)),
  \end{equation}
  which is at least \(\log(n/\alpha) - 3B\beta/\alpha\) (since
  \(m \geq n / 3\)).  Further, by the construction of \(\tilde{\mu}\), \(\mu_1\) can be
  obtained from \(\tilde{\mu}\) by transferring, sequentially for each literal
  \(\ell\) in \(\Gamma\), probability mass \(\mu_1(\ell)\) from the literal
  \(\lnot \ell \in L_{\pi}\) to the literal \(\ell\).  Applying the entropy increment
  estimate of \cref{lem-entropy-increment} to each such transfer, we see that
  each such transfer increases the entropy by at least
  \(\mu_1(\ell)\log(\tilde{\mu}(\lnot \ell)/\mu_1(\ell))\), which is at least
  \(\frac{\beta}{m}\cdot\log\inp{\frac{\alpha m}{nB\beta}}\) since as argued above,
  \(\mu_1(\ell) \in [\beta/m, B\beta/m]\) and
  \(\tilde{\mu}(\lnot \ell) \geq \alpha/n\).  Since there are
  \(\abs{\Gamma} \geq m/(10B)\) transfer steps and we have \(m \geq n /3\), we therefore
  get
  \begin{align}
    H(\mu_1)
&\geq H(\tilde{\mu}) + \frac{\beta}{10B}\log\inp{\frac{\alpha}{3B\beta}}\label{eq:7}\\
    &\geq \log(n/\alpha) + \frac{\beta}{10 B} \cdot \insq{\log \chi - 60B^2}\\
    &\geq \log(n/\alpha) + 3\beta.
  \end{align}
  (Here, \(\chi \defeq \frac{\alpha}{3B\beta} \geq \exp(90B^2)\) with our choice of
  \(\alpha\) and \(\beta\), and we also use \(\alpha \geq 1/2\) to get the second inequality.)

  We thus see that when at most \(9/10\) fraction of the clauses in \(\phi\) are
  satisfiable, \(M_{\phi}^{\pi}(\mu_0, 1)\) has entropy at least
  \(\log(n/\alpha) + 3\beta\) (where \(\beta \) as chosen above is an absolute constant). On
  the other hand, as shown above, when \(\phi\) is satisfiable, there is a strategy
  \(\pi\) which ensures that the entropy of \(M_{\phi}^{\pi}(\mu_0, t)\), for every
  \(t \geq 1\), is at most \(\log(n/\alpha)\). Using standard methods for computing
  approximations to the logarithm, we can compute a rational threshold
  \(\gamma \in [\log(n/\alpha), \log(n/\alpha) + \beta]\) in time \(O(\log n)\) (note that both
  \(\alpha\) and \(\beta\) are absolute constants). Further, given \(\phi\),
  \(M_{\phi}\) can be constructed in time polynomial in \(m\) and \(n\).
  \(M_{\phi}\) \(\mu_0\) and \(\gamma\) thus give us a polynomial-time constructible
  instance of \(\tau\)-\apxinitunfairDec{} (with \(\tau = \beta\) and \(K = 1\)) that has
  answer {NO} when at most \(9/10\)-fraction of the clauses in \(\phi\) are
  satisfiable, and {YES} when \(\phi\) is satisfiable. Combined with Trevisan's
  hardness result mentioned above, the above reduction establishes that a
  polynomial time algorithm for \(\tau\)-\apxinitunfairDec{} (with
  \(\tau = \beta\) and \(K = 1\)) implies that NP = RP.
\end{proof}

\begin{remark}
  Consider the memoryless version of \apxinitunfairDec{}, where we constrain the
  strategy \(\pi\) to be memoryless.  Since the strategy \(\pi_A\) constructed in
  the above proof in the \yes{} case is memoryless, it follows that even the
  memoryless version of \apxinitunfairDec{} is NP-hard (under randomized
  reductions).  Further, the proof above can be easily modified to
    prove hardness for any \(K > 1\) as well.
\end{remark}

\section{An invariant synthesis based algorithm}
\label{sec:algo}
As described in the previous section, even the decision version of our
synthesis problem, even for \emph{memoryless} strategies, is hard. A fruitful paradigm for attacking such hard problems has been through the construction of sound but possibly incomplete methods via \emph{invariant} and template-based synthesis. This approach has been used in different settings including program correctness~\cite{ColonSS03,DBLP:conf/pldi/Chatterjee0GG20}, probabilistic termination~\cite{CNZ17,BatzCJKKM23}, control theory~\cite{DBLP:journals/tac/PrajnaJP07,DBLP:conf/cav/SankaranarayananT11}. For distributional objectives in MDPs this broad technique has been adapted in~\cite{akshay2023mdps,AkshayCMZ24} for the special case of affine objectives. Our goal in this section is to lift this approach to entropy objectives that are non-linear and even non-polynomial.

Instead of attempting a general description of the idea of invariants, we
describe the instantiation of the idea in the context of strategy synthesis for
enforcing distributional objectives in MDPs~\cite{akshay2023mdps}.  Let
\({x^{(t)}}\) denote the probability distribution of the MDP at time \(t\).  Let us
suppose that our objective is to enforce that for at all times \(t \geq K\) (where
\(K\) is the warm-up parameter), a predicate \(\predicate({x}^{(t)})\) holds.  For our
problem, the predicate \(\predicate = \predicate_\gamma\) is parameterized by a threshold
\(\gamma > 0\), and takes the following form:
\begin{equation}
  \label{eq-safety-predicate}
  \predicate(x) = \predicate_{\gamma}(x) \defeq H(x) \leq \gamma,
\end{equation}
where \(H\) is the entropy function.

Given an initial distribution \(\mu_0\) of the MDP, the idea is to search together
for a memoryless policy \(\pi\) and another predicate \(I\) (called an \emph{invariant}),
which have the following properties:
\begin{enumerate}
\item \textbf{(Initialization)} \(I(M^{\pi}(\mu_0, K))\) holds;\label{item-inv-begin}
\item \textbf{(Induction)} \(\forall x, I(x) \implies I(M^{\pi}(x,
  1))\) \label{item-inv-induct}; and
\item \textbf{(Invariant implies safety)} \(\forall x, I({x}) \implies
  \predicate(x)\). \label{item-inv-safe}
\end{enumerate}
Note that if we find a memoryless strategy \(\pi\) and an invariant \(I\) satisfying the above
properties, then it follows immediately (by induction) that
\(\predicate(M^{\pi}(\mu_0, t))\) is satisfied for all times \(t \geq K\).

\subsection{Affine Invariants} 
The problem then reduces to automating the search
for such invariants.  Once again, to make an automated search feasible, the
popular paradigm is to choose one of the simplest possible kinds of invariants:
an affine invariant.  In this one searches over those \(I\) that can be
represented as a conjunction of affine inequalities.  In other words, we fix a
number \(m\) of inequalities (we call this
number the \emph{template size} of the invariant \(I\)), and if \(n\) is the
number of states of the MDP, then in terms of an \(m \times n\) real matrix
\(A\) and a real vector \(b \in \R^m\), we define
\begin{equation}
  \label{eq-invariant}
  I(x) \defeq
  \begin{cases}
    Ax \leq b,\\
    \vec{1}^{\top}x = 1, \text{ and }
     x \geq 0.
  \end{cases}
\end{equation}
i.e., \(I(x)\) holds if and only
if the inequalities \(Ax \leq b\) hold coordinate-wise and \(x\) is a probability
distribution.

It is clear that for such invariants, the initialization constraint
(\cref{item-inv-begin}) is simply the feasibility question for a system of
polynomial inequalities (which are linear in \(A\) and \(b\), but possibly
non-linear, when \(K > 1\), in the variables representing \(\pi\)).  We now write
the defining constraints in the inductive condition (\cref{item-inv-induct}) for
such an affine invariant in the above notation.  This leads to a conjunction of
constraints of the form
\begin{equation}
  \label{eq-impl-linear}
\Phi = \Phi(Q, p, c, d) \defeq  \forall x (Qx \leq p \implies c^\top x \leq d),
\end{equation}
where \(Q, p, c\) and \(d\) depend upon \(A, b\) and the variables representing
the strategy \(\pi\), and the antecedent \(Qx \leq p\) always denotes the invariant
constraint \(I(x)\) from \cref{eq-invariant} (the equality constraint in
\cref{eq-invariant} is handled by the usual trick of replacing an equality by a
pair of inequalities). {Such universally
quantified implication constraints cannot be easily handled by standard solvers. Thus, several invariant synthesis approaches, including those of \citet{ColonS01,ChatterjeeGMZ22,chatterjee2025polyqent} and \citet{akshay2023mdps} convert the above quantified linear constraints into a simple existential constraint via an application of linear programming duality in the form of \emph{Farkas' lemma}, as stated below.}
\begin{theorem}[\textbf{Farkas' lemma}]\label{lem-farkas-linear}
  Let \(Q \in \R^{\ell \times n}, c \in \R^n, d\in \R\) and \(p \in \R^{\ell}\) be fixed.  Suppose
  that the system \(Qx \leq p\) is feasible (i.e., some \(x \in \R^{n}\) satisfies
  it).  Then the constraint \(\Phi(Q, p, c, d)\) in \cref{eq-impl-linear} is
  equivalent to the constraint
  \begin{equation*}
    \dualc = \dualc(Q, c, p, d) 
    \defeq \exists y (y \geq 0 \land Q^{\top}y = c \land p^{\top}y \leq d).
  \end{equation*}
\end{theorem}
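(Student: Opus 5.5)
The plan is to prove the two directions separately. The direction from \(\dualc\) to \(\Phi\) is immediate and does not need feasibility. For the converse, I will use LP duality, or equivalently a separating hyperplane argument, and the feasibility hypothesis is exactly what rules out the degenerate case.

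\emph{Easy direction (\(\dualc \Rightarrow \Phi\)).} Suppose \(y \ge 0\), \(Q^\top y = c\) and \(p^\top y \le d\). Take any \(x\) with \(Qx \le p\). Since \(y \ge 0\), multiplying the inequality by \(y^\top\) preserves it, so
\[
c^\top x = y^\top Q x \le y^\top p \le d .
\]
Hence \(\Phi\) holds.

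\emph{Hard direction (\(\Phi \Rightarrow \dualc\)).} Assume \(\Phi\) holds and \(Qx\le p\) is feasible. Consider the linear program
\[
\max \{ c^\top x : Qx \le p \}.
\]
It is feasible by hypothesis, and \(\Phi\) says its objective is bounded above by \(d\). By strong LP duality, the dual
\[
\min \{ p^\top y : y \ge 0,\ Q^\top y = c \}
\]
is then feasible, and its optimum equals the primal optimum, which is at most \(d\). An optimal dual \(y\) therefore witnesses \(\dualc\).

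If one prefers not to cite strong duality, the same conclusion follows from the standard Farkas alternative. Suppose no \(y\) witnesses \(\dualc\). Then the point \((c,d)\) lies outside the closed convex cone
\[
\{(Q^\top y,\ p^\top y + s) : y \ge 0,\ s \ge 0\}.
\]
This cone is closed because it is finitely generated, and that closedness is the one technical point to be careful about. Separation then gives \((z, \lambda)\) with \(z^\top Q^\top y + \lambda(p^\top y + s) \le 0\) for all \(y \ge 0,\ s \ge 0\), and \(z^\top c + \lambda d > 0\). Taking \(y = 0\) and letting \(s\) grow forces \(\lambda \le 0\). This gives two cases.

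\emph{Case \(\lambda < 0\).} Scale so that \(\lambda = -1\). The separation condition becomes \(Qz \le p\), while \(c^\top z > d\). So \(z\) violates \(\Phi\), a contradiction.

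\emph{Case \(\lambda = 0\).} Now \(Qz \le 0\) and \(c^\top z > 0\), so \(z\) is a recession direction along which the objective increases. Here the feasibility hypothesis is used: take a feasible \(x_0\) and consider \(x_0 + t z\). It stays feasible for every \(t \ge 0\), and \(c^\top(x_0 + tz) \to \infty\), contradicting \(\Phi\).

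This degenerate case is the main obstacle. Without feasibility, \(\Phi\) holds vacuously while \(\dualc\) may fail, so the hypothesis is exactly what handles it.
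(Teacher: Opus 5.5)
Your proof is correct. The direction \(\dualc \Rightarrow \Phi\) is the usual weak-duality computation, and for \(\Phi \Rightarrow \dualc\) both of your arguments are sound: the appeal to strong LP duality, and the separation argument in which the feasibility hypothesis disposes of the \(\lambda = 0\) recession-direction case. The paper gives no proof of its own here; it simply invokes the statement as the standard affine form of Farkas' lemma coming from linear programming duality, which is exactly your route.
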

Note that the initialization condition in \cref{item-inv-begin} implies that the
equivalence in \cref{lem-farkas-linear} applies to each instance of
\cref{eq-impl-linear} arising from \cref{item-inv-induct}: this is because for
these instances, the requirement \(Qx \leq p\) is the same as \(I(x)\), which, by
the initialization condition, has the feasible solution
\(x = M^{\pi}(\mu_0, K)\).  We can therefore apply this equivalence to transform all
the constraints of the form described in \cref{eq-impl-linear} arising from
\cref{item-inv-induct} into a single feasibility question about a system of
polynomial inequalities over the variables \(A, b, \pi\) and the freshly
introduced auxiliary variables \(y\) (a separate set of these auxiliary
variables is introduced for each of the translated constraints).  Note further
that this system is actually a system of non-strict \emph{linear} inequalities
in the auxiliary variables when \(A, b,\) and \(\pi\) are fixed.

It remains to handle the important ``invariant \(\implies\) safety'' constraint
given in \cref{item-inv-safe}.  In  \cite{akshay2023mdps}, the safety predicate
\(\predicate\) was itself given as a conjunction of finitely many linear inequalities.
Thus, in their setting, the constraints arising from \cref{item-inv-safe} are
also of the form given in \cref{eq-impl-linear}, and the above process was sufficient.

\subsection{
Entropy-based Safety Constraints}
Our point of departure from this earlier setting 
is that since our safety predicate \(\predicate\) is no more described in terms of a finite set of linear
inequalities (see \cref{eq-safety-predicate}), we cannot use the form of
Farkas' lemma given in \cref{lem-farkas-linear} {(or the even more powerful Handelman's Theorem~\cite{handelman1988representing} used for polynomial constraints)} to carry out the above
translation.  We get around this by exploiting the fact that our entropy objective
is convex, and take recourse to general convex duality.  In particular, we use
the following result from the well-known duality theory of maximum-entropy
programs.
\begin{proposition}[\textbf{see, e.g., \citet[p.~228]{BV04}}]\label{prop-entropy-dual}
  Let \(Q \in \R^{\ell \times n}\) and \(p \in \R^\ell\) be fixed.  Let
  \(\D \defeq \inb{x | x \geq 0} \subseteq \R^n\) be the domain of the concave function
  \(H(x) \defeq -\sum\limits_{i=1}^nx_i\log x_i\).  Suppose that there is an
  \(x_{0} \in \mathcal{D}\) satisfying \(Qx_0 \leq p\) and
  \(\vec{1}^{\top}x = 1\). Then the optimization program
  \begin{equation}
    \label{eq-ent-cons}
    \begin{aligned}
      \sup\limits_{x \in \D} H(x)
      \textup{ subject to }  Q x \leq p \textup{ and } \vec{1}^{\top}x = 1
    \end{aligned}
  \end{equation}
  has the same optimal value as the program
  \begin{equation}
    \label{eq-ent-dual}
    \inf\limits_{\lambda \geq 0} \quad g(\lambda) \defeq p^{\top}\lambda + \log \inp{\sum\limits_{i=1}^n\exp(-\lambda^{\top}Q_i)}.
  \end{equation}
  Here \(Q_i\) denotes \(i\)th column of \(Q\).  Further, \(g\) is a convex
  function of \(\lambda\).
\end{proposition}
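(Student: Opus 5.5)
The plan is to derive the dual by Lagrangian duality. I will keep the simplex constraint \(\vec{1}^{\top}x = 1\), \(x \geq 0\) as part of the domain and dualize only the inequalities \(Qx \leq p\). For \(\lambda \geq 0\) define the Lagrangian
\[
L(x,\lambda) \defeq H(x) - \lambda^{\top}(Qx - p) = p^{\top}\lambda + \sum_{i=1}^n \inp{-x_i\log x_i - x_i\lambda^{\top}Q_i}.
\]
The dual function is \(g(\lambda) = \sup_{x \in \Delta} L(x,\lambda)\), where \(\Delta\) is the probability simplex in \(\R^n\).

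The first step is to compute this supremum in closed form by the Gibbs variational principle. Set \(w_i = -\lambda^{\top}Q_i\) and \(Z = \sum_i e^{w_i}\). For \(x \in \Delta\),
\[
\sum_i x_i(w_i - \log x_i) = \log Z - D_{KL}(x \Vert q), \qquad q_i = e^{w_i}/Z.
\]
Since \(D_{KL} \geq 0\), with equality exactly at \(x = q\), the supremum equals \(\log Z\). This gives \(g(\lambda) = p^{\top}\lambda + \log\sum_i \exp(-\lambda^{\top}Q_i)\), which is the formula in \cref{eq-ent-dual}.

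The second step is convexity of \(g\). The map \(\lambda \mapsto p^{\top}\lambda\) is affine, and log-sum-exp composed with the affine map \(\lambda \mapsto -Q^{\top}\lambda\) is convex. (Alternatively, \(g\) is a pointwise supremum of functions affine in \(\lambda\).)

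The third step is weak duality: for any feasible \(x\) and any \(\lambda \geq 0\), we have \(H(x) \leq L(x,\lambda) \leq g(\lambda)\). Hence the primal value is at most the dual value.

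The main obstacle is strong duality, i.e.\ showing there is no gap. The primal maximizes a concave function over a convex set, and all constraints are affine. The hypothesis only gives a feasible \(x_0\), not a strictly feasible one. However, Slater's condition in its refined form requires strict feasibility only for non-affine inequality constraints. Our constraints are all affine, and \(\mathrm{dom}\, H\) can be taken as the closed orthant with \(H\) continuous there. So I will invoke the refined Slater theorem (Boyd--Vandenberghe \S5.2.3) to conclude equality of optimal values.

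Applying this requires a little care at the boundary \(x_i = 0\), since relative-interior arguments use \(\mathrm{dom}\, H\). I would handle this in one of two ways. The first is to note that the feasible set \(\{x \in \Delta : Qx \leq p\}\) is a polytope and \(H\) is continuous on it, so the primal optimum is attained. The second is to argue directly via a minimax theorem (Sion's), since \(L\) is concave and continuous in \(x\) on the compact simplex and affine in \(\lambda\) on the convex set \(\lambda \geq 0\):
\[
\sup_{x \in \Delta}\inf_{\lambda \geq 0} L = \inf_{\lambda \geq 0}\sup_{x \in \Delta} L.
\]
The left side equals the primal value, because \(\inf_{\lambda \geq 0} L\) is \(H(x)\) when \(Qx \leq p\) and \(-\infty\) otherwise. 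Feasibility of \(x_0\) makes this value finite. The right side is \(\inf_{\lambda \geq 0} g(\lambda)\). This Sion route avoids any constraint qualification subtleties, so I would use it as the main argument.
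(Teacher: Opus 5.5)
Your proof is correct, but it obtains strong duality by a different route from the paper. The paper recasts the problem as minimizing $-H$ and invokes Rockafellar's theory of convex bifunctions. The bifunction is closed and proper, and the set $\{x : F(0,x) \le 0\}$ (the feasible region) is non-empty and bounded. Theorems 30.3 and 30.4(g) of Rockafellar then give normality, i.e.\ no duality gap. The explicit dual \cref{eq-ent-dual} is not recomputed there; it is identified with the Boyd--Vandenberghe derivation, in which $\vec{1}^{\top}x = 1$ is also dualized and its multiplier eliminated afterwards.

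You instead dualize only $Qx \le p$ and keep the simplex $\Delta$ as the domain, so the Gibbs variational identity produces $g$ in one step. You then close the gap with Sion's minimax theorem applied to $-L$, with the compact convex set $\Delta$ on the minimization side: $-L(\cdot,\lambda)$ is convex and continuous on $\Delta$ under $0\log 0 = 0$, and $-L(x,\cdot)$ is affine. Both arguments rest on the same fact: boundedness of the simplex substitutes for a strictly positive feasible point. Your version is self-contained and elementary. The paper's citation covers any closed proper convex program with a non-empty bounded level set, but at the price of heavier machinery and an identification of Rockafellar's abstract dual with \cref{eq-ent-dual} that is only sketched.

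You should delete the detour through the refined Slater condition and your ``first way''. The refined Slater condition needs a feasible point in the relative interior of $\mathrm{dom}\,H = \{x \ge 0\}$, i.e.\ one with all entries strictly positive, which is precisely the hypothesis the statement drops. Primal attainment on its own does not exclude a duality gap. The Sion argument alone suffices.
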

\begin{remark}
  The above result is usually stated under the additional hypothesis that the feasible point \(x_0\) has all its entries {\em strictly} positive (this is the version given, e.g., by \citet{BV04}), in which form it is proved using Slater's
  constraint qualification.  
  
  {However, this additional hypothesis is not required for
    maximum-entropy programs with linear constraints, like the one in
    \cref{eq-ent-cons} above. In this case, it is sufficient that the primal
    program has at least one feasible solution, and this can be shown for
    instance using the general duality results of \citet[Theorem
    30.4]{rockafellar70}. We believe this result to be folklore, but for the
    sake of completeness, and due to its independent interest, we provide a
    proof of the above proposition in
    Appendix~\ref{app:proofs}.}\end{remark}

\subsection{
A Sound and (Conditionally) Relatively Complete Procedure} 
We now have all the technical ingredients needed to develop our algorithmic procedure. We combine the above strong duality result with ideas from invariant synthesis to obtain a sound and (conditionally) relatively complete
synthesis algorithm for \initunfair{} (in the following, we use the notation
introduced in its definition in \cref{prob-init-unfair}) {that we call \textsc{AlgoEnt}}.  We frame the problem as that of finding the smallest possible threshold \(\gamma\) for which we can find a
strategy \(\pi\) that ensures that the predicate 
\begin{equation}
\predicate_{\gamma}(x) \defeq H(x) \leq \gamma
\end{equation}
holds for \(x = M^{\pi}(\mu_0, t)\) for all \(t \geq K\). Let \(n\) be the number of
states in the MDP \(M\).  \\
{\bf Step 1.} We start by choosing a positive integer parameter \(m\), the
\emph{template size} of the affine invariant \(I\), and consider the affine
invariant \(I(x)\) described above in \cref{eq-invariant}. We create the
following set of \emph{template variables}: an \(m \times n\) matrix \(A\) along with
an \(m\) dimensional vector \(b\) of real valued variables (in order to encode
the invariant \(I(x)\)), and real-valued variables \(\pi(s, a)\) for each state
action pair \((s, a)\) in the MDP. We collectively denote these latter
variables, with a slight abuse of notation, by \(\pi\), and they encode the
memoryless strategy \(\pi\) that we are looking for. \\ {\bf Step 2.} We define
\(\conc(A, b, \pi)\) to be the set of all the constraints on \(I\) and \(\pi\) given
by the initialization and induction constraints
(\cref{item-inv-begin,item-inv-induct}) above, along with the following set of
consistency constraints for \(\pi\): for all states \(s\) of \(M\),
\(\pi(s, a) \geq 0\) for all actions \(a \in \act(s)\), and
    \(\sum_{a \in \act(s)}\pi(s, a) = 1.\)\\
{\bf Step 3.} The constraints arising in \(\conc\inp{A, b, \pi}\) from the
inductive constraint (\cref{item-inv-induct}) involve a universal quantification
(\cref{eq-impl-linear}), which by the use of Farkas' lemma, can be converted
into an equivalent feasibility question about a system of inequalities (with
additional variables).  We carry out this transformation for each universally quantified constraint of the form given by \cref{eq-impl-linear},
and henceforth describe the equivalent form of \(\conc\inp{A, b, \pi}\) so obtained
also by the same notation.  After this sequence of transformations,
\(\conc\inp{A, b, \pi}\) is a conjunction of polynomial inequalities in the variables
\(A\) , \(b\), \(\pi\) (which appear as free variables) and a collection of
auxiliary variables (from the translation described above), so that
all auxiliary variables are existentially quantified.\\
{\bf Step 4.} With this setup, the problem of finding the minimum threshold \(\gamma\) for which
\(\predicate_{\gamma}\) can be forced can be encoded as
\begin{equation}
  \label{eq-transformed-minmax}
  \inf_{A, b, \pi : \conc(A, b, \pi) \text{ holds}}
  \inb{
    \begin{array}{rl}
      \sup\limits_{x \st x \geq 0} & H(x)\\
      \text{subj.~to} & Ax \leq b,
                         \vec{1}^{\top}x = 1
    \end{array}
  }.
\end{equation}
Again, the min-max and non-linear nature of this problem is not easily handled
by standard solvers, so we attempt to convert the problem into a single
optimization problem, via the use of duality for maximum entropy
(\cref{prop-entropy-dual}).  We note that when \(A, b, \pi\) satisfy
\(\conc\inp{A, b, \pi}\), the feasibility constraint required in
\cref{prop-entropy-dual} holds (due to \cref{item-inv-begin}).  Thus, the
problem above is equivalent to
\begin{equation}
  \label{eq-transformed-final}
  \inf\limits_{A, b, \pi : \conc(A, b, \pi) \text{ holds}}
  \inf\limits_{\lambda \geq 0} g(\lambda),
\end{equation}
where
\(g(\lambda) \defeq b^{\top}\lambda + \log \inp{\sum_{i=1}^n \exp\inp{-\lambda^{\top}A_i}},\) with
\(A_i\) denoting the \(i\)th column of \(A\), is a convex function of
\(\lambda\). Thus, \textsc{AlgoENT} resulting from the above four steps satisfies:
\begin{theorem}
  Given MDP $M$, initial distribution $\mu_0$, and warm-up parameter
  $K$,\begin{itemize}
    \item (soundness) if \textsc{AlgoENT} returns  $\pi$, invariant $I$, threshold $\gamma$, then $\forall t\geq K$, $\predicate_\gamma(M^\pi(\mu_0,t))$ holds.
    \item (conditional relative completeness) for a rational threshold $\gamma$, and
      a positive integer $m$, if there exists a memoryless strategy $\pi$ and an
      affine invariant $I$ of template size $m$ which satisfies the three
      conditions (items 1 to 3) with respect to $\predicate_{\gamma'}$ for some
      $\gamma' <\gamma$ then \textsc{AlgoENT} will detect this, assuming decidability of
      the theory of reals with exponentiation.
\end{itemize}
\end{theorem}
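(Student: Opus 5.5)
Soundness follows from the chain of equivalences in Steps 1--4 together with the induction argument, and conditional relative completeness follows by writing the final program as a first-order sentence over $(\R,+,\cdot,\exp)$.

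\emph{Soundness.} Suppose \textsc{AlgoENT} returns $(A,b,\pi)$, a feasible $\lambda\geq 0$, and a threshold $\gamma \geq g(\lambda)$, with $\conc(A,b,\pi)$ holding together with its existentially witnessed auxiliary variables.
\begin{enumerate}
\item The Farkas direction we need is the trivial one. If $y\geq 0$, $Q^\top y=c$ and $p^\top y\leq d$, then $Qx\leq p$ implies $c^\top x = y^\top Qx\leq y^\top p\leq d$. So every induction constraint \cref{item-inv-induct} really holds, and initialization \cref{item-inv-begin} holds by construction.
\item For invariant $\implies$ safety, the feasibility hypothesis of \cref{prop-entropy-dual} is met by $x_0=M^\pi(\mu_0,K)$. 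In fact only weak duality is needed. For any $x\geq0$ with $Ax\leq b$ and $\vec 1^\top x=1$, we get $H(x)\leq g(\lambda)$. This follows from the Gibbs inequality: with $q_i\propto \exp(-\lambda^\top A_i)$,
\[
H(x)\leq -\sum_i x_i\log q_i=\lambda^\top Ax+\log\sum_i \exp(-\lambda^\top A_i)\leq g(\lambda).
\]
\item Hence $I(x)\implies H(x)\leq\gamma$. By induction on $t\geq K$, $I(M^\pi(\mu_0,t))$ holds, so $\predicate_\gamma(M^\pi(\mu_0,t))$ holds for all $t\geq K$.
\end{enumerate}

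\emph{Relative completeness.} Let $\pi,I=(A,b)$ of template size $m$ satisfy items 1--3 for $\predicate_{\gamma'}$ with $\gamma'<\gamma$.
\begin{enumerate}
\item Initialization gives feasibility, so \cref{lem-farkas-linear} applies. Each induction implication therefore admits Farkas multipliers, and $\conc(A,b,\pi)$ holds with suitable auxiliary witnesses.
\item Item 3 says $\sup\{H(x): I(x)\}\leq\gamma'$. By the strong duality of \cref{prop-entropy-dual}, $\inf_{\lambda\geq0}g(\lambda)\leq\gamma'<\gamma$. So some $\lambda\geq 0$ has $g(\lambda)<\gamma$. This is the reason for the strict gap: the infimum need not be attained, but a strict inequality gives a witness.
\item Consequently the sentence
\[
\exists A,b,\pi,y,\lambda\;\bigl(\conc(A,b,\pi)\wedge\lambda\geq0\wedge b^\top\lambda+\log\textstyle\sum_i\exp(-\lambda^\top A_i)<\gamma\bigr)
\]
is true. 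We rewrite the $\log$ as $\sum_i\exp(-\lambda^\top A_i)<\exp(\gamma-b^\top\lambda)$, so the sentence lies in the language of $(\R,+,\cdot,<,\exp)$ with rational parameters.
\item \textsc{AlgoENT} decides this sentence using the assumed decision procedure. It can also enumerate witnesses, for instance by searching rational boxes refined through the decision oracle. It thus detects the existence of such $\pi,I$ at threshold $\gamma$, and by soundness any returned witness is correct.
\end{enumerate}

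The main obstacle is formulating the final condition as a decidable first-order sentence. Two points need care here. The first is that \cref{prop-entropy-dual} is used in its strong form, with only feasibility and no Slater condition, which is the remark's extension proved in the appendix. The second is the non-attainment issue handled by $\gamma'<\gamma$. The remaining steps are direct consequences of earlier results.
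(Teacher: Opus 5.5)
Your proposal is correct. The completeness half is essentially the paper's argument. Farkas' lemma applies because initialization supplies a feasible point, so the induction implications become $\conc(A,b,\pi)$. The feasibility-only strong duality of \cref{prop-entropy-dual} then gives $\inf_{\lambda\geq 0} g(\lambda)\leq\gamma'<\gamma$, and the strict gap yields a concrete $\lambda$ with $g(\lambda)<\gamma$. Finally the condition is rewritten as the existential sentence \cref{eq-etr-exp-sentence} over $(\R,+,\cdot,\exp)$. Your explicit handling of the possibly unattained infimum is a welcome clarification of what the paper leaves implicit.

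The soundness half takes a slightly different and more economical route. The paper obtains $\forall x\,(I(x)\implies H(x)\leq\gamma)$ from the equivalence of \cref{eq-transformed-final} and \cref{eq-transformed-minmax}, which was established via strong duality. You instead observe that only the trivial direction of Farkas ($\dualc\implies\Phi$) and weak duality are needed, and you prove weak duality in one line from Gibbs' inequality, $H(x)\leq\lambda^{\top}Ax+\log\sum_i\exp(-\lambda^{\top}A_i)\leq g(\lambda)$. This makes soundness independent of the appendix's Rockafellar argument and of any constraint qualification. It also shows that strong duality in its Slater-free form is needed only for completeness. The paper's route is shorter to state because it reuses the equivalence already set up in Step~4.

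One small caveat concerns your aside about enumerating witnesses by refining rational boxes. It is not needed, since the theorem only claims detection. As stated, it would in general yield approximations rather than exact witnesses, because $\conc$ contains equality constraints. Your closing appeal to soundness therefore only covers exactly computed witnesses.
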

\begin{proof}
  \textbf{Soundness.} Suppose that we obtain values for \(A, b, \pi\) and
  \(\lambda\) which satisfy the constraints in the program in
  \cref{eq-transformed-final} and for which \(g(\lambda) \leq \gamma\).  Then, it follows from
  the equivalence of \cref{eq-transformed-final} and
  \cref{eq-transformed-minmax} that the invariant \(I\) described by \(A\) and
  \(b\), in addition to satisfying the initialization and inductive constraints
  (\cref{item-inv-begin,item-inv-induct}), also implies the following version of
  the safety constraint (\cref{item-inv-safe}):
\(\forall x, (I(x) \implies H(x) \leq \gamma).\)
By the discussion at the beginning of this section, this therefore implies that
\(\pi\) satisfies the requirement that \(M^{\pi}(\mu_0, t)\) has entropy at most \(\gamma\) at all times
\(t \geq K\).\\
\textbf{(Conditional) Relative Completeness.} Suppose that we are given a
rational threshold \(\gamma\), and also that for some positive integer \(m\), there
exists an affine invariant \(I\) of template size \(m\) which satisfies the
three conditions (\cref{item-inv-begin,item-inv-induct,item-inv-safe}) with
respect to \(\predicate_{\gamma'}\) for some \(\gamma' < \gamma\).  In order to prove relative
completeness, we need to give a procedure that detects that such an invariant
exists.  We now show that assuming a conjecture from transcendental number
theory, such a procedure exists.
We notice that the formulation in \cref{eq-transformed-final} implies
that such an invariant exists if and only if the optimization problem defined in
\cref{eq-transformed-final} with template size \(m\) has optimal value less than
\(\gamma\).   Given the form of the optimization problem, this occurs if and only if
there exist \(A, b, \pi\) and \(\lambda\) satisfying
\begin{equation}
 \label{eq-etr-exp-sentence}
 \begin{gathered}
   \lambda \geq 0;  \conc(A, b, \pi) \text{ holds}, \text { and}\\
   \sum_{i=1}^{n}\exp\inp{-\lambda^{\top}A_i} < \exp(\gamma - b^T\lambda).
 \end{gathered}
\end{equation}
Recall that \(\conc(A, b, \pi)\) is a system of polynomial inequalities. The
sentence in \cref{eq-etr-exp-sentence} is, therefore, a sentence in the first
order language of real closed fields augmented with the exponential function.
Unfortunately, in contrast to the case of the existential theory of reals, which
is known to be decidable in PSPACE~\cite{Canny88}, we are not aware of any
unconditional decidability results for this augmented theory. However, it was
shown by \citet{MacinW96} that assuming a famous conjecture in transcendental
number theory known as the real version of Schanuel's conjecture, the first
order theory of reals augmented with the exponential function is decidable (see
\citet{Wilki97} for an exposition of this result and the underlying conjecture).
Since the existential theory of reals with the exponential function is a
fragment of this theory, we obtain that the sentence in
\cref{eq-etr-exp-sentence} is also decidable assuming the same conjecture.
\end{proof}

We describe two more salient features of  \textsc{AlgoEnt}. The first is regarding the complexity of verifying whether the answers reported by the algorithm are indeed correct.

{\bf Verification.} Suppose we are given some arbitrary values of \(A, b\)
and \(\pi\), and it is claimed that these describe an invariant \(I\) and a
strategy \(\pi\) for which the initialization, inductive and safety constraints
(\cref{item-inv-begin,item-inv-induct,item-inv-safe}) are all satisfied, for a
certain threshold \(\gamma\).  We can show that this claim can be verified in polynomial time. In other words, verifying the answer given out by \textsc{AlgoEnt} is easy, up to an additive approximation factor. Formally, 

\begin{proposition}[\textbf{Approximately verifying a candidate invariant}]\label{prop:verif}
  Given the MDP \(M\), initial distribution \(\mu_{0}\), {warm-up parameter \(K\), } a candidate description $(A,b)$ of the invariant $I$ (using
  \cref{eq-invariant}), a strategy $\pi$, a threshold $\gamma$ and additive
  approximation parameter $\epsilon$, all specified in terms of rational numbers
  represented as ratios of binary integers, we can check in time polynomial in
  \(K\) and the representation lengths of the input whether the following conditions
  hold:
  \begin{enumerate}
  \item \textbf{(Initialization)} \(I(M^{\pi}(\mu_0, K))\)
    holds;
  \item \textbf{(Induction)} \(\forall x, I(x) \implies I(M^{\pi}(x,
    1))\) ; and
  \item \textbf{(Invariant implies safety)}
    \(\forall x, I({x}) \implies (H(x) \leq \gamma + \epsilon)\).
  \end{enumerate}
\end{proposition}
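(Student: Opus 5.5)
We check the three conditions separately; the first two are exact rational computations, and only the third needs the approximation parameter \(\epsilon\).

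\textbf{Initialization.} Compute \(M^{\pi}(\mu_0,K)\) exactly by applying the transition matrix \(P_\pi\) of the memoryless strategy \(\pi\), with entries \(P_\pi(s,s')=\sum_{a}\pi(s,a)\delta(s,a)(s')\), \(K\) times to \(\mu_0\). The bit-length of the iterates grows at most additively in the bit-length of \(P_\pi\) at each step, so this takes time polynomial in \(K\) and the input size. We then check \(Ax\le b\), \(x\ge 0\) and \(\vec 1^\top x=1\) directly.

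\textbf{Induction.} We have \(M^{\pi}(x,1)=P_\pi^\top x\), and it is linear in \(x\). Nonnegativity and total mass are preserved automatically for any distribution \(x\). So we must check, for each row \(a_j\) of \(A\), that \(\max\{a_j^\top P_\pi^\top x : x\in I\}\le b_j\). Each of these is a rational linear program, solvable exactly in polynomial time by the ellipsoid or interior-point method. If \(I\) is empty the induction and safety conditions hold vacuously, and emptiness is also an LP check.

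\textbf{Safety.} This is the main step. We need to decide, up to \(\epsilon\), whether \(h^\ast\defeq \max\{H(x): x\in I\}\le\gamma\), in the sense that we accept when \(h^\ast\le\gamma\) and reject only when \(h^\ast>\gamma+\epsilon\) (more precisely, answer so that the answer is correct for a threshold within \([\gamma,\gamma+\epsilon]\)). \(H\) is concave and \(I\) is a polytope given by rational inequalities, so this is a convex optimization problem. My plan is to invoke the ellipsoid method for convex minimization of \(-H\) over a polytope, with a first-order oracle. This needs three ingredients.

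\begin{itemize}
\item The feasible set lies in the simplex, hence in a ball of radius \(1\).
\item Separation over \(I\) is trivial, since we can check each inequality directly.
\item The range of \(H\) is \([0,\log n]\).
\end{itemize}

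The technical obstacles are that \(H\) is not Lipschitz at the boundary (its gradient \(-\log x_i-1\) blows up as \(x_i\to 0\)), and that \(I\) may be lower-dimensional, having no interior.

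For the first obstacle I would smooth the problem by mixing with a reference point. Find by LP a point \(x_c\) in the relative interior of \(I\) with polynomial bit size, and set \(x_\eta=(1-\eta)x+\eta x_c\). Concavity gives \(H(x_\eta)\ge(1-\eta)H(x)\), and \(|H(x_\eta)-H(x)|\le O(\eta\log(n/\eta))\) by uniform continuity of entropy. With \(\eta=\mathrm{poly}(\epsilon/n)\), optimizing over the shrunken set \((1-\eta)I+\eta x_c\) loses only \(\epsilon/3\). On that set the coordinates of \(x\) lying in the support of \(x_c\) are bounded below by \(\eta\) times a positive rational of polynomial bit size. Coordinates forced to zero on \(I\) can be detected by LP and dropped. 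The gradient is therefore polynomially bounded, so \(H\) is Lipschitz there.

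For the second obstacle, restrict to the affine hull of \(I\), computed by LP, which has polynomial description, and work within it. The ellipsoid method then yields \(\tilde h\) with \(|\tilde h-h^\ast|\le 2\epsilon/3\) in time polynomial in the input and \(\log(1/\epsilon)\). Logarithms and gradients are computed to polynomially many bits.

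We answer \yes{} iff \(\tilde h\le\gamma+\epsilon/3\). A simpler alternative is the dual of \cref{prop-entropy-dual}: minimize the convex function \(g(\lambda)\) to accuracy \(\epsilon\). However, bounding the norm of the optimal \(\lambda\) is delicate when the primal optimum is on the boundary, so I would use the primal route described above.
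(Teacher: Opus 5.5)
Your proposal is correct and follows essentially the paper's route. Initialization and induction are exact LP checks; your per-row maximizations of \(a_j^\top P_\pi^\top x\) over \(I\) are the LP-dual form of the paper's Farkas feasibility check for \(\conc(A,b,\pi)\). Safety is an approximate maximization of the concave \(H\) over a polytope by the ellipsoid method, and your shrinking toward a relative-interior point and restriction to the affine hull spell out hypotheses (finite gradients, nonempty interior) that the paper leaves implicit in its citation of Gr\"otschel--Lov\'asz--Schrijver.

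One small fix: with \(|\tilde h-h^\ast|\le 2\epsilon/3\), the rule ``\yes{} iff \(\tilde h\le\gamma+\epsilon/3\)'' can reject instances with \(h^\ast\le\gamma\). Use accuracy \(\epsilon/2\) and threshold \(\gamma+\epsilon/2\) instead.
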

\begin{proof} We first recall that for \emph{fixed} \(A, b \) and \(\pi\),
  \(\conc(A, b, \pi)\) is a feasibility question for a system of (non-strict)
  linear equalities in the auxiliary variables.  {Thus, whether \(\conc\inp{A, b, \pi}\) is satisfied can be checked by a linear
    program solver in time polynomial in \(K\) and the representation length of
    the input: the polynomial dependence on \(K\) arises because the
    representation length of the distribution \(M^{\pi}(\mu_0, K)\) can be at most
    polynomial in \(K\). }
It remains to check whether the inner maximization problem in
  \cref{eq-transformed-minmax} has optimal value \(\gamma\).  To do this, we note
  that this problem involves the maximization of the \emph{concave} entropy
  function over the (compact) probability simplex, with a further set of linear
  constraints.  Under these conditions, a \(\pm\epsilon\) additive approximation to the
  optimal value of this problem can be obtained in time that is polynomial in
  \(\log(1/\epsilon)\), via a standard result of \citet[Theorem 4.3.13]{GrotsLS93},
  based on the shallow cut ellipsoid method of \citet{YN76} {(see also Section 13.3 of \cite{vishnoiAlgorithmsConvexOptimization2021} for
    an explicit statement). } Thus, this procedure lets us certify whether the claim holds with the slightly
  relaxed threshold \(\gamma + \epsilon\) (in time that grows polynomially in
  \(\log(1/\epsilon)\)).
\end{proof}

{\bf Numerical Viability.} Finally, we address the implementability of the above procedure. The fact that for fixed \(A, b\) and \(\pi\), the
underlying optimization problem is convex suggests the use of non-linear
solvers (such as Gurobi~\cite{gurobi}) that are optimized for convex problems.  These solvers
typically require that the problem should be in the following form: optimize a
function subject to a system of inequalities.  The crucial point is that while the min-max formulation in
\cref{eq-transformed-minmax} is not of such a form, the final formulation that we
obtain in \cref{eq-transformed-final} is.  This allows us to use state-of-the-art solvers on our final formulation. In Section~\ref{sec:experiments}, we
report experimental results testing out the approach of feeding the problem in
\cref{eq-transformed-final} to such a solver.

\subsection{A sufficient condition for affine invariants}
We now demonstrate a natural situation in which affine invariants indeed suffice
to certify a low-entropy condition (despite the non-linearity of the latter).
Recall that a Markov chain is said to be \emph{ergodic} if it is irreducible and
aperiodic, and in that case it has a unique stationary distribution to which it
converges~(see, e.g., \cite[Theorem 4.9]{LP2017}). We show that if the MDP \(M\)
admits a memoryless policy \(\pi\) for which
\begin{enumerate}
\item the induced Markov chain \(M^{\pi}\) is ergodic, and
\item the entropy \(H(\mu^{*})\) of the unique stationary distribution
  \(\mu^{*}\) of \(M^{\pi}\) is below the goal threshold, i.e., \(H(\mu^{*}) < \gamma\),
\end{enumerate}
then there exists a warm-up parameter \(K\) and an affine invariant \(I\) of
size polynomial in the number of states of \(M\) that certifies the low-entropy
condition for \(\pi\), i.e., that \(H(M^{\pi}(\mu_0, t)) < \gamma\) for all
\(t \geq K\). The intuition is that under the above conditions, the MDP must
converge to the stationary distribution, so that after a certain number of steps
(depending on the mixing time of the induced Markov chain), we will get within a
small ball around the stationary distribution. Thus, using the continuity of the
entropy, we can find a convex linear invariant around this stationary
distribution that witnesses the safety condition. One subtlety in this analysis is that
the linear invariant capturing such a ball does not \emph{a priori} need to be
of small size.  For example, the natural way to define such a ball is in terms
of the total-variation (or, equivalently, \(\ell_1\)) norm, in which case the
resulting invariant would be of size exponential in the number of states in the
MDP.  However, by looking at a ball defined according to a carefully chosen
weighted version of the $\ell_\infty$-norm we can ensure that the invariant is of size
at most polynomial in the size of the MDP. We formalize this argument below.
\begin{proposition}
  Let $M$ be an MDP on a finite state space \(S\) and let $\pi$ be a memoryless
  policy such that the induced Markov chain $M^\pi$ is
  ergodic. Let $\mu^* \in \Delta(S)$ be the unique stationary distribution of \(M^{\pi}\).
Further, assume that the safe set is of the form $H(x) \le \gamma$, for a
  \(\gamma\) satisfying $H(\mu^*) < \gamma$. Then for any starting distribution \(\mu_0\), there exists a warm-up parameter
  $K \in \mathbb{N}$ and an affine invariant $I$ defined by \(O(|S|)\) linear inequalities
  such that:
  \begin{enumerate}
  \item \textbf{Initialization:} \(M^{\pi}(\mu_0, K) \in I\).
  \item \textbf{Induction:} \(\forall x\, I(x) \implies I(M^\pi(x, 1))\).
  \item \textbf{Safety:} \(\forall x\, I(x) \implies H(x) \le \gamma\).
\end{enumerate}
Thus, there exists an affine invariant of the form sought for by our algorithm. \end{proposition}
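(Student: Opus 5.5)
The plan is to build the invariant as a small box around the stationary distribution \(\mu^*\), with the box measured in coordinates of the chain propagated forward a few steps, so that the one-step inductive condition follows from a contraction of the chain. Write \(P\) for the transition matrix of the ergodic chain \(M^{\pi}\), acting on row vectors, so that \(M^{\pi}(x,1)=xP\) and \(\mu^*P=\mu^*\).

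\textbf{Step 1 (safety radius).} The entropy \(H\) is uniformly continuous on the compact simplex \(\Delta(S)\) and \(H(\mu^*)<\gamma\). Hence there is an \(r>0\) such that every \(x\in\Delta(S)\) with \(\norm[\infty]{x-\mu^*}\le r\) satisfies \(H(x)\le\gamma\).

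\textbf{Step 2 (a contracting power).} Ergodicity gives convergence to \(\mu^*\) uniformly over initial distributions, so the Dobrushin coefficient of \(P^{L}\) tends to \(0\) as \(L\to\infty\). Fix \(L\) such that \(\norm[1]{\delta P^{L}}\le \frac{1}{2|S|}\norm[1]{\delta}\) for every zero-sum vector \(\delta\). Then for such \(\delta\),
\[
\norm[\infty]{\delta P^L}\le\norm[1]{\delta P^L}\le \tfrac{1}{2|S|}\norm[1]{\delta}\le \tfrac12\norm[\infty]{\delta}.
\]

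\textbf{Step 3 (the invariant).} Define \(I(x)\) by \(\vec 1^\top x=1\), \(x\ge 0\), and, for every \(0\le j<L\) and every \(s\in S\),
\[
-r\le (xP^j)(s)-\mu^*(s)\le r .
\]
Since \(P^j\) and \(\mu^*\) are fixed, these are affine inequalities in \(x\), so \(I\) has exactly the form in \cref{eq-invariant}. The \(j=0\) constraints give safety by Step 1.

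For induction, suppose \(I(x)\) holds. Then \(xP\) is again a distribution. For \(j<L-1\), the \(j\)-th constraint on \(xP\) is the \((j+1)\)-th constraint on \(x\), which already holds. For \(j=L-1\), we need \(\norm[\infty]{xP^{L}-\mu^*}\le r\). Since \(xP^L-\mu^*=(x-\mu^*)P^L\) and \(x-\mu^*\) is zero-sum, Step 2 bounds this by \(\frac12\norm[\infty]{x-\mu^*}\le r/2\). This is where the equal radii across all \(j\) matter: they make the shift-down step trivial.

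\textbf{Step 4 (initialization).} Choose \(K\) large enough that \(\norm[\infty]{(\mu_0P^K-\mu^*)P^j}\le \norm[1]{\mu_0P^K-\mu^*}\le r\) for all \(j\). Such a \(K\) exists by convergence to \(\mu^*\), and the first inequality holds because \(P^j\) is stochastic and so contracts \(\ell_1\).

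\textbf{Main obstacle: the size bound.} The construction above uses \(2L|S|\) inequalities, so it is polynomial in \(|S|\) only up to the mixing-dependent factor \(L\). To get the claimed \(O(|S|)\) bound I would first try to replace the \(L\) propagated copies by a single weighted \(\ell_\infty\) box \(|x(s)-\mu^*(s)|\le r\,w(s)\) for a carefully chosen weight vector \(w\). One candidate is \(w=\sum_{j<L}\theta^{j}\,\cdot\) (row sums of \(|P^j|\)-type bounds); another is \(w\) taken from a Perron-type eigenvector of the entrywise absolute value of \(P\) restricted to the zero-sum subspace. Either choice would aim to make \(P\) itself a contraction in that weighted norm. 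If this fails in general, I would fall back on the \(L\)-copy invariant and note that it still certifies safety, at the cost of an \(L\)-dependent size.
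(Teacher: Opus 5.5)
\textbf{The gap is the size bound.} Steps 1--4 are correct and give a valid invariant, but it has $2L\abs{S}+\abs{S}+2$ constraints. Here $L$ depends on the mixing rate of $M^{\pi}$, not on $\abs{S}$ alone. For instance, in the two-state chain that switches state with probability $\epsilon$, the Dobrushin coefficient of $P^L$ is $\abs{1-2\epsilon}^L$, so your $L$ must grow like $1/\epsilon$. So the $O(\abs{S})$ bound, which is exactly what the statement asserts, is the part left open, and neither proposed repair is carried out.

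\textbf{The missing idea.} Induction needs only \emph{non-expansion} of $P$, not contraction, and the right weight is $w=\mu^*$ itself. Take
\[
I(x):\quad x\ge 0,\quad \vec{1}^{\top}x=1,\quad \abs{x(s)-\mu^*(s)}\le r\,\mu^*(s)\ \text{for all } s\in S .
\]
Nonnegativity of $P$ and stationarity $\mu^*P=\mu^*$ (used twice) give
\[
\abs{(xP)(s)-\mu^*(s)}=\Bigl|\sum_{t\in S}\bigl(x(t)-\mu^*(t)\bigr)P_{ts}\Bigr|\le\sum_{t\in S}\abs{x(t)-\mu^*(t)}\,P_{ts}\le r\sum_{t\in S}\mu^*(t)P_{ts}=r\,\mu^*(s).
\]
So this single box, with $3\abs{S}+2$ constraints, is mapped into itself, with no propagated copies and no contraction. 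The remaining two conditions follow from your own steps:
\begin{itemize}
\item Safety follows from your Step 1, because $\mu^*(s)\le 1$ places the box inside your $\ell_\infty$-ball of radius $r$.
\item Initialization follows as in your Step 4, choosing $K$ with $\norm[1]{\mu_0P^K-\mu^*}\le r\min_s\mu^*(s)$. This threshold is positive because irreducibility forces every $\mu^*(s)>0$.
\end{itemize}
This is the paper's proof. The paper obtains the safety radius from concavity of $-x\log x$ rather than bare continuity, which is immaterial.

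Your Perron-vector candidate was close. Since $P\ge 0$, its entrywise absolute value is $P$ itself, whose left Perron vector is precisely $\mu^*$. However, restricting to the zero-sum subspace discards exactly that vector. Also, aiming for a strict contraction of $P$ in the weighted norm asks for more than the argument needs.
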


\begin{proof}

We first construct a neighborhood around the stationary distribution where the
  entropy is guaranteed to be at most $\gamma$. To do so, we start by observing that
  the entropy function $H$ is continuous on the simplex $\Delta(S)$. Since
  $H(\mu^*) < \gamma$, let $\varepsilon := \gamma - H(\mu^*) > 0.  $ Now, we define
  $ B_\delta := \{x \in \Delta(S) \mid \|x - \mu^*\|_\infty \le \delta\}, $ for a particular
  $\delta > 0$ that we will soon determine. We note also that since \(M^{\pi}\) is ergodic, all entries of \(\mu^{*}\) are
  strictly positive.  One constraint on our eventual choice of \(\delta\) is that
  \(\delta\) has to be smaller than the smallest entry of \(\mu^{*}\): this constraint
  on \(\delta\) ensures that all \(x \in B_{\delta}\) have positive entries.

  Let \(n = |S|\) be the number of states, and for ease of notation, we index
  the states as \(\inb{1, 2, \dots, n}\).  We define
  $f: \R_{\geq 0} \rightarrow \R$ by \(f(x)=-x\log{x}\) (with
  \(f(0) \defeq 0\)), so that $H(x)-H(\mu^*)=\sum_{i=1}^n (f(x_i)-f(\mu^*_i))$.
  Further, \(f\) is a concave function, so that we have
  \(f(b) - f(a) \leq f'(a)(b-a)\) for all \(a, b > 0\).  We thus have, for each
  coordinate \(i\),
  \begin{equation}
    \label{eq:10}
    f(x_i) - f(\mu_i^{*}) \leq -(1+\log \mu_i^{*})\cdot(x_i - \mu_i^{*}).
  \end{equation}
Now, when \(x \in B_{\delta}\), we have $|x_i - \mu_i^{*}| \le \delta$ for all
  $i$.  Using this in \cref{eq:10}, we thus get
  \begin{equation}
    \label{eq:11}
    H(x) - H(\mu^{*}) \le \delta \sum_{i=1}^n \abs{\log \mu_i^{*} + 1}. \
  \end{equation}
Thus, by choosing
$\delta < \min\inb{\frac{\epsilon}{\sum_{i=1}^n |\log \mu_i^{*} + 1|}, \min_i\mu_{i}^{*}}$, we get that all
\(x \in B_{\delta}\) have positive entries, and also that
\begin{align}\sup_{x\in B_\delta} H(x) \leq H(\mu^{*}) + \epsilon \leq   \gamma
\label{eq:nbd}
\end{align}

We now have to show that there exists an affine invariant \(I\) of small size
such that \(I(x)\) implies that \(x \in B_{\delta}\).  We claim that the following
affine invariant \(I(x)\) works:
\begin{equation}
  I(x) \defeq \left\{\begin{gathered}
    x_i \geq 0 \text{ for all } 1 \leq i \leq n,\text{ and }\\
    \sum_{i=1}^nx_i = 1,\text{ and }\\
    -\delta \cdot \mu_i^{*}\leq x_i - \mu_i^{*}\leq \delta \cdot \mu_i^{*} \text{ for all } 1 \leq i \leq n. \label{eq:13}
  \end{gathered} \right.
\end{equation}
More succinctly, the last condition in the affine invariant in \cref{eq:13} can
be written in the form of a modified \(\ell_{\infty}\) norm inequality as
\begin{equation}
  \label{eq:15}
  \norm[\infty]{(x - \mu^{*})\diag(\mu^{*})^{-1}} \leq \delta,
\end{equation}
where \(\diag(\mu^{*})\) denotes the \(n \times n\) diagonal matrix with the entries of
\(\mu^{*}\), in order, on the diagonal.  Further, if \(I(x)\) is satisfied, then
we also have \(x \in B_{\delta}\), since the \(\mu_i^{*}\) are all at most \(1\).

Note that the affine invariant \(I\) defined in \cref{eq:13} has size at most
\(O(n)\).  We now verify that \(I\) satisfies the three required conditions.

\noindent \textbf{Safety:} As noted above, if \(I(x)\) is satisfied, then
\(x \in B_{\delta}\).  Thus, \cref{eq:nbd} shows that \(I(x)\) implies the safety
condition \(H(x) \leq \gamma\). \noindent \textbf{Induction:} Suppose that \(I(x)\) is satisfied. For ease of
notation let \(M\) denote the transition matrix of the Markov chain \(M^{\pi}\),
and let \(y \defeq M^{\pi}(x, 1) = x M\). Note that since \(x\) is a probability
vector and \(M\) is the transition matrix of a Markov chain, the first two
conditions in the definition of \(I(y)\) (\cref{eq:13}) are automatically
satisfied. We now check the third condition. Define
\(v \defeq y - \mu^{*} = (x - \mu^{*})M\). We then have, for each \(1 \leq i \leq n\),
\begin{align}
  \abs{y_i - \mu_i^{*}} = \abs{v_{i}}
  &= \abs*{\sum_{j = 1}^{n}(x_j - \mu_j^{*})\cdot M_{ji}} 
    \leq \sum_{j = 1}^{n}\abs{x_j - \mu_j^{*}}\cdot M_{ji}\\
  &\leq \delta \sum_{j = 1}^{n}\mu_j^{*}\cdot M_{ji} = \delta\cdot \mu_i^{*},\label{eq:16}
\end{align}
where the inequality in \cref{eq:16} holds because of the last condition in the
specification of \(I(x)\) (\cref{eq:13}).  This verifies that if \(I(x)\) holds,
then so does \(I(M^{\pi}(x, 1))\).

\noindent \textbf{Initialization:} To establish this, we need to show that there
exists a \(K\) so that \(I(M^{\pi}(\mu_{0}, K))\) holds. Choose \(K\) to be the
first time at which
\(\norm[1]{M^{\pi}(\mu_{0}, K) - \mu^{*}} \leq \delta\cdot\min_i\mu_i^{*}\). Since
\(M^{\pi}\) is assumed to be ergodic, such a \(K\) exists. Further, for this
\(K\), \(I(M^{\pi}(\mu_{0}, K))\) holds since
\(v \defeq M^{\pi}(\mu_{0}, K) - \mu^{*}\) satisfies that for each
\(1 \leq i \leq n\),
\[\abs{v_i} \leq \norm[1]{M^{\pi}(\mu_{0}, K) - \mu^{*}} \leq \delta \mu_i^{*}.\qedhere\]
\end{proof}
We remark that if a a lower bound on the rate of convergence of \(M^{\pi}\) is
known, we can use this information to compute an upper bound on the warm-up
parameter \(K\) required in the above proposition, thus removing the dependence
on $K$ in the proposition above.

\section{Entropy, Memory and Randomization}
\label{sec:random-memory}
In this paper we have focused on synthesizing memoryless strategies. An immediate question from a theoretical point of view is whether such strategies always suffice. For reachability and safety objectives that are {\em state-based}, it is well-known
that deterministic and memoryless strategies
suffice~\cite[Section 4.4]{Puterman-book}. 
In contrast, we show next that for our entropy objectives, memoryless strategies
do not suffice, and even strategies that merely keep track of the current time
step can do better. As mentioned in the introduction, this also differentiates our work from path-based entropy~\cite{savas2019entropy,BiondLNW14,DBLP:conf/fsttcs/ChenH14}.

\begin{proposition}
\label{prop:no-state-yes-dist}
There exists an instance of \initunfairDec{} for which there are no memoryless strategies that answer YES, but there exists a memoryful strategy
(which depends only on the parity of the current time) that answers YES. \end{proposition}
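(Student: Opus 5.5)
The plan is to exhibit an explicit small MDP in which the mass needing control must be routed into a deterministic $2$-cycle in phase with the mass already there. This requires knowing the parity of the current time, which no memoryless strategy (even a randomized one) knows. I will take states $s, P, Q$. The states $P$ and $Q$ each have a single action, with $P \to Q$ and $Q \to P$ with probability $1$. The state $s$ has two actions: $a$, going to $s$ with probability $1/2$ and to $P$ with probability $1/2$; and $b$, going to $s$ with probability $1/2$ and to $Q$ with probability $1/2$. The initial distribution is $\mu_0(P)=3/5$, $\mu_0(s)=2/5$, $\mu_0(Q)=0$. Under any strategy the mass at $s$ at time $t$ is $\tfrac{2}{5}2^{-t}$, and a mass $\tfrac{2}{5}2^{-t-1}$ leaves $s$ at step $t$.

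\textbf{Bookkeeping by phase.} I will split the mass on $\{P,Q\}$ into two phase classes. Class $1$ consists of the states $P$ at even times and $Q$ at odd times; it contains the initial mass $3/5$. Class $2$ is the complementary one. Mass entering $P$ at time $t+1$ joins class $1$ iff $t+1$ is even, and mass entering $Q$ at time $t+1$ joins class $1$ iff $t+1$ is odd. Summing the geometric series, the total mass leaving $s$ at even times is $4/15$ and at odd times is $2/15$.

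\textbf{Parity strategy.} Choose $b$ at even times and $a$ at odd times. Then every unit of mass leaving $s$ joins class $1$, so class $2$ stays empty forever. Hence $M^\pi(\mu_0,t)$ is supported on one state of the cycle plus $s$, with the mass at $s$ tending to $0$. It follows that $H(M^\pi(\mu_0,t))$ is at most the binary entropy of $\tfrac{2}{5}2^{-t}$, which tends to $0$.

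\textbf{Memoryless strategies.} A memoryless strategy is a single probability $p$ of playing $a$ at $s$. The class-$2$ mass then converges to
\[
c(p)=p\cdot\tfrac{4}{15}+(1-p)\cdot\tfrac{2}{15}\ \ge\ \tfrac{2}{15},
\]
and the class-$1$ mass converges to $1-c(p)$, which is also at least $3/5$. So the distribution converges along the even and odd subsequences to a two-point distribution whose smaller mass is at least $2/15$. Its entropy is therefore at least $h(2/15)$, where $h$ is the binary entropy.

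\textbf{Choice of the instance and conclusion.} I will fix $\gamma \defeq \tfrac12 h(2/15)$, a rational below it, and pick $K$ large enough that the bound $h(\tfrac25 2^{-t}) \le \gamma$ holds for all $t\ge K$ (this is possible since $h$ is increasing near $0$). The parity strategy then answers YES. For every memoryless $p$ the entropy is eventually close to at least $h(2/15)>\gamma$, so $\max_{t\ge K}$ exceeds $\gamma$ and no memoryless strategy answers YES.

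The main obstacle is making the memoryless lower bound rigorous uniformly in $p$ and for the given $K$. Since the claim only needs "for every $p$, some $t\ge K$ has entropy $>\gamma$", it suffices to use the limit for each fixed $p$ together with continuity of $H$. I would nonetheless state explicitly that the class-$2$ mass at time $t$ is at least $\tfrac{2}{15}-O(2^{-t})$ uniformly in $p$, to avoid any subtlety.
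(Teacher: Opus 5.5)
Your proof is correct. It takes a genuinely different route from the paper, though the underlying mechanism is the same.

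\textbf{The paper's route.} The paper uses the 5-state MDP $M_3$: a transient top state that feeds a deterministic 2-cycle via $a$ and an aperiodic two-state component via $b$. It takes $K=1$ and puts all initial mass on the top state. It then plots the entropies at $t=1,2,3$ as explicit functions of the memoryless parameter $x$, reads off that their maximum is at least $0.77$ for every $x$, and observes that alternating $a,b$ keeps the entropy below $0.73$. Alternating there works for exactly your reason: the mass injected into the 2-cycle is kept in a single phase.

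\textbf{Your route.} You isolate that phase obstruction in a 3-state instance and certify it analytically instead of numerically. Your bookkeeping checks out.
\begin{itemize}
\item Under any strategy the mass at $s$ at time $t$ is $\tfrac25 2^{-t}$.
\item The mass leaving $s$ at even and odd times totals $4/15$ and $2/15$ respectively.
\item The parity strategy keeps class $2$ empty, so the entropy at time $t$ is exactly $h(\tfrac25 2^{-t})\to 0$.
\item Any memoryless $p$ drives the class-$2$ mass to $c(p)\in[2/15,4/15]$. Since $c(p)<1/2$, the limiting entropy along both parity subsequences is $h(c(p))\ge h(2/15)\approx 0.393$.
\end{itemize}
The ``main obstacle'' you flag is not one. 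For each fixed $p$, convergence of the distribution and continuity of $H$ already give some $t\ge K$ with entropy above $\gamma$. A uniform version also follows easily: merging $s$ into class $1$ can only lower entropy.

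\textbf{A small fix.} The threshold in \initunfairDec{} must be rational, and ``$\gamma\defeq\tfrac12 h(2/15)$, a rational below it'' is garbled. Just take any rational $\gamma\in(0,h(2/15))$. For example, $\gamma=1/5$ with $K=4$ works, since $h(0.025)\approx 0.117$.

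\textbf{What each approach buys.} Yours is fully rigorous without plots, gives a clean limiting gap ($0$ versus $h(2/15)$), and makes transparent why memoryless strategies fail: they cannot track the phase of a periodic component. The cost is a larger warm-up $K$ and reliance on asymptotics. The paper's instance works already at $K=1$ with explicit finite-horizon thresholds and doubles as an experimental benchmark. Its lower bound, however, rests on reading numerical plots, unless those are certified separately, e.g.\ by interval arithmetic.
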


\begin{proof}
 Consider the following MDP $M_3$ in Figure~\ref{fig:MDPM3}, with $K = 1$, and $\mu_0$ being the distribution that gives probability mass 1 in the top state and 0 elsewhere. Consider the case in which we are only restricted to (randomized) memoryless strategies. To parametrize those strategies, let the probability of choosing $a$ be $x$.
 \begin{figure}[h]
\begin{center}
\begin{tikzpicture}[scale=0.1]
    \tikzstyle{every node}+=[inner sep=0pt]
    \draw [black] (34.5,-14.9) circle (3);
    \draw [black] (24.2,-31.5) circle (3);
    \draw [black] (11.9,-31.5) circle (3);
    \draw [black] (44.3,-31.5) circle (3);
    \draw [black] (56.7,-31.5) circle (3);
    \draw [black] (32.92,-17.45) -- (25.78,-28.95);
    \fill [black] (25.78,-28.95) -- (26.63,-28.53) -- (25.78,-28.01);
    \draw (28.72,-21.91) node [left] {$a,0.7$};
    \draw [black] (13.055,-28.763) arc (143.32255:36.67745:6.228);
    \fill [black] (13.06,-28.76) -- (13.93,-28.42) -- (13.13,-27.82);
    \draw (18.05,-25.75) node [above] {$1$};
    \draw [black] (23.089,-34.255) arc (-35.79607:-144.20393:6.212);
    \fill [black] (23.09,-34.26) -- (22.22,-34.61) -- (23.03,-35.2);
    \draw (18.05,-37.33) node [below] {$1$};
    \draw [black] (45.391,-28.736) arc (144.72676:35.27324:6.258);
    \fill [black] (55.61,-28.74) -- (55.56,-27.79) -- (54.74,-28.37);
    \draw (50.5,-25.59) node [above] {$0.5$};
    \draw [black] (59.38,-30.177) arc (144:-144:2.25);
    \draw (63.95,-31.5) node [right] {$0.5$};
    \fill [black] (59.38,-32.82) -- (59.73,-33.7) -- (60.32,-32.89);
    \draw [black] (55.21,-34.073) arc (-43.35523:-136.64477:6.477);
    \fill [black] (45.79,-34.07) -- (45.98,-35) -- (46.7,-34.31);
    \draw (50.5,-36.6) node [below] {$0.5$};
    \draw [black] (41.477,-32.48) arc (316.87498:28.87498:2.25);
    \draw (36.93,-30.53) node [left] {$0.5$};
    \fill [black] (41.81,-29.86) -- (41.56,-28.94) -- (40.88,-29.67);
    \draw [black] (36.03,-17.48) -- (42.77,-28.92);
    \fill [black] (42.77,-28.92) -- (42.8,-27.97) -- (41.94,-28.48);
    \draw (48.75,-21.45) node [left] {$b,0.05$};
    \draw [black] (37.466,-14.473) arc (93.45714:12.96842:18.172);
    \fill [black] (56.27,-28.53) -- (56.58,-27.64) -- (55.6,-27.87);
    \draw (52.45,-17.56) node [above] {$b,0.05$};
    \draw [black] (31.689,-13.886) arc (277.90143:-10.09857:2.25);
    \draw (27.3,-9.55) node [above] {$b,0.9$};
    \fill [black] (33.59,-12.05) -- (33.98,-11.19) -- (32.99,-11.33);
    \draw [black] (35.359,-12.038) arc (191.02106:-96.97894:2.25);
    \draw (41.56,-9.48) node [above] {$a,0.3$};
    \fill [black] (37.29,-13.84) -- (38.18,-14.18) -- (37.98,-13.2);
\end{tikzpicture}
\label{fig:counterexample-1}
\end{center}
\caption{MDP $M_3$}
\label{fig:MDPM3}
\end{figure}

We then plot the entropy at times \(t = 1, 2, 3\) as
a function of \(x\).  The corresponding entropies are given below.
\\After 1 step:
\begin{equation*}
  -\left(0.9-0.6x\right)\ln\left(\left(0.9-0.6x\right)\right) -\left(0.1\left(1-x\right)\right)\ln\left(\left(0.05\left(1-x\right)\right)\right)
  -0.7x\ln\left(0.7x\right).
\end{equation*}
After 2 steps:
\begin{multline*}
  -2\left(0.9-0.6x\right)^{2}\ln\left(0.9-0.6x\right)\\
- \big(0.1\left(1-x\right)\left(1.9-0.6x\right)\big) \cdot\ln\left(0.05\left(1-x\right)\left(1.9-0.6x\right)\right)
-\left(0.7x\right)\ln\left(0.7x\right)\\
-0.7x\left(0.9-0.6x\right)\ln\left(0.7x\left(0.9-0.6x\right)\right).
\end{multline*}
After 3 steps:
\begin{multline*}
  -3\left(0.9-0.6x\right)^{3}\ln\left(0.9-0.6x\right)\\
  -0.7x\left(0.9-0.6x\right)\ln\left(0.7x\left(0.9-0.6x\right)\right) -0.7x\left(1+\left(0.9-0.6x\right)^{2}\right)\ln\left(0.7x\left(1+\left(0.9-0.6x\right)^{2}\right)\right)\\
  -\left(0.1\left(1-x\right)\left(1+0.9-0.6x+\left(0.9-0.6x\right)^{2}\right)\right)
\cdot \ln\left(0.05\left(1-x\right)
    \left(1+0.9-0.6x+\left(0.9-0.6x\right)^{2}\right)\right).
\end{multline*}
\noindent From these plots, we see that any such strategy will have a maximum
entropy of at least 0.77 (to observe this, we note that for every \(x\), i.e.,
for every such strategy, at least one of the entropy functions is at least 0.77;
see \cref{fig-prop10}). On the other hand, the non-memoryless strategy (which
just keeps track just of the parity of the current time using just one bit of
memory!) of alternating between choosing $a$ and $b$ obtains a maximum entropy
of at most 0.73. Thus this example illustrates that memoryless strategies are
not enough to reach an optimal value.
\begin{figure}[ht!]
  \centering
  \includegraphics[width=0.45\textwidth]{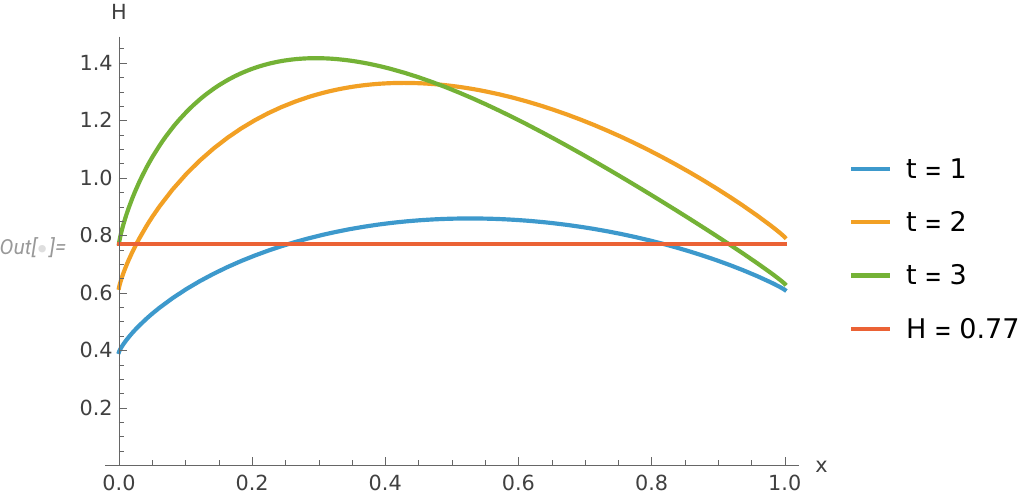}
  \caption{Entropy functions in the proof of Prop.~\ref{prop:no-state-yes-dist}}
  \label{fig-prop10}
\end{figure}
\end{proof}

This leads us to the question of what memory is needed for ensuring entropy objectives. An approach towards this is provided in \cite{akshay2023mdps}, where  it was shown that, for distributional {\em affine safety objectives}, while memoryless strategies do \emph{not} suffice, {\em distributional memoryless strategies} do.
Informally, the only memory {\em distributional memoryless} strategies carry is of the current \emph{distribution} of the
MDP.  Formally, distributional memoryless strategies are defined as functions from the set \(\Delta(\Omega)\) of probability distributions over states to the set \memoryless{} of memoryless strategies. 
We extend this result to entropy objectives, by showing that distributional memoryless strategies suffice for entropy objectives as well. 

\begin{proposition}
\label{prop:memory}
 For every instance of \initunfairDec{}, there exists a (possibly infinite
  memory) strategy that answers YES iff there exists a distributional memoryless
  strategy that answers YES.
\end{proposition}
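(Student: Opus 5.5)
The direction from distributional memoryless strategies to general strategies is immediate. A distributional memoryless strategy \(\sigma\colon\Delta(\Omega)\to\memoryless\) induces an ordinary (infinite-memory) strategy. That strategy deterministically recomputes the current distribution \(x^{(t)}\) from \(\mu_0\) and the time \(t\) (which is recoverable from the length of the history), and then plays \(\sigma(x^{(t)})\) at the current state. Its distribution sequence coincides with that of \(\sigma\), so it answers YES whenever \(\sigma\) does.

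For the converse, fix an arbitrary strategy \(\pi\) with \(H(M^{\pi}(\mu_0,t))\le\gamma\) for all \(t\ge K\). The plan is to follow the approach of \cite{akshay2023mdps} for affine safety objectives. The key observation is that the entropy objective, like the affine one, is a predicate on the distribution sequence alone. So it suffices to show that any sequence of distributions reachable under a general strategy can be reproduced step by step by memoryless one-step choices.

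The central step is a one-step lemma. Let \(x=M^{\pi}(\mu_0,t)\) and \(x'=M^{\pi}(\mu_0,t+1)\), and let \(q_t(s,a)\) be the probability under \(\pi\) that the state at time \(t\) is \(s\) and the action is \(a\). Marginalizing over histories, \(\sum_a q_t(s,a)=x(s)\) and \(x'=\sum_{s,a}q_t(s,a)\delta(s,a)\). Now define the memoryless strategy \(\rho_t(s,a)\defeq q_t(s,a)/x(s)\) when \(x(s)>0\), and arbitrarily otherwise. Then the one-step successor of \(x\) under \(\rho_t\) is exactly \(x'\). Consequently, \(\pi\)'s distribution sequence \((x^{(t)})_t\) is reproduced by the time-indexed sequence of memoryless strategies \(\rho_0,\rho_1,\dots\).

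The remaining obstacle is turning this time-indexed sequence into a function of the current distribution only. The same distribution may recur at two times \(t<t'\) with different prescribed \(\rho_t\ne\rho_{t'}\). I would handle this by a ``first occurrence'' construction. Define \(\sigma(x)\defeq\rho_{t_x}\), where \(t_x\) is the least \(t\) with \(x^{(t)}=x\) (and arbitrary for distributions not on the trajectory). Under \(\sigma\) the trajectory follows \(x^{(0)},x^{(1)},\dots\) until the first repeat, at which point it cycles through the segment \(x^{(t_x)},\dots,x^{(t')}\) forever.

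Every distribution visited by \(\sigma\) is then one of the \(x^{(t)}\). I need to check that those visited at times \(\ge K\) all have index \(\ge K\). If the first repeat occurs at a time \(t'\ge K\) with \(t_x<K\), the cycle revisits \(x^{(t_x)},\dots,x^{(K-1)}\), which may be unsafe. In that case the cycle nevertheless contains indices \(\ge K\), including \(x^{(t')}=x^{(t_x)}\) itself, which was reached at time \(t'\ge K\) and hence has \(H\le\gamma\). Since \(x^{(t_x)}=x^{(t')}\) is safe, it remains to treat the other pre-\(K\) points of the cycle. To fix these, I would instead index \(\sigma\) by the \emph{last} occurrence \(\ge K\) when one exists, or more robustly define \(\sigma\) on the post-\(K\) segment using occurrences \(\ge K\). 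This keeps the cycle inside \(\{x^{(t)}:t\ge K\}\), and pre-\(K\) points are used only before time \(K\).

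The careful bookkeeping here, ensuring the trajectory never returns to an index \(<K\) after time \(K\), is the step I expect to need the most care. The choice I commit to is: for \(x\) in \(\{x^{(t)}:t\ge K\}\), use the least \(t\ge K\) with \(x^{(t)}=x\); otherwise use the least \(t\). Since the path from \(x^{(K)}\) onward then only moves to successors with index \(\ge K\), entropy at most \(\gamma\) holds at every time \(\ge K\).
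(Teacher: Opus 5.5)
\textbf{Verdict: genuine gap in the final indexing rule.} Your first direction is correct. Your one-step lemma, $\rho_t(s,a) = q_t(s,a)/x^{(t)}(s)$, is also correct, and it is the precise form of the paper's informal ``replicate the action distribution the original strategy uses at that distribution''. The problem is the rule you commit to at the end. For a distribution that occurs on $\pi$'s trajectory only before time $K$, you use its \emph{least} occurrence. With that rule the index along the $\sigma$-trajectory can move backwards, so $\sigma$ can get trapped in a cycle of pre-$K$ distributions and never reach $\{x^{(t)} : t \geq K\}$. Your closing sentence shows only that the path stays at indices $\geq K$ once it gets there, not that it ever gets there.

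Here is a concrete failure. Take four states, each with four deterministic actions (one to each state), $\mu_0 = A$ uniform, $K = 3$ and $\gamma = 1/2$. Let $\pi$ depend only on time and the current state:
\begin{itemize}
\item At $t=0$, states $1,2$ go to $1$ and states $3,4$ go to $2$, giving $x^{(1)} = B = (1/2,1/2,0,0)$.
\item At $t=1$, state $1$ splits evenly between $1$ and $3$, and state $2$ between $2$ and $4$, giving $x^{(2)} = A$.
\item At $t=2$, everything moves to state $1$ and stays there, so $x^{(t)} = C = (1,0,0,0)$ for all $t \geq 3$.
\end{itemize}
This is a YES instance. Your rule gives $\sigma(A) = \rho_0$, which sends $A$ to $B$, and $\sigma(B) = \rho_1$, which sends $B$ back to $A$. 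So $\sigma$ produces $A, B, A, B, \dots$, and at time $3$ the entropy is $\log 2 > \gamma$.

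\textbf{Repair.} For distributions occurring only before $K$, use their \emph{largest} occurrence index $t < K$; for the others keep, e.g., the least occurrence $\geq K$. Write the $\sigma$-trajectory as $y^{(u)} = x^{(j_u)}$, where $j_0 = 0$ and $j_{u+1} = \tau(x^{(j_u)}) + 1$.
\begin{itemize}
\item If $x^{(j_u)}$ occurs only before $K$, then $j_{u+1} \geq j_u + 1$.
\item Otherwise $j_{u+1} \geq K+1$.
\end{itemize}
By induction $j_u \geq \min(u, K)$, so $H(y^{(u)}) \leq \gamma$ for all $u \geq K$.

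\textbf{Comparison with the paper.} The paper's proof is the same replicate-then-loop idea. Its claim that everything between two occurrences of a repeated distribution has entropy at most $\gamma$ has exactly the blind spot you noticed when the first occurrence is before $K$. You were right to flag it, but the rule you committed to does not close it.
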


\begin{proof}
  The proof essentially follows from the fact that the \initunfairDec{} problem
  is a special case of the distributional safety problem for which this
  characterization was shown in ~\cite{akshay2023mdps}. For the sake of
  completeness, we provide the proof here.  For the reverse direction,
  existence of a distributional memoryless policy that answers YES is evidence
  of a policy that answers YES. For the forward direction, let us assume a
  strategy (possibly with infinite memory) exists, and we observe
  the probability distributions we get if we follow this strategy starting from
  $\mu_0$. We now construct a distributional memoryless strategy using this
  strategy. Until such time as we see a probability distribution repeat, our
  distributional memoryless strategy will replicate the probability
  distribution over actions that the original strategy takes given that
  particular distribution. If we ever observe the same probability
    distribution over states twice, the entropies of all distributions arising in
    between these two observations must have all been at most $\gamma$, and our
    distributional memoryless strategy can keep looping over this sequence of
    states deterministically.
\end{proof}

{\bf Randomization.} Finally, we turn to the question of randomized vs
deterministic strategies. Our results so far, and especially in Section~\ref{sec:algo}, considered randomized strategies by default. But one could ask if randomization adds expressive power to memoryless strategies. Interestingly, we show that for entropy objectives, this is the case regardless of whether we consider usual (i.e., state-based) memoryless strategies or distributional memoryless strategies. 
\begin{proposition}
  There exists an instance of \initunfairDec{} for which there exists no deterministic
  memoryless strategy that answers YES, but for which there exists a randomized
  memoryless strategy that answers YES.
\end{proposition}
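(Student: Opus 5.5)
We will exhibit an explicit small MDP in which a single state offers two actions, and exploit the fact that entropy is \emph{not} concave along the relevant one-parameter family of mixed distributions, so that an interior mixing probability can beat both endpoints. The natural candidate is the MDP \(M_2\) of Example~\ref{eg:MDPM2}, with \(\mu_0 = (0.2, 0.4, 0, 0.4)\), \(K = 0\) and \(\gamma = 1.092\). That example already asserts that the strategy choosing \(a\) with probability \(7/10\) at \(A\) succeeds while neither deterministic strategy does. The proof therefore reduces to verifying these claims rigorously. If \(M_2\) turns out to be awkward to certify, we would fall back on a simpler construction in which every state other than a single chooser state is absorbing or deterministic, so that all distributions are explicit.

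The first step is to observe the structure of \(M_2\). Only state \(A\) has a choice, and \(A\) has no incoming transitions. Hence after time \(1\), the mass at \(A\) is zero and the strategy's choice is irrelevant. Every memoryless strategy is thus described by a single parameter \(x = \pi(A, a) \in [0,1]\), and the two deterministic memoryless strategies correspond to \(x = 0\) and \(x = 1\). Moreover, the choice at \(A\) only affects \(\mu_1\): the \(0.2\) mass at \(A\) is split as \(0.2x\) to \(B\) and \(0.2(1-x)\) to \(C\). Thereafter the evolution is that of the fixed Markov chain on \(\{B, C, D\}\) with transition matrix \(P\). Since \(\mu_1\) is affine in \(x\), each \(\mu_t = \mu_1 P^{t-1}\) is affine in \(x\).

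The second step handles the deterministic strategies. For each of \(x = 0\) and \(x = 1\), we compute \(\mu_t\) for small \(t\) and exhibit one explicit time \(t\) at which \(H(\mu_t) > 1.092\). This is a finite numerical check with rational distributions, bounding the logarithms to sufficient precision.

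The third step handles \(x = 7/10\) and is the main obstacle, because it requires a bound for \emph{all} \(t \geq 0\), not just finitely many. We plan to use the ergodicity of the chain on \(\{B, C, D\}\). This chain is irreducible, and the self-loop-free cycles \(B \to D \to B\) (length \(2\)) and \(B \to C \to D \to B\) (length \(3\)) give it period \(1\), so it is aperiodic. We would compute its stationary distribution \(\mu^*\) and check that \(H(\mu^*) < 1.092\) with a margin. We then apply the argument of the proposition on sufficient conditions for affine invariants. For \(\delta\) chosen as in that proof, the weighted \(\ell_\infty\) ball \(\{x : |x_i - \mu^*_i| \le \delta\mu_i^*\}\) is invariant and implies entropy at most \(1.092\). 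It then suffices to find an explicit \(K_0\) by which \(\mu_{K_0}\) has entered this ball, which requires bounding \(\|\mu_t - \mu^*\|_1\) via the second-largest eigenvalue modulus of \(P\), or simply by direct computation. Finally, we verify numerically that \(H(\mu_t) \le 1.092\) for each of the finitely many \(t < K_0\).

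If the margins near the threshold are too thin for a clean certificate, we would adjust \(\gamma\) slightly within the gap between the best deterministic value and the value achieved at \(x = 7/10\). The statement only requires the existence of some instance, so this adjustment is legitimate.
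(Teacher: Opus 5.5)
Your proof is correct, but it takes a different route from the paper's. The paper does not reuse $M_2$. It builds a dedicated MDP (Figure~\ref{fig:MDP-rand}, with $K=1$ and all mass on $A$) in which all mass is absorbed after two steps, so the entropies are in closed form:
\begin{itemize}
\item choosing $a$ deterministically gives entropy $\ln 5$ at $t=1$;
\item choosing $b$ deterministically gives $\ln 5$ from $t=2$ on;
\item choosing $a$ with probability $p$ gives maximal entropy $\max\{p,1-p\}\ln 5 - p\ln p-(1-p)\ln(1-p)$. At $p=1/2$ this equals $\tfrac12\ln 5+\ln 2\approx 1.50<\ln 5\approx 1.61$.
\end{itemize}
That proof needs no numerics and no infinite-horizon argument, and has a comfortable margin.

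You instead certify Example~\ref{eg:MDPM2}, where mass circulates forever in the ergodic class $\{B,C,D\}$. You therefore need careful numerics and an infinite-horizon certificate. You get the latter by applying the ergodic affine-invariant proposition to the chain restricted to $\{B,C,D\}$. That restriction is needed, because the full chain including the transient state $A$ is not irreducible. The numbers do close:
\begin{itemize}
\item $x=1$ fails at $t=1$, with $H\approx1.098$;
\item $x=0$ fails at $t=4$, with $H\approx1.096$;
\item the stationary distribution $\mu^*=(20/69,22/69,27/69)$ has $H(\mu^*)\approx1.0906$;
\item for $x=7/10$ the entropy peaks near $t=6$ at about $1.0913$;
\item with $\delta=0.003$ the weighted ball forces $H\le1.0919$ and already contains $\mu_9$, so checking $t\le 8$ finishes.
\end{itemize}
Your route rigorously confirms the example's claim in a recurrent setting. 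The paper's route gives a short, exact and robust proof.

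One correction to your motivating sentence: entropy \emph{is} concave along the family. For each fixed $t$ the map $x\mapsto\mu_t$ is affine and $H$ is concave, so each $x\mapsto H(\mu_t)$ is minimized at an endpoint. Randomization helps only because of the maximum over $t$. The two deterministic choices are bad at different times: $t=1$ versus $t=4$ in $M_2$, and $t=1$ versus $t=2$ in the paper's example. Mixing spreads the damage across both times. Your proof steps do not rely on the misstatement.
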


\begin{proof}
Consider the MDP sketched in \cref{fig:MDP-rand}, with $K=1$ and the initial distribution having all
  its probability mass on node \(A\). All unlabeled transitions have probability 1. Any  deterministic memoryless strategy here will have to choose either action $a$ or $b$ at the first step. And after this, regardless of the choice, the entropy would be $\ln{5}$ at timestep 1 if action $a$ is picked or at timestep 2 if action $b$ is picked. On the other
  hand, consider a randomized memoryless strategy that chooses $a$ or $b$ with probability
  \(1/2\) each.  When following this strategy, the entropy from the first
  timestep onwards will be $\frac{\ln{5}}{2} + \ln{2}$. In general, for a
  strategy that chooses action \(a\) with probability \(p\), the maximum entropy
  of the system is $\max\{p, 1-p\}\cdot\ln{5}-p\ln{p}-(1-p)\ln{(1-p)}$, which is
  easily verified to have a minimum at $p=1/2$. Thus, the optimal strategy in
  this case is a randomized strategy, which performs better than any
  deterministic strategy.
  \begin{figure}[t]
\begin{center}
  \begin{tikzpicture}[>=stealth, node distance=0.7cm and 0.5cm,scale=0.15]
    \node[state] (A) {\(A\)};

    \node[state] (B) [right=of A] {};

    \node[state] (B1) [below=of B] {};

    \node[state] (A1) [below=of A] {};

    \node[state] (AS) [below=of A1] {};

    \path [->] (A) edge node [left, near end] {\(a, 1/5\)} (A1)edge node [above] {\(b, 1\)} (B) (A1) edge (AS) (AS) edge [loop below] () (B) edge node[right, near end] {\(1/5\)} (B1) (B1) edge [loop below] ();

    \foreach \x in {2,...,5}{\pgfmathtruncatemacro{\prev}{\x - 1}; 

      \node[state] (A\x) [left=of A\prev] {};      
      \node[state] (B\x) [right=of B\prev] {};      

      \path [->] (A) edge [bend right=20] node [left, near end] {\(a, 1/5\)} (A\x) (A\x) edge [bend right=20] (AS) (B) edge [bend left=20] node[right, near end] {\(1/5\)} (B\x) (B\x) edge [loop below] ();

    }
  \end{tikzpicture}
\end{center}
\caption{Example exhibiting the need for randomized strategies}
\label{fig:MDP-rand}

\end{figure}
\end{proof}

In fact, one can observe that the example in the proof above also shows that even for distributional strategies, randomization may be needed. 

 \begin{corollary}
 \label{prop:dist-rand}
 There exists an instance of \initunfairDec{} for which there is no deterministic distributional memoryless strategy that answers YES, but for which there exists a (randomized) distributional memoryless strategy that answers YES.
 \end{corollary}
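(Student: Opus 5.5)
We reuse the instance from the preceding proof (the MDP of \cref{fig:MDP-rand} with $K=1$ and $\mu_0$ the Dirac mass on $A$), and take the threshold $\gamma \defeq \frac{\ln 5}{2} + \ln 2$. The randomized direction is immediate. The memoryless strategy choosing $a$ and $b$ with probability $1/2$ each is in particular a distributional memoryless strategy: it is the constant map from $\Delta(\Omega)$ to this element of \memoryless{}. By the computation in the preceding proof, it keeps the entropy at exactly $\frac{\ln 5}{2}+\ln 2$ at every $t\geq 1$, so it answers \yes{}.

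For the deterministic direction, we first pin down what ``deterministic distributional memoryless'' means. Such a strategy maps each distribution to a \emph{deterministic} memoryless strategy. The key observation is that $A$ is the only state with a genuine choice, and $A$ carries positive mass only at time $0$, since no transition leads back to $A$. Hence the behaviour of any distributional memoryless strategy is determined by the single deterministic memoryless strategy it assigns to $\mu_0$. That strategy plays either $a$ with probability one or $b$ with probability one at $A$. Everything afterwards is forced, because every other state has exactly one action.

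It follows that the induced sequence of distributions coincides with the one produced by one of the two deterministic memoryless strategies. Each of these reaches entropy $\ln 5$ at some time $t\in\{1,2\}$, as computed in the previous proof. It remains to check $\ln 5 > \frac{\ln 5}{2}+\ln 2$, i.e.\ $\ln 5 > 2\ln 2 = \ln 4$, which holds. So no deterministic distributional memoryless strategy answers \yes{} at threshold $\gamma$.

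The only subtle step is the reduction in the second paragraph. Its point is that distributional memory brings no extra power here, because the one decision point is visited only at time $0$. One should also confirm from the figure that the two deterministic runs reach exactly $\ln 5$, with five equiprobable successors at $t=1$ under $a$ and at $t=2$ under $b$. Everything else is inherited from the previous proof.
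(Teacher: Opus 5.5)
Your proposal is correct and matches the paper's argument. It uses the same instance, regards the $1/2$--$1/2$ memoryless strategy as a constant distributional memoryless strategy, and observes that any deterministic distributional memoryless strategy must commit to $a$ or $b$ at $A$ at time $0$, forcing entropy $\ln 5$ at $t=1$ or $t=2$; your remark that $A$ carries mass only at time $0$ justifies this more carefully than the paper does.

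One small repair: instances of \initunfairDec{} require a rational threshold, so instead of $\gamma = \frac{\ln 5}{2}+\ln 2$ take, e.g., $\gamma = 3/2$, which lies in $[\frac{\ln 5}{2}+\ln 2,\ \ln 5)$, and the argument goes through unchanged.
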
 
{ The proof follows by observing that in the example in Figure~\ref{fig:MDP-rand}, any deterministic distributional strategy will also have to choose action $a$ or $b$ resulting in the same entropy. The rest of the proof follows as before by additionally observing that any usual, i.e., ``state-based'' strategy is also a distributional strategy.
 }

{In summary, this section shows that distributional memoryless strategies
  are necessary and sufficient for ensuring low entropy objectives, and are
  strictly more powerful than usual (state-based) memoryless
  strategies. Moreover randomization adds power for the entropy objective.

\section{Experiments}
\label{sec:experiments}
While the focus of this work is on a theoretical exploration of entropy
objectives, we also implemented a prototype to show the promise and potential of
our approach. Our implementation is in Python3, builds upon the implementation
framework developed in \cite{akshay2023mdps}, and was run on an AMD Ryzen 7530U
CPU with 16GB RAM, using Gurobi~\cite{gurobi}, a commercial solver with an
academic license. We employ \texttt{sympy}~\cite{sympy} for the templates and
\texttt{pySMT}~\cite{pysmt} (with the \texttt{mathsat} backend~\cite{mathsat5})
for verification of the synthesized invariants. For the sake of efficiency, Gurobi was set to have a tolerance of 1e-9, an
iteration limit of 200000, and its search space for all real number variables
was bounded between -5000 and +5000 (though for some examples, we had to do away
with some of this restrictions). As mentioned in the Introduction, we are not
aware of any existing tools that solve the entropy objective. Hence, we are
unable to perform a reasonable comparison with existing work. In this section,
our focus is to show that we can solve interesting examples and MDPs of a
reasonable size.

Given an MDP and the number of invariant constraints, we employ \texttt{sympy} to encode all our constraints. A Gurobi model is then initialized with all required template variables, the constraints and objective function. Its output is fed to \texttt{pySMT} to verify the correctness of the synthesized invariant constraints under the generated policy (which is memoryless in our template).

\medskip 

\noindent\textbf{Benchmarks.}
We evaluate our approach on several synthetic and existing benchmarks with
respect to an entropy objective. Rather than checking if we are lower than a
threshold $\gamma$ for evaluation, we try to compute the minimal value that can be
obtained. As a result, the value computed by our approach will always be an
upper bound on the real minimum. Some of the benchmarks were Markov chains where
the objective was to synthesize invariants and compute the optimal low entropy. For MDPs, we also synthesize the strategy that achieves this entropy value. 

We present results on seven MDP benchmarks, where we could manually compute the
correct entropy computation and check it against our prototype's result. Of
these $M_1,M_2, M_3$ are respectively the MDPs considered earlier in this paper
in Example~\ref{eg:MDPM1}, Example~\ref{eg:MDPM2}, and Figure~\ref{fig:MDPM3}.
$M_4$ (in Appendix~\ref{app:benchmarks}) is another synthetic example with five
states and two actions, where we again see that randomization is needed for
optimality. Beyond this, we considered the MDP \textbf{Split} taken
from~\cite{akshay2023mdps}. We also considered two gridworld environments
\textbf{Grid1}, which is a grid of size 2x4 with the point (0,1) being
unreachable and $\textbf{Grid2}$ of size 3x4 with 3 unreachable states. In
addition, we also present results on examples of Markov chains, including both
synthetics ones and some taken from the
literature~\cite{DBLP:journals/jacm/AgrawalAGT15}. More details
regarding the benchmarks as well as the initial distributions, and a few more
benchmarks are in Appendix~\ref{app:benchmarks}. We vary the warm-up parameter
$K$ and show results for different values of $K$. For all these benchmarks, one
reason to choose them was to be able to interpret the results easily, since in
more complicated benchmarks, even if our tool does scale, we cannot check our
answers
easily. 

\medskip

\noindent\textbf{Results.}
We present a set of results in Table~\ref{tab:experiment_results},
with more details in Appendix~\ref{app:benchmarks}, including the invariants and memoryless strategies synthesized. Our prototype is able to solve all these tasks with different warm-up parameters. To be able to validate the answer computed by our approach (in column \textbf{EntAns}), we computed the optimal answer (in column \textbf{EntOpt}) in several small cases and in other cases, we ran simulations to heuristically observe a minimal answer. These answers match our approach often and we are always within a very small margin of the computed answers. The answers vary with varying $K$ since in these cases, entropy in initial steps was high and reduced as $K$ was increased.

\begin{table}[t]
\centering
\begin{tabular}{|l|c|c|c|c|}
\hline
\textbf{Name} & \textbf{\#States} & \textbf{K} & \textbf{EntAns} & \textbf{EntOpt} \\
\hline
\textbf{Type} & \textbf{MC} &   &   &   \\
\hline
MDP $M_1$ & 3 & 0 & 1.023 & 1.011 \\
\hline
MDP $M_1$ & 3 & 1 & 0.679 & 0.679 \\
\hline
MDP $M_1$ & 3 & 2 & 0.512 & 0.512 \\
\hline
MDP $M_2$ & 4 & 0 & 1.324 & 1.091 \\
\hline
MDP $M_3$ & 5 & 0  & 1.099 & 0.772 \\
\hline
MDP $M_4$ & 5 & 2 & 0.693 & 0.304 \\
\hline
\textbf{Type} & \textbf{MC} &   &   &   \\
\hline
MC1 & 2 & 1 & 0.637 & 0.637 \\
\hline
MC2 & 2 & 1 & 0.662 & 0.662 \\
\hline
Insulin & 5 & 0 & 1.001 & 0.864 \\
\hline
Pagerank & 5 & 0 & 1.537 & 1.443 \\
\hline
\end{tabular}

\caption{Comparison of Minimal Entropy Answer vs Optimal Answer across different experiments. \textbf{$K$} is the warm-up parameter. \textbf{EntAns} is the answer given by our prototype. \textbf{EntOpt} are optimal answers either hand-computed or derived by simulating the Markov Chain, or by grid search over all possible strategies of the MDP to identify the optimal strategy.  All values rounded to three places after decimal. }
\label{tab:experiment_results}
\end{table}

Our prototype is able to handle even larger benchmarks, but we are unable to validate the answers that we provide. However we do validate the invariants (which are intermediate steps) using an SMT-solver. Note that when we provide an answer, we have soundness guarantees subject to correctness of Gurobi, the back-end solver that we use.

\medskip

\noindent{\bf Discussion.} Two remarks are pertinent at this point. Our
templates for invariants are always convex and we can show that there are MDPs
for which the optimal entropy 
  cannot be achieved using templates that are affine or even convex functions. 
So in that sense our experiments always report an upper-bound on the minimal
entropy but may not reach the minimal. {This is sufficient to solve
  instances of \apxinitunfairDec{} problems in many cases, but does not
  necessarily obtain the minimum, often explaining the difference between our
  values and the minimum entropy in Table~\ref{tab:experiment_results}. Given
  the hardness of the problem, we believe this to be unavoidable. At the same
  time, we note that the (approximate) verification problem that we showed to be
  solvable in polynomial time in Proposition~\ref{prop:verif} can indeed be
  solved efficiently using tools such as Gurobi.}

{Second, our implementation currently depends on Gurobi~\cite{gurobi},
  which uses multiple techniques, heuristics and optimizations, but is not known
  to be complete even for theory of reals. Hence at times, it cannot handle even
  simple examples. This limitation of the solver also causes other interesting
  behaviors: for instance, 
the
 solver's performance can be highly dependent on the initial starting point, a
  phenomenon common to many other algorithms of this type.
}

\section{Conclusion}
\label{sec:conclusion} {In this paper, we launched an investigation into
  verification of distributional entropy objectives for MDPs. We considered the
  specific problem of synthesizing strategies to obtain low entropy in the
  distributions generated by an MDP. Thus in our setting, we check if there is a
  strategy that ensures entropy is below a threshold, thus enforcing a
  concentration like behavior in the state distributions of the MDP. Several
  related problems may also be equally interesting: for instance, one could ask
  if there is a strategy that enforces \emph{high} entropy (which would
  correspond to diversity or fairness), or whether all strategies after a point
  remain fair or concentrated (high or low entropy). It is not immediately clear
  whether the strong duality results that we utilize can be adapted to these
  settings, and addressing these different questions would form an interesting
  line of future research.}

{We showed NP-hardness (even for approximations), and developed a convex-duality and invariant-synthesis based algorithm to solve the problem that is always sound, and is relatively complete assuming Schanuel's conjecture. We leave decidability of the general low entropy decision problem open, but we expect it to be as hard as the so-called positivity problem~\cite{OuaknineW14}, similar to what was shown for affine distributional safety objectives~\cite{akshay2023mdps}. }

{As future work, we are interested in improving our templates in theory and in practice to be able to handle more benchmarks with stronger guarantees, and in further exploring the capability and limitations of convex invariants for distributional objectives.}

\section*{Acknowledgements}
We acknowledge support from the Department of Atomic Energy, Government of India
[project number RTI4014]; by the Infosys-Chandrasekharan virtual center for
Random Geometry at the Tata Institute of Fundamental Research; by the Science
and Engineering Research Board [grant number MATRICS MTR/2023/001547]; by the
Google India Research Award; and by a gift to TIFR from Adobe, Inc. The results
reported in this manuscript do not necessarily reflect the views of the funding
agencies listed above.

\newcommand{\etalchar}[1]{$^{#1}$}

\appendix
\onecolumn
\section{A strong duality result for entropy}
\label{app:proofs}

In this section, we provide the proof of Proposition~\ref{prop-entropy-dual}.  As mentioned in Section~\ref{sec:algo}, we need a strong duality result which is usually stated under the additional hypothesis that the feasible point \(x_0\) has all its entries {\em strictly} positive (this is the version given, e.g., by \citet{BV04}), in which form it is proved using Slater's
  constraint qualification.  However, as we show below, this additional hypothesis is not required for
    maximum-entropy programs with linear constraints, which is the setting that we are interested in. In this case, it is sufficient that the primal
    program has at least one feasible solution, and this can be shown for
    instance using the general duality results of \citet[Theorem
    30.4]{rockafellar70}. Again, we believe this result to be folklore, but for the
    sake of completeness, and due to its independent interest, we restate it here and provide a
    proof.

\addtocounter{theorem}{-9} 
\begin{proposition}
  Let \(Q \in \R^{\ell \times n}\) and \(p \in \R^\ell\) be fixed.  Let
  \(\D \defeq \inb{x | x \geq 0} \subseteq \R^n\) be the domain of the convex function
  \(f(x) \defeq \sum\limits_{i=1}^nx_i\log x_i\).  Suppose that there is an
  \(x_{0} \in \mathcal{D}\) satisfying \(Qx_0 \leq p\). Then the optimization program
  \begin{equation}
    \label{eq-ent-cons-app}
    \begin{aligned}
      \sup\limits_{x \in \D} H(x)
      \textup{ subject to }  Q x \leq p \textup{ and } \vec{1}^{\top}x = 1
    \end{aligned}
  \end{equation}
  has the same optimal value as the program
  \begin{equation}
    \label{eq-ent-dual-app}
    \inf\limits_{\lambda \geq 0} \quad g(\lambda) \defeq p^{\top}\lambda + \log \inp{\sum\limits_{i=1}^n\exp(-\lambda^{\top}Q_i)}.
  \end{equation}
  Here \(Q_i\) denotes \(i\)th column of \(Q\).  Further, \(g\) is a convex
  function of \(\lambda\).
\end{proposition}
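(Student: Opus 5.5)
The plan is to prove weak duality by a direct Lagrangian computation, then get strong duality by reducing to the Slater case. The Slater case is reached by restricting to the coordinates that are not forced to vanish, and an LP (Farkas-type) certificate absorbs the extra terms that $g$ carries for the forced-zero coordinates.

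\textbf{Step 1: weak duality and convexity.} Introduce multipliers $\lambda \geq 0$ for $Qx \leq p$ and $\nu \in \R$ for $\vec{1}^{\top}x = 1$. Consider
\[L(x,\lambda,\nu) = -\sum_i x_i\log x_i + \lambda^{\top}(p - Qx) + \nu(1 - \vec{1}^{\top}x).\]
Maximize over $x \geq 0$ coordinatewise; the maximizer is $x_i = \exp(-\lambda^{\top}Q_i - 1 - \nu)$. This gives the dual function
\[p^{\top}\lambda + \nu + \sum_i \exp(-\lambda^{\top}Q_i - 1 - \nu).\]
Minimizing over $\nu$ gives $\nu = \log\sum_i \exp(-\lambda^{\top}Q_i) - 1$, and the dual function becomes exactly $g(\lambda)$. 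For any feasible $x$, $H(x) \leq L(x,\lambda,\nu) \leq g(\lambda)$, so the primal value is at most $\inf_{\lambda \geq 0} g$. Convexity of $g$ follows because it is a linear term plus log-sum-exp composed with an affine map. Also, the feasible set $P$ is a nonempty compact polytope inside the simplex and $H$ is continuous, so the primal supremum is attained.

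\textbf{Step 2: reduction to a Slater instance.} Let $T$ be the set of coordinates $i$ such that some feasible $x$ has $x_i > 0$. Averaging one such point for each $i \in T$ gives a feasible point that is strictly positive on $T$, by convexity of $P$. Every feasible point vanishes on $T^{c}$. Restricting the primal to the variables in $T$ (columns $Q_i$, $i \in T$) gives a program with the same value. This restricted program has a feasible point in the relative interior of the domain of $-H$, and its constraints are affine. The standard Slater-based duality (as in \citet{BV04}, which only needs the relaxed Slater condition for affine constraints) then gives
\[\text{primal value} = \inf_{\lambda \geq 0} g_T(\lambda), \qquad g_T(\lambda) \defeq p^{\top}\lambda + \log\sum_{i\in T}\exp(-\lambda^{\top}Q_i).\]

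\textbf{Step 3 (main obstacle): $\inf g = \inf g_T$.} Clearly $g \geq g_T$, so it suffices to show the reverse inequality $\inf g \leq \inf g_T$. For each $j \notin T$, the LP $\max x_j$ over $P$ has value $0$. LP duality (\cref{lem-farkas-linear}) yields $y^{(j)} \geq 0$ and $\nu_j$ with $Q^{\top}y^{(j)} + \nu_j\vec{1} \geq e_j$ and $p^{\top}y^{(j)} + \nu_j \leq 0$. Summing over $j \notin T$ gives $y \geq 0$ and $\nu$ with the following properties:
\[y^{\top}Q_i \geq -\nu \text{ for all } i, \qquad y^{\top}Q_i \geq 1 - \nu \text{ for } i \notin T, \qquad p^{\top}y + \nu \leq 0.\]
Now fix $\lambda \geq 0$ and set $\lambda_s = \lambda + s\,y$ for $s > 0$. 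Then
\[g(\lambda_s) \leq p^{\top}\lambda + s(p^{\top}y + \nu) + \log\Bigl(\sum_{i\in T}e^{-\lambda^{\top}Q_i} + e^{-s}\sum_{i\notin T}e^{-\lambda^{\top}Q_i}\Bigr),\]
and the right-hand side tends to at most $g_T(\lambda)$ as $s \to \infty$. Hence $\inf g \leq \inf g_T$, which equals the primal value, completing the argument.

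The delicate points are deriving the aggregated certificate correctly, in particular the normalization constraint's role via $\nu$, and checking that the constants line up in the log-sum-exp bound.
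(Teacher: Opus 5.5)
Your proof is correct, and it takes a genuinely different route from the paper.

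\textbf{The paper's route.} The paper rules out a duality gap in one stroke using Rockafellar's bifunction theory. The bifunction of the program is closed and proper, and $\{x : F(0,x) \le 0\}$ is nonempty and bounded, so Theorems 30.3 and 30.4(g) give normality. The resulting dual is then identified with $\inf_{\lambda \ge 0} g(\lambda)$ by pointing to the Boyd--Vandenberghe computation.

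\textbf{Your route.} You perform what is essentially a facial reduction. You locate the face of the orthant (coordinates in $T$) carrying the feasible polytope, invoke only the textbook Slater version of max-entropy duality on that face, and then close the gap between $\inf g_T$ and $\inf g$ with an explicit Farkas certificate, moving $\lambda$ along the ray $\lambda + s y$. The bookkeeping you flagged as delicate checks out: $\nu_j$ is the difference of the multipliers for the two halves of $\vec{1}^{\top}x = 1$, the multipliers for $x \ge 0$ are absorbed into the inequality $Q^{\top}y^{(j)} + \nu_j\vec{1} \ge e_j$, and $T \neq \emptyset$ guarantees the limit is $g_T(\lambda)$.

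\textbf{What each buys.} The paper's argument is shorter. It is an instance of a general principle (for a closed proper convex program, a nonempty bounded level set of the primal rules out a duality gap) that does not use the specific form of the entropy. Yours is self-contained and uses only tools the paper already relies on, namely Slater-type duality and Farkas' lemma. It derives $g$ explicitly rather than by reference, and it yields an explicit minimizing sequence for the dual. It also reveals something the bifunction argument hides: when $T \neq \{1,\dots,n\}$ one has $g > g_T$ pointwise while $\inf g = \inf g_T$. So the dual infimum is attained exactly in the Slater case.

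\textbf{A minor point on the hypothesis.} Like the paper's proof, you use that $\{x \ge 0 : Qx \le p,\ \vec{1}^{\top}x = 1\}$ is nonempty. That is the hypothesis of the main-text version, but the appendix statement only asks for $Qx_0 \le p$. Your Step 3 idea also covers an empty feasible set. In that case Farkas gives $y \ge 0$ and $\nu$ with $Q^{\top}y + \nu\vec{1} \ge 0$ and $p^{\top}y + \nu < 0$, whence $g(sy) \le s(p^{\top}y + \nu) + \log n \to -\infty$.
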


\begin{proof}[Proof of Proposition~\ref{prop-entropy-dual}]
  One way to prove strong duality for the maximum-entropy program in
  \cref{eq-ent-cons-app}, without requiring a strictly positive feasible
  solution is to consider the so-called \emph{convex bifunction} \(F = F(u, x)\)
  associated with the program in
  \cref{eq-ent-cons-app}~\cite[pp.~293--294]{rockafellar70}. (Formally, to use
  the definition given by Rockafellar, one has to consider the equivalent
  problem of \emph{minimizing} the \emph{convex} function \(f(x) \defeq -H(x)\)
  under the same constraints as those in \cref{eq-ent-cons-app}.) Because of the
  continuity of \(H\) over \(\mathcal{D} = \inb{x \st x \geq 0}\), and the fact that all our
  constraints are affine, the resulting convex bifunction is, to use
  Rockafellar's terminology, both \emph{proper} (i.e. it is not identically
  equal to \(+\infty\) and takes values only in \(\R \cup \inb{\infty}\); in particular, it
  does not take the value \(-\infty\)) and \emph{closed} (in this context, this means
  that for any real \(w\), the set \(\inb{(u, x) | F(u, x) \leq w}\) is a closed
  set).

The two important features of the program in \cref{eq-ent-cons-app} for us are the
following:
\begin{itemize}
\item Given the assumption of primal feasibility, the feasible set of
  \cref{eq-ent-cons-app} is non-empty.  It is also a \emph{bounded} affine set,
  being an intersection of an affine set with the compact set
  \[\inb{x \st x \geq 0 \text { and } \vec{1}^{\top}x =1}.\]
\item For all \(x\) in this feasible set, we have \(f(x) = -H(x) \leq 0\).
\end{itemize}

In terms of Rockafellar's bifunction formalism, these two features translate
into the condition that the set \(\inb{x \st F(0, x) \leq 0}\) is a non-empty and
bounded set.  We now use Theorem 30.4 (g) combined with Theorem 30.3 of
  \citet{rockafellar70}, which together say that if the bifunction \(F\)
  associated with a convex program is closed and proper, and if there exists a
  real number \(\alpha\) such that the set \(\inb{x \st F(0, x) \leq \alpha}\) is non-empty,
  then strong duality (refereed to as \emph{normality} in \cite{rockafellar70})
  holds for the convex program.  In light of the discussion above about the
  \emph{closed} and \emph{proper} nature of the bifunction \(F\), the condition
  \(\inb{x \st F(0, x) \leq 0}\) therefore implies strong duality for our convex
  program. (Intuitively, the main technical idea underlying Rockafellar's strong duality
result cited above is the fact that for any \emph{proper} and \emph{closed}
convex function \(g\), the set of points where \(g\) achieves its minimum value
is a non-empty and bounded set if and only if the convex conjugate \(g^{*}\) of
\(g\) takes only finite values in some \emph{open} ball containing
\(0\)~\cite[Theorem 27.1 (d)]{rockafellar70}.)

Now, it only remains to note that the dual program described in Theorem 30.3 of
\citet{rockafellar70} has the same optimal value as \cref{eq-ent-dual-app}.  To see
this, we note that the derivation of the dual program described in Theorem 30.3
of \citet{rockafellar70} (and defined on pp.~309-311 of the same book), when
carried out for the max-entropy program of \cref{eq-ent-cons-app}, leads to exactly
the same computations as those in the derivation of \cref{eq-ent-dual-app} by
\citet[p.~228]{BV04}. The dual program thus turns out to have the same optimal
value as \cref{eq-ent-dual-app}.
\end{proof}

\section{Experiments}
In this section, we provide details regarding our experiments. As mentioned in the main text, our goal is not to provide an efficient or even effective implementation given that our algorithm is only decidable under assumption of Schanuel's conjecture. Instead our goal is to illustrate that using state-of-the-art solvers, we can get effective solutions in many cases, with interesting phenomena being observed in other cases. 

\subsection{Experimental Details}
Below, we have listed some example MDPs and MCs on which our algorithm has been tested. We have also compared the output of our algorithm with the ``best/optimal'' answer we could find for \initunfair \ (obtained by grid-search over all possible randomized policies, with the runs simulated for 1000 steps) and - wherever possible - lower bounds on the optimal answer possible under the restriction to convex sets as the invariant. Wherever the size/nature of the invariant set has been mentioned, we have aimed to mention the smallest invariant set under which the mentioned answer is given. This is done by starting with a template size of 10, and working one's way down to an invariant of size 1. Where it is mentioned that an empty invariant set suffices, that would imply that for a template of size 1, a trivial invariant such as $0 \ge 0$ is given.

Also, in some cases below instead of reporting the entropy we report the exponential of entropy $e^{H(\mu)}$ for convenience. Further, instead of saying $K=0,1,$ etc, we sometimes say entropy after step $K$, essentially meaning the same.

For examples taken from ~\cite{akshay2023mdps}, the structure and initialization of the MDPs/MCs have been taken from the Artefact Evaluation of the paper at \url{https://zenodo.org/records/7922231}.

\subsection{Benchmarks Details and Results}
\label{app:benchmarks}
We start by describing eight MDP examples, including the seven listed in the
main text.  In the numerical results reported below, all equalities should be
interpreted as approximate (with the last digit uncertain).

\subsubsection{MDP examples}
\begin{enumerate}
\renewcommand{\labelenumi}{\arabic{enumi}.}
\item MDP $M_1$ with 3 states, taken from running example of ~\cite{akshay2023mdps})
\begin{center}
\runningexamplefig

\end{center}
At the start, in $\mu_0$, state A has probability $\frac{1}{2}$, state B has probability $\frac{1}{3}$, state C has probability $\frac{1}{6}$.\\
In this example, it is optimal to choose action $a_1$ always. As the initial probability mass at $A$ exceeds $\frac{1}{e}$, any additional probability mass will only decrease its contribution to the entropy.
\begin{itemize}
\item $K=0$ Starting from $\mu_0$: With an invariant set of size 2, Gurobi chooses
  action $a_1$ at $A$ and outputs an answer of $e^{H(\mu)}=2.78718$. It is
  worthwhile to note that if one simulates a run of this MDP with this strategy,
  the maximum value of $e^{H(\mu)}$ turns out to be less than 2.7495. However, we
  find numerically that for a particular convex combination of the first two
  distributions (i.e., \(\mu_0\) and \(M^{\pi}(\mu_0, 1)\)) , the exponential of
  entropy is at least 2.78717, giving a lower bound on the optimal possible
  output of our algorithm.
\item $K=1$ Starting from \(M(\mu_0, 1)\): Here, the optimal value of
  $e^{H(\mu)}$ is approximately 1.97229, and Gurobi finds this with an invariant
  set of size two. \item $K=2$ Starting from $M(\mu_0, 2)$: Optimal value of $e^{H(\mu)}$ is
  approximately \(1.66819\), and this is found by Gurobi with an invariant set of
  size two. \end{itemize}
\item MDP $M_2$ with 4 states:
\begin{center}
\begin{tikzpicture}[scale=0.15]
\tikzstyle{every node}+=[inner sep=0pt]
\draw [black] (31.5,-15.9) circle (3);
\draw (31.5,-15.9) node {$3$};
\draw [black] (20,-27.8) circle (3);
\draw (20,-27.8) node {$1$};
\draw [black] (42.1,-27.8) circle (3);
\draw (42.1,-27.8) node {$2$};
\draw [black] (31.5,-40.1) circle (3);
\draw (31.5,-40.1) node {$0$};
\draw [black] (29.45,-37.91) -- (22.05,-29.99);
\fill [black] (22.05,-29.99) -- (22.23,-30.92) -- (22.96,-30.23);
\draw (21.28,-35.48) node [right] {$a,1$};
\draw [black] (33.46,-37.83) -- (40.14,-30.07);
\fill [black] (40.14,-30.07) -- (39.24,-30.35) -- (40,-31);
\draw (37.35,-35.4) node [right] {$b,1$};
\draw [black] (18.778,-25.077) arc (-166.00898:-282.03238:8.448);
\fill [black] (28.74,-14.77) -- (28.06,-14.12) -- (27.85,-15.09);
\draw (20.37,-15.69) node [left] {$4/5$};
\draw [black] (22.809,-26.75) arc (107.38534:72.61466:27.582);
\fill [black] (39.29,-26.75) -- (38.68,-26.03) -- (38.38,-26.99);
\draw (31.05,-24.99) node [above] {$1/5$};
\draw [black] (34.273,-14.802) arc (100.95769:-17.57119:8.078);
\fill [black] (34.27,-14.8) -- (35.15,-15.14) -- (34.96,-14.16);
\draw (42.38,-15.9) node [right] {$1/2$};
\draw [black] (39.306,-28.888) arc (-71.94154:-108.05846:26.634);
\fill [black] (22.79,-28.89) -- (23.4,-29.61) -- (23.71,-28.66);
\draw (31.05,-30.7) node [below] {$1/2$};
\draw [black] (29.42,-18.06) -- (22.08,-25.64);
\fill [black] (22.08,-25.64) -- (23,-25.41) -- (22.28,-24.72);
\draw (25.22,-20.38) node [left] {$1/3$};
\draw [black] (33.5,-18.14) -- (40.1,-25.56);
\fill [black] (40.1,-25.56) -- (39.95,-24.63) -- (39.2,-25.3);
\draw (37.34,-20.4) node [right] {$2/3$};
\end{tikzpicture}
\end{center}
The initial distribution $\mu_0$ is (0.2, 0.4, 0, 0.4).
\begin{itemize}
\item $K=0$ Starting from $\mu_0$: The answer given by our Gurobi based solver (in
  a specific run) with an invariant set of size two is $e^{H(\mu)}=3.7568$, with
  action \(a\) being chosen with probability about 0.4. With a brute force
  computation, we can check that the optimal memoryless policy involves
  selecting action \(a\) with probability about 0.788 (this number is slightly
  rounded off, see the below discussion for how to get the analytic value of the
  optimum), ans leads to a maximum value of $e^{H(\mu)}$ of at most $2.9776$.
\item $K=1$ Starting from $M(\mu_0, 1)$: our Gurobi based tool can be made (by
  choosing appropriate initial conditions) to give an answer of
  $e^{H(\mu)}=2.984322$ with an invariant of size five, with the probability of
  choosing action $a$ being \(0.57\). This is an example where the initial starting
  point for Gurobi matters. Note that under this particular probability,
  $M(\mu_0, 4)$ is a linear combination of $M(\mu_0, 1), M(\mu_0, 2), M(\mu_0, 3)$.
\begin{align*}
\mu^{(1)} &= \left(
0,
\frac{371}{1500},
\frac{649}{1500},
\frac{480}{1500}
\right)
\\[1em]
\mu^{(2)} &= \left(
0,
\frac{323}{1000},
\frac{1971}{7500},
\frac{2071}{5000}
\right)
\\[1em]
\mu^{(3)} &= \left(
0,
\frac{2021}{7500},
\frac{5111}{15000},
\frac{1949}{5000}
\right)
\\[1em]
\mu^{(4)} &= \left(
0,
\frac{3003}{10000},
\frac{3922}{12500},
\frac{19297}{50000}
\right)
\end{align*}
$\mu^{(4)}=\frac{3}{10}\mu^{(1)}+\frac{7}{10}\mu^{(2)}$. It then follows by induction
that the convex hull of the the first three points amongst these four, which can
be defined by four hyperplanes, serves as a valid invariant set: \(M(\mu_0, t)\)
lies in this set for all \(t \geq 1\).   

As in the \(K=0\) case above, a brute force computation shows that the strategy
of choosing action $a$ with probability 0.788 gives a maximum value of at most
$e^{H(\mu_0)}=2.9776$.
\end{itemize}
    \item A five-state MDP $M_3$ with two bottom strongly connected component
      and a transient state.  This is the same example as the one used in
      \cref{prop:no-state-yes-dist} above.
\begin{center}
\begin{tikzpicture}[scale=0.1]
    \tikzstyle{every node}+=[inner sep=0pt]
    \draw [black] (34.5,-14.9) circle (3);
    \draw [black] (24.2,-31.5) circle (3);
    \draw [black] (11.9,-31.5) circle (3);
    \draw [black] (44.3,-31.5) circle (3);
    \draw [black] (56.7,-31.5) circle (3);
    \draw [black] (32.92,-17.45) -- (25.78,-28.95);
    \fill [black] (25.78,-28.95) -- (26.63,-28.53) -- (25.78,-28.01);
    \draw (28.72,-21.91) node [left] {$a,0.7$};
    \draw [black] (13.055,-28.763) arc (143.32255:36.67745:6.228);
    \fill [black] (13.06,-28.76) -- (13.93,-28.42) -- (13.13,-27.82);
    \draw (18.05,-25.75) node [above] {$1$};
    \draw [black] (23.089,-34.255) arc (-35.79607:-144.20393:6.212);
    \fill [black] (23.09,-34.26) -- (22.22,-34.61) -- (23.03,-35.2);
    \draw (18.05,-37.33) node [below] {$1$};
    \draw [black] (45.391,-28.736) arc (144.72676:35.27324:6.258);
    \fill [black] (55.61,-28.74) -- (55.56,-27.79) -- (54.74,-28.37);
    \draw (50.5,-25.59) node [above] {$0.5$};
    \draw [black] (59.38,-30.177) arc (144:-144:2.25);
    \draw (63.95,-31.5) node [right] {$0.5$};
    \fill [black] (59.38,-32.82) -- (59.73,-33.7) -- (60.32,-32.89);
    \draw [black] (55.21,-34.073) arc (-43.35523:-136.64477:6.477);
    \fill [black] (45.79,-34.07) -- (45.98,-35) -- (46.7,-34.31);
    \draw (50.5,-36.6) node [below] {$0.5$};
    \draw [black] (41.477,-32.48) arc (316.87498:28.87498:2.25);
    \draw (36.93,-30.53) node [left] {$0.5$};
    \fill [black] (41.81,-29.86) -- (41.56,-28.94) -- (40.88,-29.67);
    \draw [black] (36.03,-17.48) -- (42.77,-28.92);
    \fill [black] (42.77,-28.92) -- (42.8,-27.97) -- (41.94,-28.48);
    \draw (38.75,-24.45) node [left] {$b,0.05$};
    \draw [black] (37.466,-14.473) arc (93.45714:12.96842:18.172);
    \fill [black] (56.27,-28.53) -- (56.58,-27.64) -- (55.6,-27.87);
    \draw (52.45,-17.56) node [above] {$b,0.05$};
    \draw [black] (31.689,-13.886) arc (277.90143:-10.09857:2.25);
    \draw (27.3,-9.55) node [above] {$b,0.9$};
    \fill [black] (33.59,-12.05) -- (33.98,-11.19) -- (32.99,-11.33);
    \draw [black] (35.359,-12.038) arc (191.02106:-96.97894:2.25);
    \draw (41.56,-9.48) node [above] {$a,0.3$};
    \fill [black] (37.29,-13.84) -- (38.18,-14.18) -- (37.98,-13.2);
\end{tikzpicture}
\label{fig:counterexample}
\end{center}
At the start, i.e., in $\mu_0$, the entire probability mass is in the top state.
We can parameterize all memoryless strategies in terms of a single parameter
\(x \in [0, 1]\), and can then plot the entropies at times
\(t \in \inb{1, 2, 3}\) as a function of \(x\), as described in the proof of
\cref{prop:no-state-yes-dist}. As noted there, the maximum of these three
functions is always above 0.77 (from the same computation, one can also find
that the best achievable value of the exponential of the entropy under
memoryless strategies is about \(2.164\) ), whereas with a policy alternating
between choosing \(a\) and \(b\), we can achieve a maximum entropy below
\(0.73\). Our Gurobi based solver reports an answer for entropy of about
\(\log 3\), and this is expected, since under the restriction to convex
invariants, the minimum possible maximum entropy is no less than $\ln 3$. To see
this, we note that in the convex hull of the initial distribution and
\(M(\mu_0, n)\), \(M(\mu_0, n+1)\) for large \(n\), there lie distributions with
entropy arbitrary close to $\ln{3}$.
\item MDP $M_4$ with 5 states and 2 actions : 
\begin{center}
\begin{tikzpicture}[scale=0.1]
\tikzstyle{every node}+=[inner sep=0pt]
\draw [black] (18.2,-25.8) circle (3);
\draw (18.2,-25.8) node {A};

\draw [black] (27.7,-25.8) circle (3);
\draw (27.7,-25.8) node {B};

\draw [black] (37.7,-25.8) circle (3);
\draw (37.7,-25.8) node {C};

\draw [black] (47.7,-25.8) circle (3);
\draw (47.7,-25.8) node {D};

\draw [black] (37.7,-35.7) circle (3);
\draw (37.7,-35.7) node {E};

\draw [black] (21.2,-25.8) -- (24.7,-25.8);
\fill [black] (24.7,-25.8) -- (23.9,-25.3) -- (23.9,-26.3);

\draw [black] (30.7,-25.8) -- (34.7,-25.8);
\fill [black] (34.7,-25.8) -- (33.9,-25.3) -- (33.9,-26.3);

\draw [black] (40.7,-25.8) -- (44.7,-25.8);
\fill [black] (44.7,-25.8) -- (43.9,-25.3) -- (43.9,-26.3);

\draw [black] (37.7,-32.7) -- (37.7,-28.8);
\fill [black] (37.7,-28.8) -- (37.2,-29.6) -- (38.2,-29.6);
\draw (38.2,-30.75) node [right] {$a,1$};

\draw [black] (39.023,-38.38) arc (54:-234:2.25);
\draw (37.7,-42.95) node [below] {$b,1$};
\fill [black] (36.38,-38.38) -- (35.5,-38.73) -- (36.31,-39.32);

\draw [black] (50.38,-24.477) arc (144:-144:2.25);
\fill [black] (50.38,-27.12) -- (50.73,-28) -- (51.32,-27.19);
\end{tikzpicture}
\end{center}
At the start, with $\mu_0$ as state A has probability $\frac{9}{10}$ and state E has probability $\frac{1}{10}$.
\begin{itemize}
\item $K=2$ Starting from \(M(\mu_0, 2)\): Our Gurobi based solver achieves a
  value of $e^{H(\mu)}$ of roughly \(2.0\), even with an invariant of size one..
  The optimal memoryless policy, in comparison, yields an answer of
  $e^{H(\mu)} \approxeq 1.35506$, with the probability of choosing action \(b\) being
  roughly 0.252241: this can be obtained by parameterizing such strategies as in
  the proof of \cref{prop:no-state-yes-dist}, which suggests minimizing the
  two-step entropy given by \\ \begin{equation*} -\Bigg( \left(0.9 + 0.1 x (1 -
      x) \right) \ln\left(0.9 + 0.1 x (1 - x)\right) + 0.1 (1 - x) \ln\left(0.1 (1 - x)\right) + 0.1 x^2 \ln\left(0.1 x^2\right) \Bigg),
\end{equation*}
  and then verifying the resulting strategy.
 
  This is again an example of an MDP where a randomized policy is better than
  any deterministic. Either deterministic policy would give a maximum value of
  roughly $e^{H(\mu)}=1.38414$ over the run of the embedded Markov chain.
  Interestingly, the optimal policy under the restriction to convex sets gives a
  maximum entropy of $\ln{2}$, with the deterministic policy that always chooses
  action \(a\) at state \(E\). This is because with this strategy, there exists
  a point in the convex hull of \(M^{\pi}(\mu_0, 2)\) and
  \(M^{\pi}(\mu_0, 3)\) whose entropy is $\ln(2)$, and for a strategy of any other
  probability of selecting \(a\), there is a point of higher entropy in this
  convex hull.
\end{itemize}
\item MDP $M_5$ with 3 states, with warm-up parameter \(K=0\):
\begin{center}
\begin{tikzpicture}[scale=0.16]
\tikzstyle{every node}+=[inner sep=0pt]
\draw [black] (32.6,-18.6) circle (3);
\draw (32.6,-18.6) node {$A$};
\draw [black] (21.1,-33.8) circle (3);
\draw (21.1,-33.8) node {$B$};
\draw [black] (43.6,-33.8) circle (3);
\draw (43.6,-33.8) node {$C$};
\draw [black] (31.277,-15.92) arc (234:-54:2.25);
\draw (32.6,-11.35) node [above] {$a,1/2$};
\fill [black] (33.92,-15.92) -- (34.8,-15.57) -- (33.99,-14.98);
\draw [black] (22.188,-31.005) arc (155.83642:129.94282:29.66);
\fill [black] (22.19,-31.01) -- (22.97,-30.48) -- (22.06,-30.07);
\draw (26.30,-23.85) node [left] {{\small $a,1/4$}};
\draw [black] (41.319,-31.853) arc (-133.03184:-155.18283:33.663);
\fill [black] (41.32,-31.85) -- (41.08,-30.94) -- (40.39,-31.67);
\draw (37.43,-29.36) node [left] {{\small $a,1/4$}};
\draw [black] (34.633,-16.41) arc (164.85446:-123.14554:2.25);
\draw (39.66,-15.36) node [right] {$b,1/4$};
\fill [black] (35.57,-18.88) -- (36.22,-19.57) -- (36.48,-18.61);
\draw [black] (34.779,-20.661) arc (44.57648:27.20885:42.615);
\fill [black] (42.32,-31.09) -- (42.4,-30.15) -- (41.51,-30.6);
\draw (38.73,-24.21) node [right] {$b,1/4$};
\draw [black] (31.169,-21.236) arc (-30.093:-44.12777:53.731);
\fill [black] (23.25,-31.71) -- (24.16,-31.48) -- (23.45,-30.78);
\draw (28.1,-27.11) node [right] {{\small $b,1/2$}};
\draw [black] (41.385,-35.816) arc (-53.36933:-126.63067:15.143);
\fill [black] (41.38,-35.82) -- (40.44,-35.89) -- (41.04,-36.69);
\draw (32.35,-39.31) node [below] {$1/3$};
\draw [black] (19.514,-36.333) arc (-4.31364:-292.31364:2.25);
\draw (14.72,-38.63) node [left] {$1/3$};
\fill [black] (18.13,-34.08) -- (17.36,-33.52) -- (17.29,-34.52);
\draw [black] (19.516,-31.266) arc (-156.80718:-277.41359:9.745);
\fill [black] (29.73,-17.76) -- (29,-17.17) -- (28.87,-18.16);
\draw (20.13,-20.15) node [left] {$1/3$};
\draw [black] (35.456,-17.724) arc (98.01666:-26.23133:9.521);
\fill [black] (35.46,-17.72) -- (36.32,-18.11) -- (36.18,-17.12);
\draw (45.08,-20.19) node [right] {$2/5$};
\draw [black] (46.557,-34.228) arc (109.49148:-178.50852:2.25);
\draw (49.76,-38.98) node [right] {$2/5$};
\fill [black] (45.06,-36.41) -- (44.85,-37.33) -- (45.8,-37);
\draw [black] (40.6,-33.8) -- (24.1,-33.8);
\fill [black] (24.1,-33.8) -- (24.9,-34.3) -- (24.9,-33.3);
\draw (32.35,-34.3) node [below] {$1/5$};
\end{tikzpicture}
\end{center}
Initially, all the probability mass is in A, i.e., $\mu_0=(1,0,0)$  .
With an invariant of size one, our Gurobi based tool reports an answer of
$e^{H(\mu)}=2.9543$, choosing action \(a\) with probability one. By a brute force
computation we check that this is indeed the optimal memoryless policy for this
problem, but it achieves a somewhat better answer of $e^{H(\mu)}=2.93492$, again
highlighting that restriction to a convex invariant can be
over-cautious. 

\item Split (MDP with 4 states and 2 disconnected components taken from     ~\cite{akshay2023mdps}): 
\begin{center}
\begin{tikzpicture}[scale=0.15]
\tikzstyle{every node}+=[inner sep=0pt]
\draw [black] (22.3,-23.1) circle (3);
\draw (22.3,-23.1) node {$A$};
\draw [black] (22.3,-35.9) circle (3);
\draw (22.3,-35.9) node {$B$};
\draw [black] (36.6,-23.1) circle (3);
\draw (36.6,-23.1) node {$C$};
\draw [black] (36.3,-35.8) circle (3);
\draw (36.3,-35.8) node {$D$};
\draw [black] (24.843,-24.645) arc (45.98655:-45.98655:6.751);
\fill [black] (24.84,-34.36) -- (25.77,-34.16) -- (25.07,-33.44);
\draw (27.4,-29.5) node [right] {$b,1$};
\draw [black] (20.977,-20.42) arc (234:-54:2.25);
\draw (22.3,-15.85) node [above] {$a,0.9$};
\fill [black] (23.62,-20.42) -- (24.5,-20.07) -- (23.69,-19.48);
\draw [black] (22.3,-26.1) -- (22.3,-32.9);
\fill [black] (22.3,-32.9) -- (22.8,-32.1) -- (21.8,-32.1);
\draw (21.8,-29.5) node [left] {$a,0.1$};
\draw [black] (37.623,-38.48) arc (54:-234:2.25);
\draw (36.3,-43.05) node [below] {$1$};
\fill [black] (34.98,-38.48) -- (34.1,-38.83) -- (34.91,-39.42);
\draw [black] (23.623,-38.58) arc (54:-234:2.25);
\draw (22.3,-43.15) node [below] {$1$};
\fill [black] (20.98,-38.58) -- (20.1,-38.93) -- (20.91,-39.52);
\draw [black] (36.53,-26.1) -- (36.37,-32.8);
\fill [black] (36.37,-32.8) -- (36.89,-32.01) -- (35.89,-31.99);
\draw (35.91,-29.44) node [left] {$0.5$};
\draw [black] (35.277,-20.42) arc (234:-54:2.25);
\draw (36.6,-15.85) node [above] {$0.5$};
\fill [black] (37.92,-20.42) -- (38.8,-20.07) -- (37.99,-19.48);
\end{tikzpicture}
\end{center}
At the start, i.e., at $\mu_0=0$, state A has probability $\frac{1}{3}$ and state C has probability $\frac{2}{3}$.
We consider the case where the warm-up parameter satisfies \(K=0\). Our Gurobi
based solved reports an answer of about $e^{H(\mu)}=3.77976$, with an invariant
set of size one ($A+B \le \frac{1}{3}$), and the strategy of always choosing
action \(b\). On simulating the MDP with this strategy, we find that the maximum
entropy attained is at most $\log 3$. However, under the restriction to convex
invariant sets, the optimal answer for this strategy is indeed at least
$e^{H(\mu)}=3.77976$, as $\mu_0=(\frac{1}{3}, 0, \frac{2}{3}, 0)$ and the
probability distributions \(M^{\pi}(\mu_0, n)\) converge to
$(0, \frac{1}{3}, 0, \frac{2}{3})$ as \(n \rightarrow \infty\), so that the entropy of the
average of $\mu_0$ and $M^{\pi}(\mu_0, n)$ for high enough \(n\) comes arbitrarily
close to $\ln(3.77976)$ from below.
\item Gridworld 1 (MDP with 8 states): This is a grid of size (2,4) with the
  point (0,1) being unreachable, as shown in the figure with (0,0) being
  top-left and (1,3) being bottom-right. Note that all transition probabilities
  that are not labelled on edges are 1. Also there is a non-deterministic action
  at state (0,2), where one can go down or right, and a self-loop at (1,3)
  (thus, this state is an absorbing state). The initial distribution $\mu_0$
  assigns all probability mass and the top-left state (0,0).
\begin{center}
\begin{tikzpicture}[scale=0.1]
\tikzstyle{every node}+=[inner sep=0pt]
\draw [black] (18.9,-24.2) circle (3);
\draw [black] (28.2,-24.2) circle (3);
\draw [black] (38.3,-24.2) circle (3);
\draw [black] (47.7,-24.2) circle (3);
\draw [black] (18.9,-33.9) circle (3);
\draw [black] (28.2,-33.9) circle (3);
\draw [black] (38.3,-33.9) circle (3);
\draw [black] (47.7,-33.9) circle (3);
\draw [black] (41.3,-24.2) -- (44.7,-24.2);
\fill [black] (44.7,-24.2) -- (43.9,-23.7) -- (43.9,-24.7);
\draw [black] (38.3,-27.2) -- (38.3,-30.9);
\fill [black] (38.3,-30.9) -- (38.8,-30.1) -- (37.8,-30.1);
\draw [black] (47.7,-27.2) -- (47.7,-30.9);
\fill [black] (47.7,-30.9) -- (48.2,-30.1) -- (47.2,-30.1);
\draw [black] (41.3,-33.9) -- (44.7,-33.9);
\fill [black] (44.7,-33.9) -- (43.9,-33.4) -- (43.9,-34.4);
\draw [black] (18.9,-27.2) -- (18.9,-30.9);
\fill [black] (18.9,-30.9) -- (19.4,-30.1) -- (18.4,-30.1);
\draw [black] (21.9,-33.9) -- (25.2,-33.9);
\fill [black] (25.2,-33.9) -- (24.4,-33.4) -- (24.4,-34.4);
\draw [black] (31.2,-33.9) -- (35.3,-33.9);
\fill [black] (35.3,-33.9) -- (34.5,-33.4) -- (34.5,-34.4);
\end{tikzpicture}
\end{center}
There is no other state (except (1, 3)) with no possible transition to another state. At any other state, one of up to two actions - Down or Right - can be chosen (these are not depicted in the figure, with the figure only showing the structure of MDP), provided the action does not force one off the board or into an unreachable state. Note that in our implementation we simply remove unreachable states, e.g., (0,1) above. 

Note that there is no randomness in this example, and the entropy of every
resulting distribution is \(0\). However, because we are restricted to finding
the largest entropy possible for distributions satisfying a convex invariant,
the smallest such invariant will be the convex hull of all these resulting
distributions. Our solver should report as the optimal value of \(e^{H(u)}\) the
number of vertices along the shortest path from the source to the sink. Indeed,
with the warm-up parameter set to \(0\), and an invariant set of size two, our
Gurobi based solver estimates an optimal answer of approximately 5
(corresponding to a shortest path of length four containing five vertices).

\item Gridworld 2 (MDP with 12 states): We have a grid of size (3,4) with the
  unreachable states as (1, 1), (2, 1) and (1, 3) (with \((0, 0)\) being top
  left and \((2, 3)\) the bottom right). At any other state, one of up to two
  actions - Down or Right - can be chosen, provided the action does not force
  one off the board or into an unreachable state. All transitions that are drawn
  in the picture are assumed to have probability 1. Further, the state (2, 3) is
  absorbing, so has a self loop with prob 1. From any reachable state except (2,
  3) which does not have transitions marked in the figure, in the actual model
  there is a transition from that state to all other reachable states with
  uniform probability. The initial distribution $\mu_0$ is top-left state (0,0)
  having probability
  1. \begin{center}
\begin{tikzpicture}[scale=0.1]
\tikzstyle{every node}+=[inner sep=0pt]
\draw [black] (18.9,-24.2) circle (3);
\draw [black] (28.2,-24.2) circle (3);
\draw [black] (38.3,-24.2) circle (3);
\draw [black] (47.7,-24.2) circle (3);
\draw [black] (18.9,-33.9) circle (3);
\draw [black] (28.2,-33.9) circle (3);
\draw [black] (38.3,-33.9) circle (3);
\draw [black] (47.7,-33.9) circle (3);
\draw [black] (18.9,-44.2) circle (3);
\draw [black] (28.2,-44.2) circle (3);
\draw [black] (38.3,-44.2) circle (3);
\draw [black] (47.7,-44.2) circle (3);
\draw [black] (41.3,-24.2) -- (44.7,-24.2);
\fill [black] (44.7,-24.2) -- (43.9,-23.7) -- (43.9,-24.7);
\draw [black] (38.3,-27.2) -- (38.3,-30.9);
\fill [black] (38.3,-30.9) -- (38.8,-30.1) -- (37.8,-30.1);
\draw [black] (18.9,-27.2) -- (18.9,-30.9);
\fill [black] (18.9,-30.9) -- (19.4,-30.1) -- (18.4,-30.1);
\draw [black] (21.9,-24.2) -- (25.2,-24.2);
\fill [black] (25.2,-24.2) -- (24.4,-23.7) -- (24.4,-24.7);
\draw [black] (31.2,-24.2) -- (35.3,-24.2);
\fill [black] (35.3,-24.2) -- (34.5,-23.7) -- (34.5,-24.7);
\draw [black] (18.9,-36.9) -- (18.9,-41.2);
\fill [black] (18.9,-41.2) -- (19.4,-40.4) -- (18.4,-40.4);
\draw [black] (38.3,-36.9) -- (38.3,-41.2);
\fill [black] (38.3,-41.2) -- (38.8,-40.4) -- (37.8,-40.4);
\draw [black] (41.3,-44.2) -- (44.7,-44.2);
\fill [black] (44.7,-44.2) -- (43.9,-43.7) -- (43.9,-44.7);
\end{tikzpicture}
\end{center}

Note again that while there is some randomness in this example, it is similar to
the previous example in that there is a strategy under which the entropy of
every resulting distribution is \(0\). However, again, because we are restricted
to finding the largest entropy possible for distributions satisfying a convex
invariant, the smallest such invariant will be the convex hull of all these
resulting distributions. Our solver should thus report as the optimal value of
\(e^{H(u)}\) the number of vertices along the shortest path from the source to
the sink, , and should find the strategy that induces this shortest path.
Indeed, with the warm-up parameter set to \(0\), and an invariant set of size
two, our Gurobi based solver estimates an optimal answer of approximately 6
(corresponding to a shortest path of length five containing six vertices), and
find the strategy that induces this shortest path (Right at \(0, 0\), and Down
at \((0, 2)\)), using an invariant set of size one.

\end{enumerate}

\subsubsection{Markov chain examples}

Below we describe our Markov chain examples.
\begin{enumerate}

    \item Markov chain $MC1$ with two states
\begin{center}
\begin{tikzpicture}[scale=0.2]
\tikzstyle{every node}+=[inner sep=0pt]

\draw [black] (47.3,-26) circle (3);
\draw (47.3,-26) node {B};

\draw [black] (31.8,-26) circle (3);
\draw (31.8,-26) node {A};

\draw [black] (49.98,-24.677) arc (144:-144:2.25);
\fill [black] (49.98,-27.32) -- (50.33,-28.2) -- (50.92,-27.39);
\draw (54.55,-26) node [right] {1};

\draw [black] (34.8,-26) -- (44.3,-26);
\fill [black] (44.3,-26) -- (43.5,-25.5) -- (43.5,-26.5);
\draw (39.55,-26) node [above] {0.5};

\draw [black] (29.12,-27.323) arc (324:36:2.25);
\fill [black] (29.12,-24.68) -- (28.77,-23.8) -- (28.18,-24.61);
\draw (24.55,-26) node [left] {0.5};

\end{tikzpicture}
\end{center}
At the start, i.e., in $\mu_0$, state A has probability $\frac{2}{3}$ and state B has probability $\frac{1}{3}$.
    \begin{itemize}
    \item $K=0$ Starting from $\mu_0$: Our Gurobi based solver reports an answer
      of $e^{H(\mu)}=2.000$, with an invariant set of size one. By simulating a
      run of the MC, the maximum value of $e^H(\mu)$ over the run is 1.88988.
      However, since the distribution \((1/2, 1/2)\) is contained in the convex
      hull of \(\mu_0\) and \(M(\mu_0, 1)\), the answer reported by our solver is
      the best possible for any convex invariant based method.
    \item $K=1$ Starting from \(M(\mu_0, 1)\): The solver reports the
      approximately correct optimal value of $e^{H(\mu)}=1.88988$, using an
      invariant set of size one.
    \item $K=2$ Starting from \(M(\mu_0, 2)\): The solver reports the
      approximately correct optimal value of $e^{H(\mu)}=1.88988$, using an
      invariant set of size one. \end{itemize}
\item Markov chain $MC2$ with two states:
\begin{center}
\begin{tikzpicture}[
  state/.style={circle, draw, minimum size=1.2cm, inner sep=2pt},
  arrow/.style={->, thick},
  loop right/.style={->, thick, looseness=8, out=30, in=-30},
  loop left/.style={->, thick, looseness=8, out=150, in=-150}
]

\node[state] (A) at (0,0) {A};
\node[state] (B) at (3,0) {B};

\draw[arrow] (A) to[bend left=30] node[above] {1} (B);
\draw[arrow] (B) to[bend left=30] node[below] {0.5} (A);
\draw[arrow, loop right] (B) to node[right] {0.5} (B);
\end{tikzpicture}
\end{center}
At the start, i.e., in $\mu_0$, both A and B have $\frac{1}{2}$ of the probability each. \begin{itemize}
    \item $K=1,2$, starting from \(M(\mu_0, 1)\) or \(M(\mu_0, 2)\): Our Gurobi based
      solver outputs the correct optimal value of $e^{H(\mu)}=1.93782$, with an
      invariant set of size two. \item \(K = 3\), starting from \(M(\mu_0, 3)\): Our Gurobi based
      solver outputs the correct optimal value of
      $e^{H(\mu)}=1.903114$ with an invariant set of size two.
\end{itemize}
\item Markov chain MC3, an ergodic Markov chain with three states: 
\begin{center}
\begin{tikzpicture}[scale=0.18]
\tikzstyle{every node}+=[inner sep=0pt]
\draw [black] (23.8,-24.4) circle (3);
\draw (23.8,-24.4) node {$A$};
\draw [black] (41.4,-24.4) circle (3);
\draw (41.4,-24.4) node {$B$};
\draw [black] (32.3,-36.7) circle (3);
\draw (32.3,-36.7) node {$C$};
\draw [black] (21.031,-23.276) arc (275.63354:-12.36646:2.25);
\draw (17.55,-18.87) node [above] {$0.5$};
\fill [black] (23.01,-21.52) -- (23.43,-20.67) -- (22.43,-20.77);
\draw [black] (26.8,-24.4) -- (38.4,-24.4);
\fill [black] (38.4,-24.4) -- (37.6,-23.9) -- (37.6,-24.9);
\draw (32.6,-24.9) node [below] {$0.25$};
\draw [black] (29.41,-35.939) arc (-113.47737:-177.22927:9.859);
\fill [black] (29.41,-35.94) -- (28.88,-35.16) -- (28.48,-36.08);
\draw (24.63,-33.86) node [left] {$0.25$};
\draw [black] (33.623,-39.38) arc (54:-234:2.25);
\draw (32.3,-43.95) node [below] {$0.5$};
\fill [black] (30.98,-39.38) -- (30.1,-39.73) -- (30.91,-40.32);
\draw [black] (41.493,-27.388) arc (-6.35527:-66.63557:10.561);
\fill [black] (41.49,-27.39) -- (40.91,-28.13) -- (41.9,-28.24);
\draw (40.07,-33.89) node [right] {$0.25$};
\draw [black] (30.59,-34.23) -- (25.51,-26.87);
\fill [black] (25.51,-26.87) -- (25.55,-27.81) -- (26.37,-27.24);
\draw (28.65,-29.19) node [right] {$0.25$};
\draw [black] (39.62,-26.81) -- (34.08,-34.29);
\fill [black] (34.08,-34.29) -- (34.96,-33.94) -- (34.16,-33.35);
\draw (36.27,-29.16) node [left] {$0.25$};
\draw [black] (25.242,-21.784) arc (141.9438:38.0562:9.345);
\fill [black] (25.24,-21.78) -- (26.13,-21.46) -- (25.34,-20.85);
\draw (32.6,-17.7) node [above] {$0.25$};
\draw [black] (42.602,-21.664) arc (184.00853:-103.99147:2.25);
\draw (47.12,-18.62) node [right] {$0.5$};
\fill [black] (44.3,-23.69) -- (45.14,-24.13) -- (45.07,-23.14);
\end{tikzpicture}
\end{center}
Initially, the probability distribution $\mu_0$ is (1, 0, 0).
Starting from $\mu_0$ (\(K=0\)), the Gurobi based solver reports the optimal
answer of $e^{H(\mu)}=3$.

\item Insulin: Pharmokinetics benchmark taken originally
  from~\cite{DBLP:conf/qest/ChadhaKVAK11}, but as modified in~\cite[Example
  2]{DBLP:journals/jacm/AgrawalAGT15}, which we describe briefly. This Markov
  chain has five non-dummy nodes where three of the nodes correspond to the
  ``body compartments'' Plasma (Pl), Intersticial Fluid (IF), and ``Utilization
  and degradation'' (Ut). The node (Dr) models the drug being injected, and the
  node (Cl) models ``the drug being cleared from the body after degradation.''
  The transition matrix of the chain is (with the first row corresponding to
  state Dr,
  and the last to state Cl):
  \begin{equation*}
    \begin{bmatrix}
       \frac{4699}{5000} & \frac{1317}{50000} & \frac{641}{25000} & \frac{399}{50000} & \frac{3}{12500}\\
       0 & \frac{5181}{25000} & \frac{24149}{50000} & \frac{3703}{12500} & \frac{677}{50000} \\
       0 & \frac{15531}{100000} & \frac{42539}{100000} & \frac{3953}{10000} & \frac{3}{125} \\
       0 & \frac{1299}{50000} & \frac{5389}{50000} & \frac{38927}{50000} & \frac{877}{10000}\\
       0 & 0 & 0 & 0 & 1
    \end{bmatrix}
  \end{equation*}

  At the start, state Dr has probability $\frac{3}{10}$ and state Cl has
  probability $\frac{7}{10}$. For this problem, starting from $\mu_0$, the answer
  reported by our Gurobi based solver seems to depend quite significantly on the
  seed, with one particular run reporting the value $e^{H(\mu)}=2.72$, obtained
  with an invariant of size five (for this example, it also helps to run the
  solver without the pre-imposed bounds on the magnitudes of the template
  variables). The optimal answer, obtained by simulating the Markov chain, is
  $e^{H(\mu)}=2.3717679$.

\item Pagerank (A 5-state Markov chain taken
  from~\cite{DBLP:journals/jacm/AgrawalAGT15}[Example 1, Fig 3.]). This chains
  has five states, with the transition matrix given by
  \begin{equation*}
    \begin{bmatrix}
      \frac {1} {80} &\frac {19} {60} & \frac {3} {40} &\frac {19} {60} &\frac {67} {240} \\
      \frac {1} {80} &\frac {1} {20} &\frac {41} {120} &\frac {19} {60} &\frac {67} {240} \\
      \frac {1} {16} &\frac {1} {4} &\frac {3} {8} &\frac {1} {4} &\frac {1} {16} \\
      \frac {1} {80} &\frac {1} {20} &\frac {7} {8} &\frac {1} {20}  &\frac {1} {80} \\
      \frac {33} {80} &\frac {9} {20} &\frac {3} {40} &\frac {1} {20} &\frac {1} {80}
    \end{bmatrix},
  \end{equation*}
  with the rows and column corresponding to states A, B, C, D, and E in
  alphabetical order. At the start, the probability distribution over the states
  A, B, C, D, and E is
  $(\frac{1}{16}, \frac{4}{16}, \frac{6}{16}, \frac{4}{16}, \frac{1}{16})$.
Starting from $\mu_0$ ($K=0$), the answer given by our Gurobi based solver with
  an invariant set of size three is about $e^{H(\mu)}=4.65$ (for this problem, it
  helps to run the solver without any pre-imposed bounds on the template
  variables, and the answers reported by the solver seems to depend
  significantly on the initial random seed provided to it). The optimal value,
  found by simulation, is $e^{H(\mu)}=4.231614$.
\end{enumerate}

\subsection{Summary of Results}
\label{app:summary}
We summarize the above results in Table~\ref{tab:full-results-log}. As can be seen,
in most benchmarks the results given by our approach are not far from the
optimal results (empirically determined using simulations in most
cases). Further experiments show that our approach works on even larger
benchmarks, but as there seems no way to compare those results against optimal
results in those cases (due to difficulty of finding the optimal answer in large
systems) in, we do not describe them here.

Thus, in summary, the prototype implementation of our method works on a variety
of MDPs and MCs with different features, producing results that are not far from
an empirically observed optimal.

\begin{table}[!ht]
\centering
\begin{tabular}{|l|c|c|c|c|c|c|}
\hline
\textbf{Name} & \textbf{\#States} & \textbf{K} & \textbf{exp(Answer Given)} & \textbf{exp(Optimal)} & $\textbf{EntAns}$ & $\textbf{EntOpt}$ \\
\hline
\textbf{Type} & \textbf{MC} &   &   &   &   &   \\
\hline
MDP $M_1$ & 3 & 0 & 2.782 & 2.749 & 1.023 & 1.011 \\
\hline
MDP $M_1$ & 3 & 1 & 1.972 & 1.972 & 0.679 & 0.679 \\
\hline
MDP $M_1$ & 3 & 2 & 1.668 & 1.668 & 0.512 & 0.512 \\
\hline
MDP $M_2$ & 4 & 0 & 3.757 & 2.978 & 1.324 & 1.091 \\
\hline
MDP $M_2$ & 4 & 1 & 2.984 & 2.978 & 1.093 & 1.091 \\
\hline
MDP $M_3$ & 5 & 0  & 3.000 & 2.164 & 1.099 & 0.772 \\
\hline
MDP $M_4$ & 5 & 2 & 2.000 & 1.355 & 0.693 & 0.304 \\
\hline
MDP $M5$ & 3 & 0 & 2.954 & 2.935 & 1.083 & 1.077 \\
\hline
Split & 4 & 0 & 3.780 & 3.000 & 1.330 & 1.099 \\
\hline
\textbf{Type} & \textbf{MC} &  &  &  &  &  \\
\hline
MC1 & 2 & 0 & 2.000 & 1.890 & 0.693 & 0.637 \\
\hline
MC1 & 2 & 1 & 1.890 & 1.890 & 0.637 & 0.637 \\
\hline
MC1 & 2 & 2 & 1.890 & 1.890 & 0.637 & 0.637 \\
\hline
MC2 & 2 & 1 & 1.938 & 1.938 & 0.662 & 0.662 \\
\hline
MC2  & 2 & 2 & 1.938 & 1.928 & 0.662 & 0.656 \\
\hline
MC2 & 2 & 3 & 1.903 & 1.903 & 0.643 & 0.643 \\
\hline
MC3 & 3 & 0 & 3.000 & 3.000 & 1.099 & 1.099 \\
\hline
Insulin & 5 & 0 & 2.720 & 2.372 & 1.001 & 0.864 \\
\hline
Pagerank & 5 & 0 & 4.650 & 4.232 & 1.537 & 1.443 \\
\hline

\end{tabular}

\caption{Comparison of answer reported by our prototype tool vs Optimal Answer
  across different examples. $\mathbf{K}$ denotes the warm-up parameter. Optimal
  answers are derived by simulation (MCs) or grid search over all MDP policies.
  We report entropy values as well as their exponential values. All values
  rounded to three places after decimal.}
\label{tab:full-results-log}
\end{table}

\end{document}